\tikzstyle{Default}=[fill=white, draw=black, shape=circle]
\tikzstyle{Rec}=[fill=white, draw=black, shape=rectangle]
\tikzstyle{Unidirectional}=[->]
\tikzstyle{Double Arrow}=[<->]
\tikzstyle{Line}=[-]
\tikzstyle{Dashed arrow}=[->, dashed]
\tikzstyle{Dashed double arrow}=[<->, dashed]
\tikzstyle{dashed line}=[-, dashed]
\tikzstyle{blue line}=[-, color=blue]
\tikzstyle{Dotted Arrow}=[->, dotted]
\newtheorem{theorem}{Theorem}
\newtheorem{lemma}{Lemma}
\newtheorem{definition}{Definition}
\newtheorem{proposition}{Proposition}
\newtheorem{corollary}{Corollary}
\newtheorem{remark}{Remark}
\newtheorem{assumption}{Assumption}
\newcommand{\eps}{\epsilon}
\newcommand{\B}{\mathcal{B}}
\newcommand{\A}{\mathcal{A}}
\newcommand{\N}{\mathcal{N}}
\newcommand{\R}{\mathbb{R}}
\newcommand{\E}{\mathbb{E}}
\newcommand{\OVT}{\overline{V}^t}
\newcommand{\D}{\mathcal{D}}
\newcommand{\FC}{\mathcal{F}}
\newcommand{\GC}{\mathcal{G}}
\newcommand{\LC}{\mathcal{L}}
\newcommand{\T}{\mathcal{T}}
\newcommand{\VC}{\mathcal{V}_{\rho}}
\newcommand{\Fil}{\mathfrak{F}}
\newcommand{\Var}{\text{Var}}
\newcommand{\VB}{V_{\max}}
\newcommand{\BO}{\mathcal{O}}
\newcommand{\TO}{\widetilde{\mathcal{O}}}
\newcommand{\ZC}{\mathcal{Z}_{\rho}}
\newcommand{\DBE}{\mathrm{dim}_{\mathrm{BEE}}}
\newcommand{\DDE}{\mathrm{dim}_{\mathrm{DE}}}
\newcommand{\DE}{\mathrm{dim}_{\mathrm{E}}}
\newcommand{\Q}{\mathcal{Q}}
\newcommand{\OVTR}{\overline{V}^t_r}
\newcommand{\OVTG}{\overline{V}^t_g}
\newcommand{\FCR}{\mathcal{F}^r}
\newcommand{\GCR}{\mathcal{G}^r}
\newcommand{\FCG}{\mathcal{F}^g}
\newcommand{\GCG}{\mathcal{G}^g}
\newcommand{\VR}{V_{r,1}}
\newcommand{\VG}{V_{g,1}}
\newcommand{\VRH}{V_{r,h}}
\newcommand{\VGH}{V_{g,h}}
\newcommand{\QRH}{Q_{r,h}}
\newcommand{\QGH}{Q_{g,h}}
\newcommand{\QR}{Q_{r}}
\newcommand{\QG}{Q_{g}}
\newcommand{\OP}{\mu^*_{\text{CMDP}}}
\newcommand{\ebe}{\eps_{\text{BEE}}}
\newcommand{\SC}{\mathcal{S}}
\newcommand{\HC}{\mathcal{H}}
\newcommand{\reg}{\text{Regret}}
\newcommand{\vio}{\text{Violation}}
\newcommand{\X}{\mathcal{X}}
\newcommand{\WC}{\mathcal{W}}
\newcommand{\MG}{\mathcal{M}_{\text{MG}}}
\newcommand{\MC}{\mathcal{M}_{\text{CMDP}}}
\newcommand{\MV}{\mathcal{M}_{\text{VMDP}}}
\newcommand{\ba}{\boldsymbol{a}}
\newcommand{\muc}{\mu^*_{\text{CMDP}}}
\newcommand{\dbee}{d_{\mathrm{BEE}}}
\newcommand{\dbeei}{d_{\mathrm{BEE},i}}
\newcommand{\ncov}{\mathcal{N}_{\mathrm{cov}}}
\newcommand{\ncovi}{\mathcal{N}_{\mathrm{cov},i}}
\newcommand{\ncovr}{\mathcal{N}_{\mathrm{cov},r}}
\newcommand{\ncovg}{\mathcal{N}_{\mathrm{cov},g}}
\newcommand{\dbeer}{d_{\mathrm{BEE},r}}
\newcommand{\dbeeg}{d_{\mathrm{BEE},g}}
\newcommand{\I}{\mathcal{I}}
\newcommand{\sla}{\lambda_{\mathrm{sla}}}
\newcommand{\LO}{\mathcal{L}_{\mathrm{CMDP}}}
\newcommand{\tmu}{\widetilde{\mu}}
\newcommand{\dpi}{\mathcal{P}}
\newcommand{\cpi}{\mathrm{conv}(\mathcal{P}_{\Pi})}
\newcommand{\cpc}{\mathrm{conv}(\Pi)}
\newcommand{\ttau}{\widetilde{\tau}}
\newcommand{\br}{\boldsymbol{r}}
\newcommand{\BL}{\mathbb{B}}
\newcommand{\bv}{\boldsymbol{V}}
\newcommand{\bq}{\boldsymbol{Q}}
\newcommand{\CS}{\mathcal{C}}
\newcommand{\dist}{\mathrm{dist}}
\newcommand{\bx}{\boldsymbol{x}}
\newcommand{\LV}{\mathcal{L}_{\mathrm{VMDP}}}
\newcommand{\bth}{\boldsymbol{\theta}}
\newcommand{\PET}{\underline{\boldsymbol{V}}^t}
\newcommand{\projb}{\text{Proj}_{\mathbb{B}(1)}}
\newcommand{\dbeev}{d_{\mathrm{BEE,V}}}
\newcommand{\ncovv}{\mathcal{N}_{\mathrm{cov,V}}}
\newcommand{\PETJ}{\underline{{V}}^{t,j}}
\newcommand{\muv}{\mu^*_{\mathrm{VMDP}}}
\newcommand{\pmu}{P(\bv^{\mu^*_{\mathrm{VMDP}}}_1(s_1))}
\newcommand{\deff}{d_{\mathrm{eff}}}
\newcommand{\bmu}{\overline{\mu}}
\newcommand{\bpi}{\overline{\pi}}
\newcommand{\mainalg}{\texttt{DORIS}\xspace}
\newcommand{\cmdpalg}{\texttt{DORIS-C}\xspace}
\newcommand{\pealg}{\texttt{OptLSPE}\xspace}
\newcommand{\spealg}{\texttt{OptLSPE-C}\xspace}
\newcommand{\vmdpalg}{\texttt{DORIS-V}\xspace}
\newcommand{\vpealg}{\texttt{OptLSPE-V}\xspace}
\newcommand{\JC}{\mathcal{J}}
\DeclareRobustCommand{\rchi}{{\mathpalette\irchi\relax}}
\newcommand{\irchi}[2]{\raisebox{\depth}{$#1\chi$}}
\newcommand{\proj}{\text{Proj}_{[0,\rchi]}}
\definecolor{yxc}{RGB}{255,0,0}
\definecolor{yjc}{RGB}{190,0,255}
\definecolor{whz}{RGB}{1,11,111}
\definecolor{hbh}{RGB}{1,150,20}
\definecolor{zhuoran}{RGB}{0, 102, 204}
\begin{document}

\title{Decentralized Optimistic Hyperpolicy Mirror Descent: Provably No-Regret Learning in Markov Games
}
\author{%
	Wenhao Zhan\thanks{Princeton University. Email: \texttt{wenhao.zhan@princeton.edu}} \\
	\and
	Jason D. Lee\thanks{Princeton University. Email: \texttt{jasonlee@princeton.edu}} 
    \and
    Zhuoran Yang\thanks{Yale University. Email: \texttt{zhuoran.yang@yale.edu}}\\
}

\date{\today}
\maketitle

\begin{abstract}
We study decentralized policy learning in Markov games where we control a single agent to play with nonstationary and possibly adversarial opponents. Our goal is to develop a no-regret online learning algorithm that (i) takes actions based on the local information observed by the agent and (ii) is able to find the best policy in hindsight. For such a problem, the nonstationary state transitions due to the varying opponent pose a significant challenge. In light of a recent hardness result \citep{liu2022learning}, we focus on the setting where the opponent's previous policies are revealed to the agent for decision making. With such an information structure, we propose a new algorithm, \underline{D}ecentralized \underline{O}ptimistic hype\underline{R}policy m\underline{I}rror de\underline{S}cent (DORIS), which achieves $\sqrt{K}$-regret in the context of general function approximation, where $K$ is the number of episodes. Moreover, when all the agents adopt DORIS, we prove that their mixture policy constitutes an approximate coarse correlated equilibrium. In particular, DORIS maintains a \textit{hyperpolicy} which is a distribution over the policy space. The hyperpolicy is updated via mirror descent, where the update direction is obtained by an optimistic variant of least-squares policy evaluation. Furthermore, to illustrate the power of our method, we apply DORIS to constrained and vector-valued MDPs, which can be formulated as zero-sum Markov games with a fictitious opponent. 

\end{abstract}
\section{Introduction} \label{sec:intro}

Multi-agent reinforcement learning (MARL) studies how each agent  learns to maximize its  cumulative rewards 
by interacting with the environment as well as other agents, where the state transitions and rewards are affected by the actions of all the agents. 
Equipped with  powerful function approximators  such as  deep neural networks \citep{lecun2015deep}, MARL has achieved significant empirical success in various domains including the game of Go \citep{silver2016mastering}, StarCraft \citep{vinyals2019grandmaster},  DOTA2 \citep{berner2019dota}, Atari \citep{mnih2013playing}, multi-agent robotics systems \citep{brambilla2013swarm} and autonomous driving\citep{shalev2016safe}. 
Compared with the centralized setting  where a  central controller collects the information of all agents and coordinates their behaviors, decentralized algorithms \citep{gupta2017cooperative,rashid2018qmix} where each agent autonomously chooses its action based on its own local information  are often more desirable in MARL applications.
In specific,
decentralized methods 
  (1) are easier to implement and enjoy better scalability,  (2) are more robust to possible adversaries,  and (3) require less communication overhead \citep{hernandez2018multiagent,hernandez2019survey,canese2021multi,zhang2021multi,gronauer2022multi}.

In this work, we aim to design a provably efficient decentralized reinforcement learning (RL) algorithm in the online setting with function approximation. 
In the sequel, for the ease of presentation, 
we refer to  the controllable agent as 
the \textit{player}
and regard the rest of the agents as a meta-agent, called the  \textit{opponent}, which specifies its policies arbitrarily.  Our goal is to maximize the cumulative rewards of the player in the face of a possibly adversarial opponent, in the online setting where the policies of the player and opponent can be based on adaptively gathered local information.

From a theoretical perspective, arguably the most distinctive challenge of 
the 
decentralized setting 
is \textit{nonstationarity}.
That is, from the perspective of any agent, the states transitions  are affected by the policies of other agents in an unpredictable and potentially adversarial way and are thus nonstationary. 
This is in stark contrast to the centralized setting which can be regarded as a standard RL problem for the central controller which decides the actions for all the players. 
Furthermore, 
in the online setting, as the environment is unknown, to achieve sample efficiency, the player needs to strike a balance between \textit{exploration} and \textit{exploitation} in the  
context of function approximation and in the presence of an adversarial opponent. 
The dual challenges of nonstationarity and efficient exploration  are thus intertwined, making it challenging to develop provably efficient  decentralized MARL algorithms.


Consequently, there seem only limited theoretical understanding of the decentralized MARL setting with a  possibly adversarial opponent. Most of the existing algorithms \citep{brafman2002r,wei2017online,tian2021online,jin2021power,huang2021towards} can only compete against the Nash value of the  Markov game when faced with an arbitrary opponent. 
This is  a much weaker baseline compared with the results in classic matrix games \citep{fudenberg1991game,anderson2008theory} where the player is required to compete against \textit{the best fixed policy in hindsight}. Meanwhile, \cite{liu2022learning} seems  the only work we know that can achieve no-regret learning in MARL against the best hindsight policy, which focuses on the   \textit{policy revealing} setting 
where the player observes the policies played by the opponent in previous episodes. 
Moreover, the algorithm and theory in this work are limited to tabular cases and fail to deal with large or even continuous state and action space. 
To this end, we  would like to answer the following question: 

\begin{center}
\textbf{\textit{Can we design a decentralized MARL algorithm that provably achieves no-regret  against the best fixed policy in hindsight in the context of function approximation?}}
\end{center}

In this work, we provide a positive answer to the above question under the \textit{policy revealing} setting with general function approximation. 
In specific, we propose an actor-critic-type algorithm  \citep{konda1999actor} called \mainalg, which maintains a distribution over the policy space, named \textit{hyperpolicy}, for decision-making.  
To combat  the nonstationarity, \mainalg updates the hyperpolicy via  mirror descent (or equivalently, Hedge \citep{freund1997decision}).
Furthermore, to encourage exploration, the descent directions of  mirror descent   are obtained by solving optimistic variants of policy evaluation subproblems with general function approximation, which  only involve the local information of the player.
Under standard regularity assumptions on the underlying function classes, we prove that \mainalg achieves a sublinear regret in the presence of an adversarial opponent. 
In addition, when the agents all adopt \mainalg independently,  we prove that their average policy constitutes an approximate coarse correlated equilibrium. 
At the core of our analysis is a new complexity measure of function classes   that is tailored to the decentralized MARL setting. 
Furthermore, to demonstrate the power of \mainalg, we adapt it for solving constrained Markov decision process (CMDP) and vector-valued Markov decision process (VMDP), which can both be formulated as a zero-sum Markov game with a fictitious opponent. 
 
\paragraph{Our Contributions.} Our contributions are four-fold. First, we propose a new decentralized policy optimization algorithm, \mainalg,  that provably achieves no-regret in the context of general function approximation. 
As a result, when all agents adopt \mainalg, their average policy converges to a CCE of the Markov game. 
Secondly, we propose a new complexity measure named Bellman Evaluation Eluder dimension, which generalizes Bellman Eluder dimension \citep{jin2021bellman} for single-agent MDP to decentralized learning in Markov games, which might be of independent interest. 
Third, we modify \mainalg for solving CMDP with general function approximation, which is shown to achieve sublinear regret and constraint violation. 
Finally, we extend  \mainalg  to solving the approchability task \citep{miryoosefi2019reinforcement} in vector-valued Markov decision process (VMDP) and attain a near-optimal solution. To our best knowledge, \mainalg seems the first provably efficient decentralized  algorithm for achieving no-regret in MARL with general function approximation.

\
\subsection{Notations} 
In this paper we let $[n]=\{1,\cdots,n\}$ for any integer $n$. We denote the set of probability distributions over any set $\SC$ by $\Delta_{\SC}$ or $\Delta(\SC)$. We also let $\Vert\cdot\Vert$ denote the $\ell_2$-norm by default.

\subsection{Related works}
\paragraph{Decentralized learning with an adversarial opponent.} There have been a few works studying decentralized policy learning in the presence of a possibly adversarial opponent. \cite{brafman2002r} proposes R-max and is able to attain an average game value close to the Nash value in tabular MGs. 
More recently,   \cite{wei2017online,tian2021online} improve the regret bounds in tabular cases and \cite{jin2021power,huang2021towards} extend the results to general function approximation setting. 
However, these    works only compete against the Nash value of the game and are  unable to exploit the opponent. 
A more related paper is \cite{liu2022learning}, which develops a provably efficient algorithm that achieves a sublinear regret against the best fixed policy in hindsight. 
But there results are only limited to the tabular case. 
Our work extends the results \cite{liu2022learning} to the setting with  general function approximation, which requires novel technical analysis. 


\paragraph{Finding equilibria in self-play Markov games.} 
Our  work is closely related to the recent literature on  finding equilibria in  Markov games  via reinforcement learning. 
Most of the existing works  focus on two-player zero-sum games and consider centralized algorithms with unknown model dynamics. For example, \cite{wei2017online, bai2020provable} utilize   optimism to tackle the exploration-expoitation tradeoff and find Nash equilibria in tabular cases, and \cite{xie2020learning,jin2021power,huang2021towards} extend the  results to linear and general function approximation setting. 
Furthermore, 
under the decentralized setting with well-explored data, 
  \cite{daskalakis2020independent,zhang2021gradient,sayin2021decentralized,wei2021last,leonardos2021global,ding2022independent} utilize independent policy gradient algorithms to deal with potential Markov games and two-player zero-sum games.
  Meanwhile, under the online setting, \cite{bai2020near,mao2021on,jin2021v} design algorithms named V-learning,  which are able to find CCE in  multi-agent general-sum games. However, there  results are only limited to the   tabular case. 

\paragraph{Constrained Markov decision process.} \cite{efroni2020exploration,ding2021provably} propose a series of primal-dual algorithms for CMDPs which achieve $\sqrt{K}$ bound on regrets and constraint violations in tabular and linear approximation cases. \cite{liu2021learning} reduces the constraint violation to $\TO(1)$ by adding slackness to the algorithm and achieves zero violation when a strictly safe policy is known; \cite{wei2021provably} further avoids such requirement with the price of worsened regrets. Nevertheless, these improvements are only discussed in the tabular case.

\paragraph{Approchability for vector-valued Markov decision process.} \cite{miryoosefi2019reinforcement} first introduces the approachability task for VMDPs but does not provide an algorithm with polynomial sample complexity. Then \cite{yu2021provably} proposes a couple of primal-dual algorithms to solve this task and achieves a $\TO(\epsilon^{-2})$ sample complexity in the  tabular case. More recently, \cite{miryoosefi2021simple} utilizes reward-free reinforcement learning to tackle the problem and studies both the tabular and linear approximation cases, achieving roughly the same sample complexity as \cite{yu2021provably}.

\section{Preliminaries}
\label{sec:setting}


\subsection{General-Sum Markov Games}
Let us consider an $n$-agent general-sum Markov game (MG) $\MG=(\mathcal{S},\{\A_i\}_{i=1}^{n},\{P_h\}_{h=1}^H,\break\{r_{h,i}\}_{h=1,i=1}^{H,n},H)$, where $\mathcal{S}$ is the state space, $\A_i$ is the action space of $i$-th agent, $P_h: \mathcal{S}\times\prod_{i=1}^n\mathcal{A}_i\to\Delta(\mathcal{S})$ is the transition function at $h$-th step, $r_{h,i}:\mathcal{S}\times\prod_{i=1}^n\mathcal{A}_i\to\R_+$ is the reward function of $i$-th agent at $h$-th step and $H$ is the length of each episode. 

We assume each episode starts at a fixed start state $s_1$ and terminates at $s_{H+1}$. At step $h\in[H]$, each agent $i$ observes the state $s_h$ and takes action $a_{h,i}$ simultaneously. After that, agent $i$ receives its own reward $r_{h,i}(s_h,\ba_h)$ where $\ba_h:=(a_{h,1},\cdots,a_{h,n})$ is the joint action and the environment transits to a new state $s_{h+1}\sim P_h (\cdot|s_h,\ba_h)$.


\paragraph{Policy.}
A policy of the $i$-th agent $\mu_i=\{\mu_{h,i}:\SC\to\Delta_{\A_i}\}_{h\in[H]}$ specifies the action selection probability of agent $i$ in each state at each step. In the following discussion we will drop the $h$ in $\mu_{h,i}$ when it is clear from the context. We use $\pi$ to represent the joint policy of all agents and $\mu_{-i}$ to denote the joint policy of all agents other than $i$. Further, we assume each agent $i$ chooses its policy from a policy class $\Pi_i$. Similarly, let $\Pi_{-i}:=\prod_{j\neq i}\Pi_{j}$ denote the product of all agents' policy classes other than the $i$-th agent.

\paragraph{Value functions and Bellman operators.}
Given any joint policy $\pi$, the $i$-th agent's value function $V^{\pi}_{h,i}:\SC\to\R$ and action-value (or Q) function $Q^{\pi}_{h,i}:\SC\times\prod_{i=1}^n\mathcal{A}_i\to\R$ characterize its expected cumulative rewards given a state or a state-action pair, as defined below:
\begin{align*}
	&V^{\pi}_{h,i}(s):=\E_{\pi}\bigg[\sum_{t=h}^H r_{t,i}(s_t,\ba_{t})\bigg|s_h=s\bigg], Q^{\pi}_{h,i}(s,\ba):=\E_{\pi} \bigg [\sum_{t=h}^H r_{t,i} (s_t,\ba_{t}) \bigg| s_h=s,\ba_h=\ba \bigg ],
\end{align*}
where the expectation is w.r.t. to the distribution of the trajectory induced by executing the joint policy $\pi$ in $\MG$.
Here we suppose the action-value function is bounded:
\begin{align*}
Q^{\pi}_{h,i}(s,\ba)\leq\VB,\forall s,\ba,h,i,\pi.
\end{align*}
Notice that when the reward function is bounded in $[0,1]$, $\VB=H$ naturally.

\subsection{Decentralized Policy Learning}
In this paper we consider the decentralized learning setting \citep{jin2021power,huang2021towards,liu2022learning} where only one agent is under our control, which we call \textit{player}, and the other agents can be adversarial. Without loss of generality, assume that we can only control agent 1 and view the other agents as a meta \textit{opponent}. To simplify writing, we use $a_h,\A,r_h,\mu,\Pi, V^{\pi}_h, Q^{\pi}_h$ to denote $a_{h,1},\A_1,r_{h,1},\mu_1,\Pi_1,V^{\pi}_{h,1},Q^{\pi}_{h,1}$ respectively. We also use $b_h,\B,\nu,\Pi'$ to represent the joint action, the joint action space, the joint policy and the joint policy class of all the agents other than agent 1.

By \textit{decentralized learning} we mean that during the episode, the player can only observe its own rewards, actions and some information of the opponent specified by the protocol, i.e., $\{s_h^t,a_h^t,
\JC_h^{t}, r_{h}^t \}_{h=1}^H$ where $\{\JC_h\}_{h=1}^H$ is the information revealed by the opponent in each episode, which we will specify later. Then at the beginning of $t$-th episode, the player chooses a policy $\mu^t$ from its policy class $\Pi$ based only on its local information collected from previous episodes, without any coordination from a centralized controller. Meanwhile, the opponent selects $\nu^t$ from $\Pi'$ secretly and probably adversely.

The learning objective is to minimize the regret of the player by comparing its performance against the best fixed policy in hindsight as standard in online learning literature \citep{anderson2008theory,hazan2016introduction}:
\begin{definition}[Regret]
\label{def:regret}	
Suppose $(\mu^t,\nu^t)$ are the policies played by the player and the opponent in $t$-th episode. Then the regret for $K$ episodes is defined as
\begin{align}
\label{eq:reg def}
\reg(K)=\max_{\mu\in\Pi}\sum_{t=1}^KV^{\mu\times\nu^t}_1(s_1)-\sum_{t=1}^KV^{\mu^t\times\nu^t}_1(s_1),
\end{align}
where $\mu\times\nu$ denotes the joint policy where the player and the opponent play $\mu$ and $\nu$ independently.
\end{definition} 
Achieving low regrets defined in (\ref{eq:reg def}) indicates that, in the presence of an opponent playing adaptive  $\{ \nu^t \}_{t=1}^T$,  the algorithm approximately is as good as  the best fixed policy in $\Pi$ in the hindsight.



\paragraph{Relation between Definition~\ref{def:regret} and equilibria.} An inspiration for our definition of regrets comes from the tight connection between low regrets and equilibria in the matrix game \citep{fudenberg1991game,blum2007learning,daskalakis2011near}. By viewing each policy in the policy class as a pure strategy in the matrix game, we can generalize the notion of equilibria in matrix games to Markov games naturally. In particular, a correlated mixed strategy profile $\bpi$ can be defined as a mixture of the joint policy of all agents, i.e., $\bpi\in\Delta(\prod_{i\in[n]}\Pi_i)$.
Suppose the marginal distribution of $\bpi$ over the policy of agent $i$ is $\bmu_i$, then we can see that $\bmu_i$ is a mixture of the policies in $\Pi_i$. For a correlated profile, the agents might not play their mixed policies $\bmu_i$ independently, which means that $\bpi$ might not be the product of $\bmu_i$. A coarse correlated equilibrium (CCE) is simply a correlated profile that all the agents have no incentive to deviate from by playing a different independent policy:
\begin{definition}[Coarse correlated equilibrium (CCE) for $n$-player MG]
	A correlated strategy profile $\bpi$ is an $\eps$-approximate coarse correlated equilibrium if we have for all $i\in[n]$
	\begin{align}
	\label{eq:def cce}
		V_{1,i}^{\bpi}(s_1)\geq\max_{\mu'\in\Pi_i}V_{1,i}^{\mu'\times\bmu_{-i}}(s_1)-\eps,
	\end{align}
	where $\bmu_{-i}$ is the marginal distribution of $\bpi$ over the joint policy of all agents other than $i$.
\end{definition}
\begin{remark}
 Our definition of correlated strategy profile and CCEs is slightly different from \cite{mao2021on}. This is because we are considering with policy classes while \cite{mao2021on} does not. In fact, our definition is more strict in the sense that a correlated profile satisfying our definition must also satisfy theirs. 
\end{remark}
Specially, if a CCE $\bpi$ satisfies $\bpi=\prod_{i\in[n]}\bmu_i$, then we call $\bpi$ a Nash Equilibrium (NE). We will show in Section~\ref{sec:selfplay} that if a decentralized algorithm can achieve low regrets under Definition~\ref{def:regret}, we will be able to find an approximate  CCE by running the algorithm independently for each agent and return the resulting mixture policy. 
\subsection{Function Approximation}
To deal with the potentially large or even infinite state and action space, we consider learning with general value function approximation in this paper \citep{jiang2017contextual,jin2021bellman}. We assume the player is given a function class $\FC=\FC_1\times\cdots\times\FC_H$ ($\FC_h\subseteq(\SC\times\A\times\B\to[0,\VB])$) to approximate the action-value functions. Since there is no reward in state $s_{H+1}$, we let $f_{H+1}(s,a,b)=0$ for all $s\in\SC,a\in\A,b\in\B,f\in\FC$. 

To measure the size of $\FC$, we use $|\FC|$ to denote its cardinality when $\FC$ is finite. For infinite function classes, we use $\eps$-covering number to measure its size, which is defined as follows. 
\begin{definition}[$\eps$-covering number]
The $\eps$-covering number of $\FC$, denoted by $\N_{\FC}(\eps)$, is the minimum integer $n$ such that there exists a subset $\FC'\subset\FC$ with $|\FC'|=n$ and for any $f\in\FC$ there exists $f'\in\FC'$ such that $\max_{h\in[H]}\Vert f_h-f'_h\Vert_{\infty}\leq\eps$. 
\end{definition}

In addition to the size, we also need to impose some complexity assumption on the structure of the function class to achieve small generalization error. Here we introduce one of such structure complexity measures called Distributional Eluder (DE) dimension \citep{jin2021bellman}, which we will utilize in our subsequent analysis. First let us define independence between distributions as follows.

\begin{definition}[$\eps$-independence between distributions]
Let $\WC$ be a function class defined on $\X$, and $\rho,\rho_1,\cdots,\rho_n$ be probability measure over $\X$. We say $\rho$ is $\eps$-independent of $\{\rho_1,\cdots,\rho_n\}$ with respect to $\WC$ if there exists $w\in\WC$ such that $\sqrt{\sum_{i=1}^n(\E_{\rho_i}[w])^2}\leq\eps$ but $|\E_{\rho}[w]|>\eps$.
\end{definition}

From the definition we can see that a probability distribution $\rho$ is independent from $\{\rho_1,\cdots,\rho_n\}$ if there exists a discriminator function in $\WC$ such that the function values are small at $\{\rho_1,\cdots,\rho_n\}$ while large at $\rho$. Then DE dimension is simply the length of the longest sequence of independent probability distributions that the function class can discriminate. 
We lay out the definition of the DE dimension as follows. 

\begin{definition}[Distributional Eluder (DE) dimension]
Let $\WC$ be a function class defined on $\X$, and $\Q$ be a family of probability measures over $\X$. The distributional Eluder dimension $\DDE(\WC,\Q,\eps)$ is the length of the longest sequence $\{\rho_1,\cdots,\rho_n\}\subset\Q$ such that there exists $\eps'\geq\eps$ where $\rho_i$ is $\eps'$-independent of $\{\rho_1,\cdots,\rho_{i-1}\}$ for all $i\in[n]$.
\end{definition}

Eluder dimension, another commonly-used complexity measure proposed by \cite{russo2013eluder}, is a special case of DE dimension when the distributions concentrate on a single point. That is, if we choose $\Q=\{\delta_x(\cdot)|x\in\X\}$ where $\delta_x(\cdot)$ is the dirac measure centered at $x$, then the Eluder dimension can be formulated as
\begin{align*}
\DE(\WC,\eps)=\DDE(\WC-\WC,\Q,\eps),
\end{align*}
where $\WC-\WC=\{w_1-w_2: w_1,w_2\in\WC\}$. Many function classes in MDPs are known to have low Eluder dimension, including linear MDPs \citep{jin2020provably}, generalized linear complete models \citep{wang2019optimism} and kernel MDPs \citep{jin2021bellman}. 

We also assume the existence of an auxiliary function class $\GC=\GC_1\times\cdots\times\GC_H$ ($\GC_h\subseteq(\SC\times\A\times\B\to[0,\VB])$) to capture the results of applying Bellman operators on $\FC$ as in \cite{jin2021bellman,jin2021power}. When $\FC$ satisfies completeness (Assumption~\ref{ass:complete self}), we can simply choose $\GC=\FC$.

\section{Algorithm: \mainalg}
\label{sec:alg}


\paragraph{Policy revealing setting.} Recall that in decentralized policy learning setting, the player is also able to observe some information of the opponent, denoted by $\JC_h$, aside from its own actions and rewards. There have been works studying the case where $\JC_h=\emptyset$ \citep{tian2021online} and $\JC_h=b_h$ \citep{jin2021power,huang2021towards} in two-player zero-sum games. However, their benchmark is the Nash value of the Markov game, i.e., $V^{\mu^*\times\nu^*}_1(s_1)$ where $\mu^*\times\nu^*$ is an NE, which is strictly weaker than our benchmark $\max_{\mu\in\Pi}\sum_{t=1}^KV^{\mu\times\nu^t}_1(s_1)$ in two-player zero-sum games. In fact, \cite{liu2022learning} have showed achieving a  low regret under Definition~\ref{def:regret} is exponentially hard in tabular cases when the opponent's policy is not revealed. Therefore in this paper we let $\JC_h=\{b_h,\nu_h\}$ just like \cite{liu2022learning} and call this information structure \textit{policy revealing} setting.

That said, even in policy revealing setting, the challenge of nonstationarity still exists because the opponent's policy can be adversarial and only gets revealed after the player plays a policy. Thus from the perspective of the player, the transition kernel $P^{\nu}_h(\cdot|s,a):=\E_{b\sim\nu_h(s)}P_h(\cdot|s,a,b)$ still changes in an unpredictable way across episodes. In addition, the problem of how to balance exploration and exploitation with general function approximation also remains due to the unknown transition probability. In this section we propose \mainalg, an algorithm that is capable of handling all these challenges and achieving a $\sqrt{K}$ regret upper bound in the  policy revealing setting. 



\paragraph{\mainalg.} Intuitively, our algorithm is an actor-critic / mirror descent (Hedge) algorithm where each policy $\mu$ in $\Pi$ is regarded as an expert and the performance of each expert at episode $t$ is given by the value function of $V^{\nu\times\nu^t}_1(s_1)$. We call it \underline{D}ecentralized \underline{O}ptimistic hype\underline{R}policy m\underline{I}rror de\underline{S}cent (\mainalg). \mainalg possesses three important features, whose details are shown in Algorithm~\ref{alg:MDPS}:
\begin{itemize}
	\item \textbf{Hyperpolicy and Hedge:} Motivated from the adversarial bandit literature \citep{anderson2008theory,hazan2016introduction,lattimore2020bandit}, \mainalg maintains a distribution $p$ over the policies in $\Pi$, which we call \textit{hyperpolicy}, to combat the nonstaionarity. The hyperpolicy is updated using Hedge, with the reward of each policy $\mu$ being an estimation of the value function $V^{\mu\times\nu^t}_1(s_1)$. This is equivalent to running mirror ascent algorithm over the policy space $\Pi$ with the gradient being $V^{\mu\times\nu^t}_1(s_1)$.
	
	\item \textbf{Optimism:} However, we do not have access to the exact value function since the transition probability is unknown, which forces us to deal with the exploration-exploitation tradeoff. Here we utilize the \textit{Optimism in the Face of Uncertainty} principle \citep{azar2017minimax,jin2020provably,jin2021bellman,jin2021power,huang2021towards} and choose our estimation $\OVT(\mu)$ to be optimistic with respect to the true value $V^{\mu\times\nu^t}_1(s_1)$. In this way \mainalg will prefer policies with more uncertainty and thus encourage exploration in the Markov game. 
		
	\item \textbf{Optimistic policy evaluation with general function approximation:} Finally we need to design an efficient method to obtain such optimistic estimation $\OVT(\mu)$ with general function approximation. Here we propose \pealg to accomplish this task. In short, \pealg constructs a confidence set for the target action-value function $Q^{\mu\times\nu}$ based on the player's local information and chooses an optimistic estimation from the confidence set, as shown in Algorithm~\ref{alg:GOLF}. The construction of the confidence set utilizes the fact that $Q^{\mu\times\nu}_{h}$ satisfies the Bellman equation \citep{puterman1994markov}:
	\begin{align*}
	Q^{\mu\times\nu}_{h}(s,a,b)=(\T^{\mu,\nu}_h Q^{\mu\times\nu}_{h+1})(s,a,b):=r_h(s,a,b)+\E_{s'\sim P_h(\cdot|s,a,b)}[Q^{\mu\times\nu}_{h+1}(s',\mu,\nu)],
	\end{align*}
	where $Q^{\mu\times\nu}_{h+1}(s',\mu,\nu)=\E_{a'\sim\mu(\cdot|s'),b'\sim\nu(\cdot|s')}[Q^{\mu\times\nu}_{h+1}(s',a',b')]$. We call $\T^{\mu,\nu}_h$ the Bellman operator induced by $\mu\times\nu$ at $h$-th step. Then the construction rule of $\B_{\D}(\mu,\nu)$ is based on least-squared policy evaluation with slackness $\beta$ as below:
	\begin{align}
	\label{eq:eva}
		\B_{\D}(\mu,\nu)\gets \Big \{f\in\FC:\LC_{\D}(f_h,f_{h+1},\mu,\nu)\leq\inf_{g\in\GC}\LC_{\D}(g_h,f_{h+1},\mu,\nu)+\beta,\forall h\in[H] \Big \},
	\end{align}
	where $\LC_{\D}$ is the empirical Bellman residuals on $\D$:
	\begin{align*} \LC_{\D}(\xi_h,\zeta_{h+1},\mu,\nu)=\sum_{(s_h,a_h,b_h,r_h,s_{h+1})\in\D}[\xi_h(s_h,a_h,b_h)-r_h-\zeta_{h+1}(s_{h+1},\mu,\nu)]^2.
	\end{align*}
	
\end{itemize}


\begin{algorithm}[h]
	\caption{\textbf{\mainalg}}
	\label{alg:MDPS}
	\begin{algorithmic}
	\State \textbf{Input}: learning rate $\eta$, confidence parameter $\beta$.
	\State Initialize $p^1\in\Delta_{\Pi}$ to be uniform over $\Pi$.
	\For{$t=1,\cdots,K$}
	\State \textbf{Collect samples:}
	\State The player samples $\mu^t$ from $p^t$.
	\State Run $\pi^t=\mu^t\times\nu^t$ and collect $\D_{t}=\{s^t_1,a^t_1,b^t_1,r^t_1,\cdots,s^t_{H+1}\}$.
	\State \textbf{Update policy distribution:}
	\State The opponent reveals its policy $\nu^t$ to the player.
	\State $\OVT(\mu)\gets\textbf{\pealg}(\mu,\nu^t,\D_{1:t-1},\FC,\GC,\beta),\quad\forall\mu\in\Pi$.
	\State $p^{t+1}(\mu)\propto p^{t}(\mu)\cdot\exp(\eta\cdot\OVT(\mu)),\quad\forall\mu\in\Pi$.
	\EndFor
	\end{algorithmic}
\end{algorithm}


%

\begin{algorithm}[h]
	\caption{$\textbf{\pealg}(\mu,\nu,\D,\FC,\GC,\beta)$}
	\label{alg:GOLF}
	\begin{algorithmic}
		\State \textbf{Construct} $\B_{\D}(\mu,\nu)$ based on $\D$ via (\ref{eq:eva}).
        \State \textbf{Select} $\bar{V}\gets\max_{f\in\B_{\D}(\mu,\nu)} f(s_1,\mu,\nu)$.
		\State \Return $\bar{V}$.
	\end{algorithmic}
\end{algorithm}


\paragraph{Decentralized Algorithm.} Here we want to highlight that \mainalg is a decentralized algorithm because the player can run \mainalg based only on its local information, i.e., $\{s_h,a_h,\JC_h,r_h\}$, and we do not make any assumptions on the behavior of the opponent.

\subsection{\mainalg in Self-Play Setting}

Apart from decentralized learning setting with a possibly adversarial opponent, we are also interested in the self-play setting where we can control all the agents and need to find an equilibrium for the $n$-agent general-sum Markov game. Inspired by the existing relationships between no-regret learning and CCE in matrix games \citep{fudenberg1991game,blum2007learning,daskalakis2011near}, a natural idea is to simply let all agents run \mainalg independently. To achieve this, we assume each agent $i$ is given a value function class $\FC_i=\FC_{1,i}\times\cdots\times\FC_{H,i}$ and an auxiliary function class $\GC_i=\GC_{1,i}\times\cdots\times\GC_{H,i}$ as in \mainalg, and run \mainalg by viewing the other agents as its opponent. Suppose the policies played by agent $i$ during $K$ episodes are $\{\mu^t_{i}\}_{t=1}^{K}$, then we output the final joint policy as a uniform mixture of them:
\begin{align*}
	\widehat{\pi}\sim\mathrm{Unif}\Big(\Big \{ {\textstyle \prod_{i\in[n]}}~ \mu^1_{i},\cdots,{\textstyle \prod_{i\in[n]}}~ \mu^K_{i} \Big\} \Big ).
\end{align*}
See Algorithm~\ref{alg:MDPS self} for more details.
\begin{algorithm}[h]
	\caption{\textbf{\mainalg} in self-play setting}
	\label{alg:MDPS self}
	\begin{algorithmic}
		\State \textbf{Input}: learning rate $\{\eta_i\}_{i=1}^n$, confidence parameter $\{\beta_i\}_{i=1}^n$.
		\State Initialize $p^1_i\in\Delta_{|\Pi_i|}$ to be uniform over $\Pi_i$ for all $i\in[n]$.
		\For{$t=1,\cdots,K$}
		\State \textbf{Collect samples:}
		\State Agent $i$ samples $\mu^t_i$ from $p^t_i$.
		\State Run $\mu^t=\prod_{i=1}^n\mu^t_i$ and collect $\D_{t,i}=\{s^t_1,\ba^t_1,r^t_{1,i},\cdots,s^t_{H+1}\}$ for each agent $i$.
		\State \textbf{Update policy distribution:}
		\State All agents reveal their policies $\mu^t_i$.
		\State $\OVT_i(\mu_i)\gets\textbf{\pealg}(\mu_i,\mu^t_{-i},\D_{1:t-1,i},\FC_i,\GC_i,\beta_i),\quad\forall\mu_i\in\Pi_i,i\in[n]$.
		\State $p^{t+1}_i(\mu_i)\propto p^{t}_i(\mu_i)\cdot\exp(\eta_i\cdot\OVT_i(\mu_i)),\quad\forall\mu_i\in\Pi_i,i\in[n]$.
		\EndFor
		\State \textbf{Output}: $\widehat{\pi}\sim\text{Unif}(\{\prod_{i\in[n]}\mu^1_{i},\cdots,\prod_{i\in[n]}\mu^K_{i}\})$.
	\end{algorithmic}
\end{algorithm}

\begin{remark}
Algorithm~\ref{alg:MDPS self} is also a decentralized algorithm since every agent runs their local algorithm independently without coordination. The only step that requires centralized control is the output process where all the agents need to share the same iteration index, which is also required in the existing decentralized algorithms \citep{mao2021on,jin2021v}.
\end{remark}
\section{Theoretical Guarantees} 
\label{sec:theory}
In this section we analyze the theoretical performance of \mainalg in decentralized policy learning and self-play setting. We first introduce a new complexity measure for function classes and policy classes, called Bellman Evaluation Eluder (BEE) dimension, and then illustrate the regret and sample complexity bounds based on this new measure.

\subsection{Bellman Evaluation Eluder Dimension}
Motivated from Bellman Eluder (BE) dimension in classic MDPs and its variants in MGs \citep{jin2021bellman,jin2021power,huang2021towards}, we propose a new measure specifically tailored to the decentralized policy learning setting, called Bellman Evaluation Eluder (BEE) dimension. First, for any function class $\FC$, we define $(\I-\T^{\Pi,\Pi'}_h)\FC$ to be the Bellman residuals induced by the policies in $\Pi$ and $\Pi'$:
 \begin{align*}
 \label{eq:def bee}
 (\I-\T^{\Pi,\Pi'}_h)\FC:=\{f_h-\T^{\mu,\nu}_hf_{h+1}:f\in\FC,\mu\in\Pi,\nu\in\Pi'\}.
 \end{align*}
 Then Bellman Evaluation Eluder (BEE) dimension is the DE dimension of the Bellman residuals induced by the policy class $\Pi$ and $\Pi'$ on function class $\FC$:
 \begin{definition}
 	\label{def:BEE}
 The $\eps$-Bellman Evaluation Eluder dimension of function class $\FC$ on distribution family $\Q$ with respect to the policy class $\Pi\times\Pi'$ is defined as follows:
 \begin{equation*}
 \DBE(\FC,\eps,\Pi,\Pi',\Q):=\max_{h\in[H]}\DDE((\I-\T^{\Pi,\Pi'}_h)\FC,\Q_h,\eps).
 \end{equation*}
 \end{definition}

BEE dimension is able to capture the generalization error of evaluating value function $V^{\mu\times\nu}$ where $\mu\in\Pi,\nu\in\Pi'$, which is one of the most essential tasks in decentralized policy space optimization as shown in \mainalg. Similar to \cite{jin2021bellman,jin2021power}, we mainly consider two distribution families for $\Q$: \begin{itemize}
\item $\Q^1=\{\Q^1_h\}_{h\in[H]}$: the collection of all probability measures over $\mathcal{S}\times\mathcal{A}\times\B$ at each step when executing $(\mu,\nu)\in\Pi\times\Pi'$.
\item $\Q^2=\{\Q^2_h\}_{h\in[H]}$: the collection of all probability measures that put measure 1 on a single state-action pair $(s,a,b)$ at each step.
\end{itemize}

We also use $\DBE(\FC,\eps,\Pi,\Pi')$ to denote $\min\{\DBE(\FC,\eps,\Pi,\Pi',\Q^1),\DBE(\FC,\eps,\break\Pi,\Pi',\Q^2)\}$ for simplicity in the following discussion.

\paragraph{Relation with Eluder dimension.}
To illustrate the generality of BEE dimension, we show that all function classes with low Eluder dimension also have low BEE dimension, as long as completeness (Assumption~\ref{ass:complete self}) is satisfied. More specifically, we have the following proposition and its  proof is deferred to Appendix~\ref{proof prop BEE}:
\begin{proposition}
\label{prop:BEE}
Assume $\FC$ satisfies completeness, i.e., $\T^{\mu,\nu}_hf_{h+1}\in\FC_h,\forall f\in\FC,\mu\in\Pi,\nu\in\Pi',h\in[H]$. Then for all $\eps>0$, we have
\begin{equation}
\label{eq:prop 1}
\DBE(\FC,\eps,\Pi,\Pi')\leq\max_{h\in[H]}\DE(\FC_h,\eps).
\end{equation}
\end{proposition}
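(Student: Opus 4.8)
The plan is to exploit two structural facts. First, $\DBE(\FC,\eps,\Pi,\Pi')$ is defined as a minimum over two choices of distribution family, so it suffices to control the bound corresponding to $\Q^2$. Second, the distributions in $\Q^2$ are exactly the Dirac (single-point) measures appearing in the definition of the Eluder dimension. Concretely, since
\[
\DBE(\FC,\eps,\Pi,\Pi')=\min\{\DBE(\FC,\eps,\Pi,\Pi',\Q^1),\DBE(\FC,\eps,\Pi,\Pi',\Q^2)\}\leq\DBE(\FC,\eps,\Pi,\Pi',\Q^2),
\]
I only need to bound $\max_{h\in[H]}\DDE((\I-\T^{\Pi,\Pi'}_h)\FC,\Q^2_h,\eps)$ by $\max_{h\in[H]}\DE(\FC_h,\eps)$.

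The first key step is a set inclusion driven by completeness. For any $f\in\FC$, $\mu\in\Pi$, $\nu\in\Pi'$ and $h\in[H]$, we have $f_h\in\FC_h$ and, by the completeness assumption, $\T^{\mu,\nu}_hf_{h+1}\in\FC_h$. Hence the Bellman residual $f_h-\T^{\mu,\nu}_hf_{h+1}$ lies in $\FC_h-\FC_h$, and taking the union over all such $f,\mu,\nu$ gives the inclusion
\[
(\I-\T^{\Pi,\Pi'}_h)\FC\subseteq\FC_h-\FC_h.
\]

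The second key step is monotonicity of the distributional Eluder dimension in its function-class argument. Inspecting the definition of $\eps$-independence, enlarging the discriminator class can only make it easier to certify independence: if $\WC_1\subseteq\WC_2$, any discriminator $w\in\WC_1$ witnessing that $\rho$ is $\eps'$-independent of its predecessors also belongs to $\WC_2$. Thus every admissible independent sequence for $\WC_1$ is admissible for $\WC_2$, whence $\DDE(\WC_1,\Q,\eps)\leq\DDE(\WC_2,\Q,\eps)$ for all $\Q$ and $\eps$. Applying this with $\WC_1=(\I-\T^{\Pi,\Pi'}_h)\FC$ and $\WC_2=\FC_h-\FC_h$, and recalling that $\Q^2_h$ is precisely the single-point family used in $\DE(\FC_h,\eps)=\DDE(\FC_h-\FC_h,\Q^2_h,\eps)$, yields
\[
\DDE((\I-\T^{\Pi,\Pi'}_h)\FC,\Q^2_h,\eps)\leq\DDE(\FC_h-\FC_h,\Q^2_h,\eps)=\DE(\FC_h,\eps).
\]
Taking the maximum over $h\in[H]$ and chaining with the first display proves (\ref{eq:prop 1}).

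Because each step is a short, definitional observation, I do not anticipate a substantial obstacle. The only point requiring care is verifying that the Dirac family $\Q^2$ in the BEE definition coincides with the single-point distribution family used to define the Eluder dimension via $\DE(\WC,\eps)=\DDE(\WC-\WC,\Q,\eps)$; once this identification is made, the identity $\DE(\FC_h,\eps)=\DDE(\FC_h-\FC_h,\Q^2_h,\eps)$ holds verbatim and the monotonicity-plus-inclusion argument closes the gap. Note also that completeness is used \emph{only} to guarantee the inclusion into $\FC_h-\FC_h$; without it the Bellman residuals need not be realizable as differences of functions in $\FC_h$, which is exactly why the assumption is required.
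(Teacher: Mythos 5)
Your proposal is correct and follows essentially the same route as the paper's own proof: bound $\DBE$ by its $\Q^2$ term, use completeness to get the inclusion $(\I-\T^{\Pi,\Pi'}_h)\FC\subseteq\FC_h-\FC_h$, and conclude via monotonicity of $\DDE$ in the function class together with the identity $\DE(\FC_h,\eps)=\DDE(\FC_h-\FC_h,\Q^2_h,\eps)$. The only difference is that you spell out the monotonicity step explicitly, which the paper leaves implicit.
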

Inequality (\ref{eq:prop 1}) shows that BEE dimension is always upper bounded by Eluder dimension when completeness is satisfied. With Proposition~\ref{prop:BEE}, Appendix~\ref{sec:example} validates that kernel Markov games (including tabular Markov games and linear Markov games) and generalized linear complete models all have small Bellman Evaluation Eluder Dimension. Furthermore, in this case the upper bound of BEE dimension does not depend on $\Pi$ and $\Pi'$, which is a desirable property when $\Pi$ and $\Pi'$ is large.

\subsection{Decentralized Policy Learning Regret}
\label{sec:main result}
Next we present the regret analysis for \mainalg in decentralized policy learning setting. Notice that when $\Pi$ is infinite, the lower bound in \cite{liu2022learning} indicates that the regret will scale with $|\Pi'|$ in tabular cases, suggesting the hardness of efficient learning for infinite policy class $\Pi$. Therefore we focus on finite $\Pi$ here:
\begin{assumption}[Finite player's policy class]
\label{ass:player}
We assume $\Pi$ is finite.
\end{assumption}

We consider two cases, the oblivious opponent (i.e., the opponent determines $\{\nu^t\}_{t=1}^K$ secretly before the game starts) and the adaptive opponent (i.e., the opponent determines its policy adaptively as the game goes on) separately. The difference between these two cases lies in the policy evaluation step of \mainalg. The policy $\nu^t$ of an oblivious opponent does not depend on the collected dataset $\D_{1:t-1}$ and thus $V^{\mu,\nu^t}$ is easier to evaluate. However, for an adaptive opponent, $\nu^t$ will be chosen adaptively based on $\D_{1:t-1}$ and we need to introduce an additional union bound over $\Pi'$ when analyzing the evaluation error of $V^{\mu,\nu^t}$. 

\paragraph{Oblivious opponent.} To attain accurate value function estimation and thus low regrets, we first need to introduce two standard assumptions, realizability and generalized completeness, on $\FC$ and $\GC$ \citep{jin2021bellman,jin2021power}. Here realizability refers to that all the ground-truth action value functions belong to $\FC$ and generalized completeness means that $\GC$ contains all the results of applying Bellman operator to the functions in $\FC$.
\begin{assumption}[Realizability and generalized completeness]
	\label{ass:realize}
	Assume that for any $h\in[H],\mu\in\Pi,\nu\in\{\nu^1,\cdots,\nu^K\},f_{h+1}\in\FC_{h+1}$, we have $Q^{\mu\times\nu}_{h}\in\FC_h,\T_h^{\mu,\nu}f_{h+1}\in\GC_{h}$.
\end{assumption}	

\begin{remark}
Some existing works \citep{xie2021bellman,huang2021towards} assume the completeness assumption, which can also be generalized to our setting:
\begin{assumption}
	\label{ass:complete self}
	Assume that for any $ h\in[H],\mu\in\Pi,\nu\in\Pi',f_{h+1}\in\FC_{h+1}$, we have $\T_h^{\mu,\nu}f_{h+1}\in\FC_{h}$.
\end{assumption}
We want to clarify that Assumption~\ref{ass:complete self} is stronger than generalized completeness in Assumption~\ref{ass:realize} since if Assumption~\ref{ass:complete self} holds, we can simply let $\GC=\FC$ to satisfy generalized completeness.
\end{remark}

Appendix~\ref{sec:realize} shows that realizability and generalized completeness are satisfied in many examples including tabular MGs, linear MGs and kernel MGs with proper function classes. With the above assumptions, we have Theorem~\ref{thm:oblivious} to characterize the regret of \mainalg when the opponent is oblivious, whose proof sketch is deferred to Section~\ref{sec:proof sketch}. To simplify writing, we use the following notations in Theorem~\ref{thm:oblivious}:
\begin{align*}
	\dbee:=\DBE\big(\mathcal{F},\sqrt{1/ K },\Pi,\Pi'\big),\quad\ncov:=\N_{\FC\cup\GC}(\VB/K)KH.
\end{align*}
\begin{theorem}[Regret of Oblivious Adversary]
\label{thm:oblivious}
Under Assumption~\ref{ass:player},\ref{ass:realize}, there exists an absolute constant $c$ such that for any $\delta\in(0,1]$, $K\in\mathbb{N}$, if we choose $\beta=c\VB^2\log(\ncov|\Pi|/\delta)$ and $\eta=\sqrt{\log|\Pi|/(K\VB^2)}$ in \mainalg, then with probability at least $1-\delta$, we have:
\begin{align}
\label{eq:thm1}
\text{Regret}(K)\leq\BO\big(H\VB\sqrt{K\dbee\log\left(\ncov|\Pi|/\delta\right)}\big).
\end{align}
\end{theorem}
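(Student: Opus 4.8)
The plan is to split the regret into an optimism term, a Hedge regret term, and a cumulative on-policy Bellman-error term, and then to control the last one through the BEE dimension. Let $\mu^\star:=\argmax_{\mu\in\Pi}\sum_{t=1}^K V^{\mu\times\nu^t}_1(s_1)$ be the best comparator in hindsight, and write $\langle p^t,\OVT\rangle:=\sum_{\mu\in\Pi}p^t(\mu)\OVT(\mu)$ for the hyperpolicy-averaged optimistic value. I would decompose
\begin{align*}
\reg(K)&=\underbrace{\sum_{t=1}^K\big(V^{\mu^\star\times\nu^t}_1(s_1)-\OVT(\mu^\star)\big)}_{\mathrm{(A)}}+\underbrace{\sum_{t=1}^K\big(\OVT(\mu^\star)-\langle p^t,\OVT\rangle\big)}_{\mathrm{(B)}}\\
&\quad+\underbrace{\sum_{t=1}^K\big(\langle p^t,\OVT\rangle-V^{\mu^t\times\nu^t}_1(s_1)\big)}_{\mathrm{(C)}}.
\end{align*}

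First I would establish the key \emph{optimism} guarantee: with probability at least $1-\delta$, for every $t\in[K]$ and $\mu\in\Pi$ the ground-truth action-value $Q^{\mu\times\nu^t}$ lies in the confidence set $\B_{\D_{1:t-1}}(\mu,\nu^t)$ of \eqref{eq:eva}, and moreover every $f\in\B_{\D_{1:t-1}}(\mu,\nu^t)$ has small cumulative in-sample Bellman error $\sum_{s<t}\E_{(s_h,a_h,b_h)\sim\mu^s\times\nu^s}[(f_h-\T^{\mu,\nu^t}_h f_{h+1})^2]\lesssim\beta$. This is a Freedman/Bernstein concentration argument for least-squares regression that uses realizability and generalized completeness (Assumption~\ref{ass:realize}) together with a union bound over an $(\VB/K)$-cover of $\FC\cup\GC$, over $\Pi$, over the $H$ steps, and (since the opponent is oblivious) over the pre-committed sequence $\{\nu^t\}$; this is exactly what forces the choice $\beta=c\VB^2\log(\ncov|\Pi|/\delta)$. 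Since $\pealg$ returns $\OVT(\mu)=\max_{f\in\B}f(s_1,\mu,\nu^t)\ge Q^{\mu\times\nu^t}_1(s_1,\mu,\nu^t)=V^{\mu\times\nu^t}_1(s_1)$, term (A) is nonpositive on this event. Term (B) is the standard Hedge/mirror-descent regret: because each $\OVT(\mu)\in[0,\VB]$, the multiplicative-weights update with $\eta=\sqrt{\log|\Pi|/(K\VB^2)}$ gives $\mathrm{(B)}\le\BO(\VB\sqrt{K\log|\Pi|})$.

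The heart of the argument is term (C). I would first peel off the sampling noise $\mu^t\sim p^t$: the sequence $\langle p^t,V^{\cdot\times\nu^t}_1(s_1)\rangle-V^{\mu^t\times\nu^t}_1(s_1)$ is a bounded martingale-difference sequence, so Azuma contributes $\BO(\VB\sqrt{K\log(1/\delta)})$, leaving the cumulative overestimation $\sum_{t=1}^K\E_{\mu\sim p^t}[\OVT(\mu)-V^{\mu\times\nu^t}_1(s_1)]$. Writing $f^t_\mu\in\B$ for the optimistic maximizer and using the telescoping value-difference identity
\begin{align*}
\OVT(\mu)-V^{\mu\times\nu^t}_1(s_1)=\sum_{h=1}^H\E_{(s_h,a_h,b_h)\sim\mu\times\nu^t}\big[(f^t_{\mu,h}-\T^{\mu,\nu^t}_h f^t_{\mu,h+1})(s_h,a_h,b_h)\big],
\end{align*}
this overestimation becomes a sum over $h$ of on-policy expectations of Bellman residuals, which live in the class $(\I-\T^{\Pi,\Pi'}_h)\FC$. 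Crucially, the data $\D_t$ is collected under $\mu^t\times\nu^t$ with $\mu^t\sim p^t$, so in expectation the rollout distribution matches the evaluation distribution $\E_{\mu\sim p^t}[d^{\mu\times\nu^t}_h]$; combined with the in-sample control from the optimism step, I can invoke the distributional-eluder pigeonhole argument of \citep{jin2021bellman} for the residual class $(\I-\T^{\Pi,\Pi'}_h)\FC$ with test distributions $\Q$. This converts the per-step in-sample bound $\lesssim\beta$ into the cumulative out-of-sample bound $\BO(\sqrt{\dbee K\beta})$ at each step (the eluder threshold $\sqrt{1/K}$ in $\dbee$ making the residual $K\eps$ term lower order), and summing over the $H$ steps gives $\mathrm{(C)}\le\BO(H\sqrt{\dbee K\beta})=\BO(H\VB\sqrt{\dbee K\log(\ncov|\Pi|/\delta)})$. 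Adding (A)$\le0$, (B) and (C), and noting that the eluder term dominates the Hedge and Azuma terms, yields the claimed bound.

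I expect the main obstacle to be term (C), specifically making the eluder/pigeonhole argument go through in the presence of the mixture over $\mu\sim p^t$: unlike the single-agent analysis of \citep{jin2021bellman}, here the optimistic function $f^t_\mu$ and its Bellman residual depend on the evaluated policy $\mu$, and both the data-collection and the evaluation distributions are \emph{mixtures} indexed by $p^t$. Getting these two mixtures to line up (so that past in-sample errors genuinely bound the current out-of-sample error) is precisely why the BEE dimension must be defined over the whole residual class $(\I-\T^{\Pi,\Pi'}_h)\FC$, and it requires carefully routing the $\mu^t\sim p^t$ randomness through Freedman-type concentration. A secondary subtlety is that the confidence set in $\pealg$ uses the current opponent policy $\nu^t$ inside the Bellman target while the samples were generated under past opponents $\nu^s$; reconciling this in the concentration step is exactly where the oblivious-versus-adaptive distinction enters, through whether the union bound must also range over $\Pi'$.
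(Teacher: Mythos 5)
Your proposal follows essentially the same route as the paper: the same optimism/Hedge/estimation-error decomposition, optimism via Freedman's inequality with a union bound over $\Pi\times[K]\times[H]$ and a $(\VB/K)$-cover of $\FC\cup\GC$, the Hedge bound $\BO(\VB\sqrt{K\log|\Pi|})$ for the comparator term, and the value-difference identity plus the distributional-eluder pigeonhole lemma of \cite{jin2021bellman} applied to $(\I-\T^{\Pi,\Pi'}_h)\FC$ for the estimation error. The only divergence is in how you peel off the sampling noise in term (C): the obstacle you flag (aligning the mixture over $\mu\sim p^t$ in both the evaluation and data-collection distributions) disappears if you instead treat $\langle p^t,\OVT\rangle-\OVT(\mu^t)$ as the martingale-difference term, as the paper does, since you are then left with $\sum_t\bigl(\OVT(\mu^t)-V^{\mu^t\times\nu^t}_1(s_1)\bigr)$ for the \emph{realized} $\mu^t$, where the rollout distribution $d^{\mu^t\times\nu^t}_h$ coincides exactly with the single-policy test distributions in $\Q^1_h$ and no mixture argument is needed.
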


 The $\sqrt{K}$ bound on the regret in Theorem~\ref{thm:oblivious} is consistent with the rate in tabular case \citep{liu2022learning} and suggests that the uniform mixture of the output policies $\{\mu^t\}_{t=1}^K$ is an $\eps$-approximate best policy in hindsight when $K=\TO(1/\eps^2)$. The complexity of the problem affects the regret bound through the covering number and the BEE dimension, implying that BEE dimension indeed captures the essence of this problem. Further, in oblivious setting, the regret bound in (\ref{eq:thm1}) does not depend on $\Pi'$ directly (the upper bound of the BEE dimension is also independent of $\Pi'$ in some special cases as shown in Proposition~\ref{prop:BEE}) and thus Theorem~\ref{thm:oblivious} can still hold when $\Pi'$ is infinite, as long as Assumptions~\ref{ass:realize} is satisfied.

\paragraph{Adaptive Opponent.}
In the adaptive setting, the analysis in the oblivious setting can still work but requires slight modifications. We first need to modify Assumption~\ref{ass:realize} to hold for all $\nu\in\Pi'$ since $\nu^t$ is no longer predetermined:
\begin{assumption}[Uniform realizability and generalized completeness]
	\label{ass:realize adapt}
	Assume that for any $h\in[H],\mu\in\Pi,\nu\in\Pi',f_{h+1}\in\FC_{h+1}$, we have $Q^{\mu\times\nu}_{h}\in\FC_h,\T_h^{\mu,\nu}f_{h+1}\in\GC_{h}$.
\end{assumption}	

Further, as we have mentioned before, we need to introduce a union bound over the policies in $\Pi'$ in our analysis and thus we also assume $\Pi'$ to be finite for simplicity.
\begin{assumption}[Finite opponent's policy class]
\label{ass:opponent}
We assume $\Pi'$ is finite.
\end{assumption}
\begin{remark}
When $\Pi'$ is infinite, it is straightforward to generalize our analysis by replacing $|\Pi'|$ with the covering number of $\Pi'$. However, the regret will still depend on the size of $\Pi'$, which is not the case in tabular setting \cite{liu2022learning}. This dependency originates from our model-free type of policy evaluation algorithm (Algorithm~\ref{alg:GOLF}) and thus is inevitable for \mainalg in general. That said, when the Markov game has special structures (e.g., see the Markov game in Section~\ref{sec:cmdp} and Section~\ref{sec:vmdp}), we can avoid this dependency.
\end{remark}

With the above assumptions, we have Theorem~\ref{thm:adaptive} to show that \mainalg can still achieve sublinear regret in adaptive setting, whose proof is deferred to Section~\ref{sec:proof sketch}: 
\begin{theorem}[Regret of Adaptive Adversary]
	\label{thm:adaptive}
	Under Assumption~\ref{ass:player},\ref{ass:realize adapt},\ref{ass:opponent}, there exists an absolute constant $c$ such that for any $\delta\in(0,1]$, $K\in\mathbb{N}$, choosing $\beta=c\VB^2\log(\ncov|\Pi||\Pi'|/\delta)$ and $\eta=\sqrt{\log|\Pi|/(K\VB^2)}$ in \mainalg, then with probability at least $1-\delta$ we have:
	\begin{align}
		\label{eq:thm2}
		\text{Regret}(K)\leq\BO\big(H\VB\sqrt{K\dbee\log\left(\ncov|\Pi||\Pi'|/\delta\right)}\big).
	\end{align}
\end{theorem}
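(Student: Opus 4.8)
The plan is to reuse the regret decomposition of the oblivious proof (Theorem~\ref{thm:oblivious}) almost verbatim and isolate the one place where the adaptivity of $\nu^t$ matters: the concentration underlying the confidence set $\B_{\D}(\mu,\nu)$. Letting $\mu^\star=\argmax_{\mu\in\Pi}\sum_{t=1}^K V^{\mu\times\nu^t}_1(s_1)$, I would write
\begin{align*}
\reg(K)=\underbrace{\sum_{t=1}^K\big[V^{\mu^\star\times\nu^t}_1(s_1)-\OVT(\mu^\star)\big]}_{(\mathrm{I})}+\underbrace{\sum_{t=1}^K\big[\OVT(\mu^\star)-\OVT(\mu^t)\big]}_{(\mathrm{II})}+\underbrace{\sum_{t=1}^K\big[\OVT(\mu^t)-V^{\mu^t\times\nu^t}_1(s_1)\big]}_{(\mathrm{III})}.
\end{align*}
Term $(\mathrm{I})$ is controlled by optimism, $(\mathrm{II})$ by the exponential-weights (Hedge) regret together with a martingale concentration, and $(\mathrm{III})$ by an eluder-type width bound expressed through $\dbee$.

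The crux, and the only genuine departure from the oblivious analysis, is the optimism lemma: with probability at least $1-\delta$, the ground-truth $Q^{\mu\times\nu}$ lies in $\B_{\D_{1:t-1}}(\mu,\nu)$ simultaneously for all $t\in[K]$, all $\mu\in\Pi$ and all $\nu\in\Pi'$. Granting this, $\OVT(\mu^\star)\ge V^{\mu^\star\times\nu^t}_1(s_1)$, so $(\mathrm{I})\le 0$, and each summand of $(\mathrm{III})$ is the overestimation of the optimistic maximizer $\hat f^t\in\B_{\D_{1:t-1}}(\mu^t,\nu^t)$. To prove the lemma I would fix a tuple $(f,g,\mu,\nu)$ from the $\eps$-covers of $\FC,\GC$ and from $\Pi\times\Pi'$; for this fixed tuple the empirical Bellman residual $\LC_{\D_{1:t-1}}$ is a sum of bounded martingale differences over the adaptively gathered trajectories, so a Freedman/Bernstein inequality bounds its deviation from the population Bellman error. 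The essential point is that $\nu^t$ is now chosen as a function of $\D_{1:t-1}$ rather than being predetermined, so it cannot be treated as a fixed regression target; I therefore take a union bound over the whole finite class $\Pi'$, which is exactly what Assumptions~\ref{ass:realize adapt} and \ref{ass:opponent} are for. This replaces the $\log(\ncov|\Pi|/\delta)$ budget of the oblivious proof by $\log(\ncov|\Pi||\Pi'|/\delta)$ and pins down the stated $\beta$.

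For $(\mathrm{II})$, since $\OVT(\cdot)\in[0,\VB]$ and $p^{t+1}\propto p^t\exp(\eta\OVT)$, the standard Hedge guarantee gives $\max_{\mu}\sum_t\OVT(\mu)-\sum_t\langle p^t,\OVT\rangle\lesssim \log|\Pi|/\eta+\eta\VB^2K$, which at $\eta=\sqrt{\log|\Pi|/(K\VB^2)}$ is $\BO(\VB\sqrt{K\log|\Pi|})$; the gap between $\langle p^t,\OVT\rangle$ and the realized $\OVT(\mu^t)$ is a martingale difference sequence, since $\mu^t\sim p^t$ is drawn independently of the opponent's choice and $\OVT$ depends only on $\D_{1:t-1},\nu^t$, whence $\E[\OVT(\mu^t)\mid\Fil_{t-1},\nu^t]=\langle p^t,\OVT\rangle$, so Azuma--Hoeffding adds only $\BO(\VB\sqrt{K\log(1/\delta)})$. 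For $(\mathrm{III})$ I would apply the value-difference identity
\[
\OVT(\mu^t)-V^{\mu^t\times\nu^t}_1(s_1)=\sum_{h=1}^H\E_{(s_h,a_h,b_h)\sim\mu^t\times\nu^t}\big[w^t_h(s_h,a_h,b_h)\big],\quad w^t_h:=\hat f^t_h-\T^{\mu^t,\nu^t}_h\hat f^t_{h+1}\in(\I-\T^{\Pi,\Pi'}_h)\FC,
\]
and note that $\hat f^t\in\B_{\D_{1:t-1}}(\mu^t,\nu^t)$ together with generalized completeness and the same concentration yields $\sum_{s<t}(\E_{d^s_h}[w^t_h])^2\lesssim\beta$ for each $t$ (with $d^s_h$ the on-policy distribution of $\mu^s\times\nu^s$, a member of $\Q^1$; an analogous statement holds for $\Q^2$). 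This is precisely the hypothesis of the distributional-eluder pigeonhole lemma, whose conclusion gives $\sum_{t=1}^K|\E_{d^t_h}[w^t_h]|\lesssim\sqrt{\dbee\,\beta\,K}$. Summing over $h\in[H]$ and substituting $\beta=c\VB^2\log(\ncov|\Pi||\Pi'|/\delta)$ makes $(\mathrm{III})$ the dominant term, of order $\BO(H\VB\sqrt{K\dbee\log(\ncov|\Pi||\Pi'|/\delta)})$; adding the lower-order $(\mathrm{II})$ contributions yields the claimed bound.

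The main obstacle will be the optimism lemma under adaptivity: I must verify that the least-squares confidence set stays valid uniformly over a data-dependent $\nu^t$, which forces the union bound over all of $\Pi'$ and demands careful bookkeeping of the martingale filtration so that the Bellman-residual concentration is not corrupted by the opponent's adaptive choices. Everything downstream---Hedge, the value-difference identity, and the eluder pigeonhole---is structurally identical to the oblivious argument; the adaptivity surfaces only through the enlarged failure-probability budget and hence the extra $|\Pi'|$ factor inside the logarithm.
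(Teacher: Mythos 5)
Your proposal is correct and follows essentially the same route as the paper: the same optimism/Hedge/martingale/estimation-error decomposition (you merely merge the paper's terms (2) and (3) into your (II)), the same value-difference identity and distributional-eluder pigeonhole for the estimation error, and the same key modification for adaptivity, namely a union bound over the finite class $\Pi'$ in the Freedman-type concentration underlying both the optimism lemma and the bounded-Bellman-residual lemma, which is exactly what produces the extra $\log|\Pi'|$ in $\beta$ and in the regret bound.
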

We can see that in adaptive setting the regret also scales with $\sqrt{K}$, implying that \mainalg can still find an $\eps$-approximate best policy in hindsight with $\TO(1/\eps^2)$ episodes even when the opponent is adaptive. Compared to Theorem~\ref{thm:oblivious}, Theorem~\ref{thm:adaptive} has an additional $\log|\Pi'|$ in the upper bound (\ref{eq:thm2}), which comes from the union bound over $\Pi'$ in the analysis. 

\paragraph{Intuitions on the regret bounds.} The regrets in Theorem~\ref{thm:oblivious} and Theorem~\ref{thm:adaptive} can be decomposed to two parts, the online learning error incurred by Hedge and the cumulative value function estimation error incurred by \pealg. From the online learning literature \citep{hazan2016introduction}, the online learning error is $\BO(\VB\sqrt{K\log|\Pi|})$ by viewing the policies in $\Pi$ as experts and $\OVT(\mu)$ as the reward function of expert $\mu$. For the estimation error, we utilize BEE dimensions to bridge $\OVT(\mu^t)-V_1^{\pi^t}(s_1)$ with the function's empirical Bellman residuals on $\D_{1:t-1}$. This further incurs $\TO(\VB\sqrt{K\dbee})$ in the results. Our technical contribution mainly lies in bounding the cumulative value function estimation error with the newly proposed BEE dimensions, which is different from \cite{jin2021bellman} where they focus on bounding the cumulative distance from the optimal value function.


\paragraph{Comparison with existing works.} There have been works studying decentralized policy learning. However, most of them (e.g., \cite{tian2021online,jin2021power,huang2021towards}) only compete  against the Nash value in a two-player zero-sum games, which is a much weaker baseline than ours. \cite{liu2022learning} can achieve a  $\sqrt{K}$ regret upper bound  under Definition~\ref{def:regret}, but their theory is restricted to the  tabular case and  seems unable to deal with more complicated cases. For example, when applying the algorithm in \citep{liu2022learning} to a linear MG, the regret scales with $|\SC|$ and $|\A|$, which becomes vacuous in the face of large state and action space. However, for the case of a linear MG,  \mainalg can achieve a regret bound that depends on the size of the state-action space through the dimension $d$,   rather than $|\SC|$ and $|\A|$.  Thus  \mainalg is able to handle  large or even infinite state and action space. In summary, \mainalg  can achieve a $\sqrt{K}$ regret under Definition~\ref{def:regret} with general function approximation, capable of tackling all models with low BEE dimension,  including linear MGs, kernel MGs and generalized linear complete models (Appendix~\ref{sec:example}).


\subsection{Self-Play Sample Complexity}
\label{sec:selfplay}
Our previous discussion assumes the opponent is arbitrary or even adversary. A natural question is to ask whether there are any additional guarantees if the player and opponent run \mainalg simultaneously, which is exactly Algorithm~\ref{alg:MDPS self} in the self-play setting. The following corollary answers this question affirmatively and shows that Algorithm~\ref{alg:MDPS self} can find an approximate CCE $\widehat{\pi}$ efficiently:
\begin{corollary}
	\label{cor:selfplay}
	Suppose Assumption~\ref{ass:player},\ref{ass:realize adapt} hold for all the agents $i$ and its corresponding $\FC_i,\GC_i,\Pi_i,\Pi_{-i}$. Then for any $\delta\in(0,1],\eps>0$, if we choose
	\begin{align}
		\label{eq:cor1} 
		K\geq\BO\bigg(H^2\VB^2 \cdot \max_{i\in[n]} \biggl\{ \dbeei \cdot \biggl(\log\ncovi+\sum_{j=1}^n\log|\Pi_j|+\log(n/ \delta)\biggr) \biggr\}\bigg/ \eps^2\bigg),
	\end{align} 
	where $\dbeei$ and  $\ncovi$ are defined respectively as 
	\begin{align*}
		\dbeei:=\DBE\big(\mathcal{F}_i,\sqrt{ 1 / K},\Pi_i,\Pi_{-i}\big),\quad\ncovi:=\N_{\FC_i\cup\GC_i}(\VB/K)KH,
	\end{align*}
	and set $\beta_i=c\VB^2\log(\ncovi|\Pi_i||\Pi_{-i}|n/\delta),\eta_i=\sqrt{\log|\Pi_i|/(K\VB^2)}$, then with probability at least $1-\delta$, $\widehat{\pi}$ is $\eps$-approximate CCE.
\end{corollary}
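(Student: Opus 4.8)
The plan is to reduce the self-play guarantee to the single-agent adaptive regret bound of Theorem~\ref{thm:adaptive}, applied simultaneously to all $n$ agents, and then invoke the classical no-regret-to-equilibrium reduction. First I would observe that from the viewpoint of agent $i$, the effective opponent $\mu^t_{-i}=\prod_{j\neq i}\mu^t_j$ is an adaptive adversary: each $\mu^t_j$ is drawn from the Hedge distribution $p^t_j$, which is computed from the shared history $\D_{1:t-1}$, so $\mu^t_{-i}$ depends on the past in exactly the manner handled by Theorem~\ref{thm:adaptive}. Since Assumption~\ref{ass:player} forces every $\Pi_j$ to be finite, the product class $\Pi_{-i}$ is finite and Assumption~\ref{ass:opponent} is automatically met; together with Assumption~\ref{ass:realize adapt} this verifies all hypotheses of Theorem~\ref{thm:adaptive} for each agent. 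Applying that theorem with confidence level $\delta/n$ and taking a union bound over $i\in[n]$, with probability at least $1-\delta$ every agent $i$ satisfies $\reg_i(K)\le\BO(H\VB\sqrt{K\,\dbeei\log(\ncovi|\Pi_i||\Pi_{-i}|n/\delta)})$, where $\reg_i(K)=\max_{\mu_i\in\Pi_i}\sum_{t=1}^K V^{\mu_i\times\mu^t_{-i}}_{1,i}(s_1)-\sum_{t=1}^K V^{\mu^t_i\times\mu^t_{-i}}_{1,i}(s_1)$.

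Next I would translate this regret bound into the CCE inequality (\ref{eq:def cce}) using the structure of $\widehat{\pi}$. Because $\widehat{\pi}$ is the uniform mixture over the time-indexed joint policies $\{\prod_{i}\mu^t_i\}_{t=1}^K$, its value to agent $i$ is the time average $V^{\widehat{\pi}}_{1,i}(s_1)=\tfrac1K\sum_{t=1}^K V^{\mu^t_i\times\mu^t_{-i}}_{1,i}(s_1)$, and the marginal $\bmu_{-i}$ of $\widehat{\pi}$ over the other agents' policies is the uniform distribution over $\{\mu^t_{-i}\}_{t=1}^K$. Hence, for any deviation $\mu'\in\Pi_i$, the benchmark value decomposes as $V^{\mu'\times\bmu_{-i}}_{1,i}(s_1)=\tfrac1K\sum_{t=1}^K V^{\mu'\times\mu^t_{-i}}_{1,i}(s_1)$. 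Taking the maximum over $\mu'$ and subtracting, the left-hand side of the CCE condition becomes exactly $\reg_i(K)/K$, so that $\max_{\mu'\in\Pi_i}V^{\mu'\times\bmu_{-i}}_{1,i}(s_1)-V^{\widehat{\pi}}_{1,i}(s_1)\le\reg_i(K)/K$.

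Finally I would choose $K$ to drive this per-agent gap below $\eps$. Dividing the regret bound by $K$ produces a $\TO(\sqrt{1/K})$ term, so requiring $H\VB\sqrt{\dbeei\log(\ncovi|\Pi_i||\Pi_{-i}|n/\delta)}/\sqrt{K}\le\eps$ and solving for $K$ yields the stated threshold, after rewriting $\log(|\Pi_i||\Pi_{-i}|)=\sum_{j=1}^n\log|\Pi_j|$ and taking the maximum over $i\in[n]$. Combining across all agents under the same high-probability event then establishes (\ref{eq:def cce}) for every $i$, i.e., $\widehat{\pi}$ is an $\eps$-approximate CCE with the prescribed choices of $\beta_i$ and $\eta_i$.

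The main obstacle, though largely conceptual, is justifying that the self-play dynamics genuinely fall under the adaptive-opponent bound rather than the oblivious one: one must confirm that the policy evaluation target $V^{\mu\times\mu^t_{-i}}$ is controlled uniformly over the realized, history-dependent opponent policies $\mu^t_{-i}$, which is precisely why Assumption~\ref{ass:realize adapt} (uniform realizability/completeness) and the extra $\log|\Pi_{-i}|$ factor baked into the choice of $\beta_i$ are required. The remaining ingredients---the value decomposition for $\widehat{\pi}$ and solving the resulting inequality for $K$---are routine bookkeeping.
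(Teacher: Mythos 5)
Your proposal is correct and follows essentially the same route as the paper's own proof: apply Theorem~\ref{thm:adaptive} to each agent (treating the other agents' Hedge-driven policies as a finite adaptive adversary, with a union bound over $i\in[n]$ absorbed into the $n/\delta$ in $\beta_i$), then use the uniform-mixture structure of $\widehat{\pi}$ to identify the time-averaged regret with the CCE gap and solve for $K$. The paper's write-up is terser but contains exactly these steps.
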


The proof is deferred to Appendix~\ref{proof corollary selfplay}. Corollary~\ref{cor:selfplay} shows that if we run \mainalg independently for each agent, we are able to find an $\eps$-approximate CCE with $\TO(1/\eps^2)$ samples. This can be regarded as a counterpart in Markov games to the classic connection between no-regret learning algorithms and equilibria in matrix games. This guarantee does not hold if an algorithm can only achieve low regrets with respect to the Nash values, which further validates the significance of \mainalg to achieve low regrets under Definition~\ref{def:regret}. 

\paragraph{Avoiding curse of multiagents.} The sample complexity in (\ref{eq:cor1}) avoids exponential scaling with the number of agents $n$ and only scales with $\max_{i\in[n]}\dbeei$, $\max_{i\in[n]}\ncovi$ and $\sum_{j=1}^n\log|\Pi_j|$, suggesting that statistically Algorithm~\ref{alg:MDPS self} is able to escape the \textit{curse-of-multiagents} problem in the literature \citep{jin2021v}. Nevertheless, the input dimension of functions in $\FC_i$ and $\GC_i$ may scale with the number of the agents linearly, leading to the computational inefficiency of \pealg. We comment that finding computational efficient algorithms is beyond the scope of this paper and we leave it to future works. 

\paragraph{Comparison with existing algorithms.} There have been many works studying how to find equilibria in Markov games. However, most of them are focused on centralized two-player zero-sum games \citep{bai2020provable,xie2020learning,jin2021power,huang2021towards} rather than decentralized algorithms. For decentralized algorithms, existing literature mainly handle with potential Markov games \citep{zhang2021gradient,leonardos2021global,ding2022independent} and two-player zero-sum games \citep{daskalakis2020independent,sayin2021decentralized,wei2021last}. \cite{mao2021on,jin2021v} are able to tackle decentralized multi-agent general-sum Markov games while their algorithms are restricted to tabular cases. Algorithm~\ref{alg:MDPS self}, on the other hand, can deal with more general cases with function approximation and policy classes in multi-agent general-sum games. Furthermore, compared to the above works, \mainalg has an additional advantage of robustness to adversaries since all the benign agents can exploit the opponents and achieve no-regret learning. 

\paragraph{Extensions.} Although Theorem~\ref{thm:oblivious}, Theorem~\ref{thm:adaptive} and Corollary~\ref{cor:selfplay} are aimed at Markov games, \mainalg can be applied to a much larger scope of problems. Two such problems are finding the optimal policy in constrained MDP (CMDP, Section~\ref{sec:cmdp}) and vector-valued MDP (VMDP, Section~\ref{sec:vmdp}). We will investigate these two special yet important problems later and demonstrate how to convert such problems into Markov games with a fictitious opponent by duality, where \mainalg is ready to use.

\section{Extension: Constrained Markov Decision Process}
\label{sec:cmdp}
Although \mainalg is designed to solve Markov games, there are quite a lot of other problems where \mainalg can tackle with small adaptation. In this section we investigate an important scenario in practice called constrained Markov decision process (CMDP). By converting CMDP into a maximin problem via Lagrangian multiplier, we will be able to view it as a zero-sum Markov game and apply \mainalg readily.

\paragraph{Constrained Markov decision process.} Consider the Constrained Markov Decision Process (CMDP) \citep{ding2021provably} $\MC=(\mathcal{S},\mathcal{A},\{P_h\}_{h=1}^H,\{r_h\}_{h=1}^H,\{g_h\}_{h=1}^H,H)$ where $\mathcal{S}$ is the state space, $\mathcal{A}$ is the action space, $H$ is the length of each episode, $P_h: \mathcal{S}\times\mathcal{A}\to\Delta(\mathcal{S})$ is the transition function at $h$-th step, $r_h:\mathcal{S}\times\mathcal{A}\to\R_+$ is the reward function and $g_h:\mathcal{S}\times\mathcal{A}\to[0,1]$ is the utility function at $h$-th step. We assume the reward $r_h$ is also bounded in $[0,1]$ for simplicity and thus $\VB=H$. Then given a policy $\mu=\{\mu_h:\SC\to\Delta_{\A}\}_{h\in[H]}$, we can define the value function $\VRH^{\mu}$ and action-value function $\QRH^{\mu}$ with respect to the reward function $r$ as follows:
\begin{align*}
\VRH^{\mu}(s)=\E_{\mu}\bigg[\sum_{t=h}^H r_t(s_t,a_t)\bigg|s_h=s\bigg],\QRH^{\mu}(s,a)=\E_{\mu}\bigg[\sum_{t=h}^H r_t(s_t,a_t)\bigg|s_h=s,a_h=a\bigg].
\end{align*}
The value function $\VGH^{\mu}$ and action-value function $\QGH^{\mu}$ with respect to the utility function $g$ can be defined similarly. Another related concept is the state-action visitation distribution, which can be defined as
\begin{align*}
	d^{\mu}_{h}(s,a)=\text{Pr}_{\mu}[(s_h,a_h)=(s,a)],
\end{align*}
where $\text{Pr}_{\mu}$ denotes the distribution of the trajectory induced by executing policy $\mu$ in the $\MC$.

\paragraph{Learning objective.} In CMDP, the player aims to solve a constrained problem where the objective function is the expected total rewards and the constraint is on the expected total utilities:
\begin{problem}[Optimization problem of CMDP]
\label{prob:cmdp}
\begin{align}
\label{eq:prob cmdp}
\max_{\mu\in\Pi} V^{\mu}_{r,1}(s_1) \quad \text{subject to}\quad V^{\mu}_{g,1}(s_1)\geq b,
\end{align}
where $b\in(0,H]$ to avoid triviality.
\end{problem}

Denote the optimal policy for (\ref{eq:prob cmdp}) by $\muc$, then the regret can be defined as the performance gap with respect to $\muc$:
\begin{align}
\label{eq:cmdp regret}
	\reg(K)=\sum_{t=1}^K\Big(\VR^{\muc}(s_1)-\VR^{\mu^t}(s_1)\Big).
\end{align}
 
However, since utility information is only revealed after a policy is decided, it is impossible for each policy to satisfy the constraints. Therefore, like \cite{ding2021provably}, we allow each policy to violate the constraint in each episode and focus on minimizing total constraint violations over $K$ episodes:
\begin{align}
\label{eq:violation}
\vio(K)=\bigg[\sum_{t=1}^K\Big(b-\VG^{\mu^t}(s_1)\Big)\bigg]_{+}.
\end{align} 
Achieving sublinear violations in (\ref{eq:violation}) implies that if we sample a policy uniformly from $\{\mu^t\}_{t=1}^K$, its constraint violation can be arbitrarily small given large enough $K$. Therefore, if an algorithm can achieve sublinear regret in (\ref{eq:cmdp regret}) and sublinear violations in (\ref{eq:violation}) at the same time, this algorithm will be able to find a good approximate policy to $\muc$.

\subsection{Algorithm: \cmdpalg}
To solve Problem~\ref{prob:cmdp} with \mainalg, we first need to convert it into a Markov game. A natural idea is to apply the Lagrangian multiplier $Y\in\R_{+}$ to Problem~\ref{prob:cmdp}, which brings about the equivalent maximin problem below:
\begin{align}
\label{prob:maximin}
\max_{\mu\in\Pi}\min_{Y\geq0} \LO(\mu,Y):=V^{\mu}_{r,1}(s_1)+Y(\VG^{\mu}(s_1)-b).
\end{align} 
Although Problem~\ref{prob:cmdp} is non-concave in $\mu$, there have been works indicating that strong duality still holds for Problem~\ref{prob:cmdp} when the policy class is described by a good parametrization \citep{paternain2019constrained}. Therefore, here we assume strong duality holds and it is straightforward to generalize our analysis to the case where there exists a duality gap:

\begin{assumption}[Strong duality]
\label{ass:strong duality}
Assume strong duality holds for Problem~\ref{prob:cmdp}, i.e.,
\begin{align}
\label{eq:strong duality}
\max_{\mu\in\Pi}\min_{Y\geq0} \LO(\mu,Y)=\min_{Y\geq0}\max_{\mu\in\Pi} \LO(\mu,Y). 
\end{align}
\end{assumption}

\begin{remark}
One example case where strong duality (\ref{eq:strong duality}) holds is when policy class $\Pi$ satisfies global realizability. Let $\mu^*_{\text{glo}}$ denote the solution to $\max_{\mu_h(\cdot|s)\in\Delta_{\A}}\min_{Y\geq0} \LO(\mu,Y)$. \cite{ding2021provably} showed that $\max_{\mu\in(\Delta_{\A})^{|\SC|H}}\min_{Y\geq0} \LO(\mu,Y)$ satisfies strong duality, and thus as long as $\mu^*_{\text{glo}}\in\Pi$, Problem~\ref{prob:cmdp} also has strong duality.
\end{remark}

Further, let $D(Y):=\max_{\mu\in\Pi}\LO(\mu,Y)$ denote the dual function and suppose the optimal dual variable is $Y^*=\arg\min_{Y\geq0}D(Y)$. To ensure $Y^*$ is bounded, we need to assume that the standard Slater's Condition holds:
\begin{assumption}
\label{ass:slater}
There exists $\sla>0$ and $\tmu\in\Pi$ such that $\VG^{\tmu}(s_1)\geq b+\sla$.
\end{assumption}

Then the following lemma shows that Assumption~\ref{ass:slater} implies bounded optimal dual variable, whose proof is deferred to Appendix~\ref{proof lemma duality}:
\begin{lemma}
\label{lem:duality}
Suppose Assumption~\ref{ass:strong duality},\ref{ass:slater} hold, then we have $0\leq Y^*\leq {H}/{\sla}$.
\end{lemma}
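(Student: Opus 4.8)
The plan is to combine the strong-duality identity of Assumption~\ref{ass:strong duality} with the Slater point of Assumption~\ref{ass:slater} via a standard sandwiching argument. First I would dispose of the lower bound $Y^* \geq 0$, which is immediate: $Y^* = \arg\min_{Y\geq 0} D(Y)$ is by definition constrained to the feasible region $Y \geq 0$. All the work lies in the upper bound.

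Next I would identify the shared optimal value. Writing $\mathrm{OPT} := \max_{\mu\in\Pi}\min_{Y\geq 0}\LO(\mu,Y)$, I observe that for a fixed $\mu$ the inner minimization equals $\VR^{\mu}(s_1)$ when $\mu$ is feasible (the minimizer is $Y=0$, since the slack multiplies a nonnegative $Y$) and diverges to $-\infty$ when $\VG^{\mu}(s_1) < b$ (let $Y\to\infty$). Hence $\mathrm{OPT} = \max_{\mu:\,\VG^{\mu}(s_1)\geq b}\VR^{\mu}(s_1) = \VR^{\muc}(s_1)$, the optimal value of Problem~\ref{prob:cmdp}. Because value functions are bounded by $\VB = H$, this also gives $\mathrm{OPT} \leq H$ at no extra cost.

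Then I would pass to the dual side using strong duality, so that $D(Y^*) = \min_{Y\geq 0}D(Y) = \mathrm{OPT}$, and lower-bound the maximum over $\mu$ by its value at the Slater witness $\tmu$:
\[
\mathrm{OPT} = D(Y^*) = \max_{\mu\in\Pi}\LO(\mu,Y^*) \geq \LO(\tmu,Y^*) = \VR^{\tmu}(s_1) + Y^*\bigl(\VG^{\tmu}(s_1) - b\bigr).
\]
By Assumption~\ref{ass:slater}, $\VG^{\tmu}(s_1) - b \geq \sla > 0$, and since rewards are nonnegative we have $\VR^{\tmu}(s_1) \geq 0$. Rearranging yields $Y^*\sla \leq \mathrm{OPT} - \VR^{\tmu}(s_1) \leq \mathrm{OPT} \leq H$, and dividing by $\sla$ delivers $Y^* \leq H/\sla$.

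This argument is essentially routine, and I do not expect a genuine obstacle. The one point deserving care is the case analysis evaluating $\min_{Y\geq 0}\LO(\mu,Y)$, which turns on the sign of the constraint slack $\VG^{\mu}(s_1) - b$ and is what lets me identify $\mathrm{OPT}$ with the primal optimum $\VR^{\muc}(s_1)$; once that identification and the bound $\mathrm{OPT}\leq H$ are in hand, the remainder is a direct substitution of Slater's condition into the dual lower bound.
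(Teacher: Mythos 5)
Your proposal is correct and follows essentially the same route as the paper: both identify $D(Y^*)$ with the primal optimum $\VR^{\muc}(s_1)\leq H$ via strong duality, lower-bound $D(Y^*)\geq\LO(\tmu,Y^*)$ at the Slater point, and rearrange using $\VG^{\tmu}(s_1)-b\geq\sla$ and $\VR^{\tmu}(s_1)\geq 0$. The only difference is that you spell out the case analysis behind $\max_{\mu}\min_{Y\geq 0}\LO(\mu,Y)=\VR^{\muc}(s_1)$, which the paper takes for granted.
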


Now we are ready to adapt \mainalg into a primal-dual algorithm to solve Problem~\ref{prob:cmdp}. Notice that the maximin problem (\ref{prob:maximin}) can be viewed as a zero-sum Markov game where the player's policy is $\mu$ and the reward function for the player is $r_{h}(s,a)+Yg_h(s,a)$. The opponent's action is $Y\in\R_+$ which remains the same throughout a single episode. With this formulation, we can simply run \mainalg on the player, assuming the player is given function classes $\{\FCR,\GCR\}$ and $\{\FCG,\GCG\}$ to approximate $\QRH^{\mu}$ and $\QGH^{\mu}$ respectively. In the meanwhile, we run online projected gradient descent on the opponent so that its action $Y$ can capture the total violation so far.

This new algorithm is called \cmdpalg and shown in Algorithm~\ref{alg:CMDP}. It consists of the following three steps in each iteration. For the policy evaluation task in the second step, \cmdpalg runs a single-agent version of \pealg to estimate $V^{\mu}_{r,1}(s_1)$ and $V^{\mu}_{g,1}(s_1)$ separately, which is essential for \cmdpalg to deal with the infinity of the opponent's policy class, i.e., $\R_+$.
\begin{itemize}
	\item The player plays a policy $\mu^t$ sampled from its hyperpolicy $p^t$ and collects a trajectory.
	\item The player runs \spealg to obtain optimistic value function estimations $\OVTR(\mu),\break\OVTG(\mu)$ for all $\mu\in\Pi$ and updates the hyperpolicy using Hedge with the loss function being $\OVTR(\mu)+Y_t\OVTG(\mu)$. The construction rule for $\B_{\D}(\mu)$ is still based on relaxed least-squared policy evaluation:
	\begin{align}
	\label{eq:cmdp eva}
		 \B_{\D}(\mu)\gets\{f\in\FC:\LC_{\D}(f_h,f_{h+1},\mu)\leq\inf_{g\in\GC}\LC_{\D}(g_h,f_{h+1},\mu)+\beta,\forall h\in[H]\},
        \end{align}
    where $\LC_{\D}$ is the empirical Bellman residuals on $\D$:
\begin{align*} \LC_{\D}(\xi_h,\zeta_{h+1},\mu)=\sum_{(s_h,a_h,x_h,s_{h+1})\in\D}[\xi_h(s_h,a_h)-x_h-\zeta_{h+1}(s_{h+1},\mu)]^2.
\end{align*}

	\item The dual variable is updated using online projected gradient descent.
\end{itemize}

\begin{algorithm}[h]
	\caption{\textbf{\cmdpalg}}
	\label{alg:CMDP}
	\begin{algorithmic}
		\State \textbf{Input}: learning rate $\eta,\alpha$, confidence parameter $\beta_r,\beta_g$, projection length $\rchi$.
		\State Initialize $p^1\in\R^{|\Pi|}$ to be uniform over $\Pi$, $Y_1\gets 0$.
		\For{$t=1,\cdots,K$}
		\State \textbf{Collect samples:}
		\State The player samples $\mu^t$ from $p^t$.
		\State Run $\mu^t$ and collect $\D^r_{t}=\{s^t_1,a^t_1,r^t_1,\cdots,s^t_{H+1}\}$,$\D^g_{t}=\{s^t_1,a^t_1,g^t_1,\cdots,s^t_{H+1}\}$.
		\State \textbf{Update policy distribution:}
		\State $\OVTR(\mu)\gets\textbf{\spealg}(\mu,\D^r_{1:t-1},\FCR,\GCR,\beta_r),\quad\forall\mu\in\Pi$.
		\State
	    $\OVTG(\mu)\gets\textbf{\spealg}(\mu,\D^g_{1:t-1},\FCG,\GCG,\beta_g),\quad\forall\mu\in\Pi$.
		\State $p^{t+1}(\mu)\propto p^{t}(\mu)\cdot\exp(\eta\cdot(\OVTR(\mu)+Y_t\OVTG(\mu))),\quad\forall\mu\in\Pi$.
		\State \textbf{Update dual variable:}
		\State 
		$Y_{t+1}\gets\proj(Y_t+\alpha(b-\OVTG(\mu^t)))$.		
		\EndFor
	\end{algorithmic}
\end{algorithm}

\begin{algorithm}[h]
	\caption{$\textbf{\spealg}(\mu,\D,\FC,\GC,\beta)$}
	\label{alg:SGOLF}
	\begin{algorithmic}
		\State \textbf{Construct} $\B_{\D}(\mu)$ based on $\D$ via (\ref{eq:cmdp eva}).
		\State \textbf{Select} $\bar{V}\gets\max_{f\in\B_{\D}(\mu)} f(s_1,\mu)$.
		\State \Return $\bar{V}$.
	\end{algorithmic}
\end{algorithm}

\subsection{Theoretical Guarantees}
Next we provide the regret and constraint violation bounds for \cmdpalg. Here we also consider the case where $\Pi$ is finite, i.e., Assumption~\ref{ass:player} is true. However, we can see that here the opponent is adaptive and its policy class is infinite, suggesting that Assumption~\ref{ass:opponent} is violated. Fortunately, since the opponent only affects the reward function, the player can simply first estimate $V^{\mu}_{r,1}(s_1)$ and $V^{\mu}_{g,1}(s_1)$ respectively and then use their weighted sum to approximate the target value function $V^{\mu}_{r,1}(s_1)+Y\cdot V^{\mu}_{g,1}(s_1)$. In this way, \cmdpalg circumvents introducing a union bound on $Y$ and thus can work even when the number of possible values for $Y$ is infinite.  

We also need to introduce the realizability and general completeness assumptions on the function classes as before:
\begin{assumption}[Realizability and generalized completeness in CMDP]
	\label{ass:realize single}
	Assume that for any $h\in[H],\mu\in\Pi,f^{r}_{h+1}\in\FCR_{h+1},f^{g}_{h+1}\in\FCG$, we have
	\begin{align}
		\label{eq:realize cmdp}
		\QRH^{\mu}\in\FCR_h,\QGH^{\mu}\in\FCG_h, \T_h^{\mu,r}f^r_{h+1}\in\GCR_{h}, \T_h^{\mu,g}f^g_{h+1}\in\GCG_{h}.
	\end{align}
\end{assumption}	
Here $\T_h^{\mu,r}$ is the Bellman operator at step $h$ with respect to $r$:
\begin{align*}
	(\T_h^{\mu,r}f_{h+1})(s,a)=r_h(s,a)+\E_{s'\sim P(\cdot|s,a)}f_{h+1}(s',\mu),
\end{align*}
where $f_{h+1}(s',\mu)=\E_{a'\sim\mu(\cdot|s)}[f_{h+1}(s',a')]$. $\T_h^{\mu,g}$ is defined similarly. We can see that (\ref{eq:realize cmdp}) simply says that all the action value functions with respect to $r$ ($g$) belong to $\FCR$ ($\FCG$) and $\GCR$ ($\GCG$) contains all the results of applying Bellman operator with respect to $r$ ($g$) to the functions in $\FCR$ ($\FCG$).


In addition, as a simplified case of Definition~\ref{def:BEE}, BEE dimension for single-agent setting can be defined as follows:
\begin{definition}
	The single-agent $\eps$-Bellman Evaluation Eluder dimension of function class $\FC$ on distribution family $\Q$ with respect to the policy class $\Pi$ and the reward function $r$ is defined as follows:
	\begin{equation*}
		\DBE(\FC,\eps,\Pi,r,\Q):=\max_{h\in[H]}\DDE((\I-\T^{\Pi,r}_h)\FC,\Q_{h},\eps),
	\end{equation*}
	where $(\I-\T^{\Pi,r}_h)\FC:=\{f_h-\T^{\mu,r}_hf_{h+1}:f\in\FC,\mu\in\Pi\}$.
\end{definition}

We also let  $\DBE(\FC,\eps,\Pi,r)$  denote $\min\{\DBE(\FC,\eps,\Pi,r,\Q^1),\DBE(\FC,\eps,\break\Pi,r,\Q^2)\}$ as before. $\DBE(\FC,\eps,\Pi,g,\Q)$ and $\DBE(\FC,\eps,\Pi,g)$ are defined similarly but with respect to the utility function $g$.

Now we can present the following theorem which shows that \cmdpalg is capable of achieving sublinear regret and constraint violation for Problem~\ref{prob:cmdp}. We also use the following notations to simplify writing:
\begin{align*}
&\dbeer:=\DBE\big(\FCR,\sqrt{{1}/{K}},\Pi,r\big),\quad\ncovr:=\N_{\FCR\cup\GCR}(H/K)KH,\\
&\dbeeg:=\DBE\big(\FCG,\sqrt{{1}/{K}},\Pi,g\big),\quad\ncovr:=\N_{\FCG\cup\GCG}(H/K)KH.
\end{align*}

\begin{theorem}
\label{thm:cmdp}
Under Assumption~\ref{ass:strong duality},\ref{ass:slater},\ref{ass:player},\ref{ass:realize single}, there exists an absolute constant $c$ such that for any $\delta\in(0,1]$, $K\in\mathbb{N}$, if we choose $\beta_r=cH^2\log(\ncovr|\Pi|/\delta)$, $\beta_g=cH^2\log(\ncovg|\Pi|/\delta)$, $\alpha=1/{\sqrt{K}}$, $\rchi={2H}/{\sla}$ and $\eta=\sqrt{{\log|\Pi|}/({K(\rchi+1)^2H^2})}$ in \cmdpalg, then with probability at least $1-\delta$, we have:
\begin{align}
	\label{eq:thm3 regret}
	&\text{Regret}(K)\leq\BO\bigg(\bigg(H^2+\frac{H^2}{\sla}\bigg)\sqrt{K\dbeer\log\left(\ncovr|\Pi|/\delta\right)}\bigg),
\end{align}
\begin{align}
	\label{eq:thm3 violation}
	\text{Violation}(K)\leq\BO\bigg(\bigg(H^2+\frac{H}{\sla}\bigg)\sqrt{K\ebe}\bigg),
\end{align}
where we define $\ebe$ as 
\begin{align*} \ebe=\max\Big\{\dbeer\log\left(\ncovr|\Pi|/\delta\right),\dbeeg\log\left(\ncovg|\Pi|/\delta\right)\Big\}.
\end{align*}
\end{theorem}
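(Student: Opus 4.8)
The plan is to read \cmdpalg as no-regret saddle-point dynamics for an \emph{optimistic Lagrangian} and then convert the resulting master inequality into the two bounds via strong duality. I would work throughout on the high-probability event on which \spealg is simultaneously optimistic and accurate: applying the single-agent specialization of the policy-evaluation guarantee behind Theorem~\ref{thm:oblivious} \emph{separately} to $r$ and to $g$, I first establish that, with probability at least $1-\delta$, for every $t$ and every $\mu\in\Pi$ one has $\VR^{\mu}(s_1)\le\OVTR(\mu)$ and $\VG^{\mu}(s_1)\le\OVTG(\mu)$, and moreover the on-policy cumulative errors satisfy $\sum_t(\OVTR(\mu^t)-\VR^{\mu^t}(s_1))\le E_r:=\TO(H^2\sqrt{K\dbeer\log(\ncovr|\Pi|/\delta)})$ and $\sum_t(\OVTG(\mu^t)-\VG^{\mu^t}(s_1))\le E_g:=\TO(H^2\sqrt{K\dbeeg\log(\ncovg|\Pi|/\delta)})$. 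Because the dual variable only rescales the reward, evaluating $r$ and $g$ independently and then forming $\OVTR(\mu)+Y_t\OVTG(\mu)$ lets me avoid any union bound over the (infinite) range of $Y$, which is exactly why Assumption~\ref{ass:opponent} is not needed here.

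Next I record the two online-learning guarantees. Writing $\widehat L_t(\mu,Y):=\OVTR(\mu)+Y(\OVTG(\mu)-b)$, the Hedge update on the losses $\OVTR(\mu)+Y_t\OVTG(\mu)$ (which differ from $\widehat L_t(\cdot,Y_t)$ only by the $\mu$-independent constant $Y_tb$) has range $[0,(\rchi+1)H]$, so with the stated $\eta$ it gives $\sum_t\widehat L_t(\mu,Y_t)-\sum_t\E_{\mu'\sim p^t}[\widehat L_t(\mu',Y_t)]\le\BO((\rchi+1)H\sqrt{K\log|\Pi|})$ for every $\mu\in\Pi$. The projected gradient update on $Y$ is online gradient descent on the linear losses $Y\mapsto Y(\OVTG(\mu^t)-b)$, giving $\sum_t\widehat L_t(\mu^t,Y_t)-\sum_t\widehat L_t(\mu^t,Y)\le \tfrac{(Y_1-Y)^2}{2\alpha}+\tfrac{\alpha}{2}\sum_t(\OVTG(\mu^t)-b)^2$ for all $Y\in[0,\rchi]$. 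Since $\OVTR,\OVTG,Y_t$ are $\mathcal F_{t-1}$-measurable while $\mu^t\sim p^t$, an Azuma argument bridges $\E_{\mu'\sim p^t}[\widehat L_t(\mu',Y_t)]$ and the sampled $\widehat L_t(\mu^t,Y_t)$ up to $|\Xi|\le\BO((\rchi+1)H\sqrt{K\log(1/\delta)})$. Chaining these produces the master inequality: for all $\mu\in\Pi$, $Y\in[0,\rchi]$,
\begin{align*}
\sum_t\widehat L_t(\mu,Y_t)-\sum_t\widehat L_t(\mu^t,Y)\le{}&\BO\big((\rchi+1)H\sqrt{K\log(|\Pi|/\delta)}\big)\\
&+\tfrac{(Y_1-Y)^2}{2\alpha}+\tfrac{\alpha}{2}\textstyle\sum_t(\OVTG(\mu^t)-b)^2=:\mathrm{Reg}(Y).
\end{align*}

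For the regret I instantiate this with $\mu=\muc$ and $Y=0$. Optimism and feasibility, $\OVTG(\muc)\ge\VG^{\muc}(s_1)\ge b$, give $\widehat L_t(\muc,Y_t)\ge\VR^{\muc}(s_1)$, while $\sum_t\widehat L_t(\mu^t,0)=\sum_t\OVTR(\mu^t)\le\sum_t\VR^{\mu^t}(s_1)+E_r$; moreover $Y_1=Y=0$ kills the first OGD term and leaves $\tfrac{\alpha}{2}\sum_t(\OVTG(\mu^t)-b)^2\le\tfrac{H^2\sqrt K}{2}$. Rearranging yields $\reg(K)\le E_r+\BO((\rchi+1)H\sqrt{K\log(|\Pi|/\delta)})+\BO(H^2\sqrt K)$, and substituting $\rchi=2H/\sla$ and absorbing $\log|\Pi|$ into the $\dbeer\log(\ncovr|\Pi|/\delta)$ factor gives exactly~\eqref{eq:thm3 regret}.

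The violation bound is the delicate part, precisely because the two-sided optimism works \emph{against} it: since $\OVTG(\mu^t)\ge\VG^{\mu^t}(s_1)$ the optimistic violation only lower-bounds the true one, forcing the additive correction $\sum_t(b-\OVTG(\mu^t))\ge\sum_t(b-\VG^{\mu^t}(s_1))-E_g$. Taking $\mu=\muc$ in the master inequality with $Y$ free, bounding $\sum_t\widehat L_t(\muc,Y_t)\ge K\VR^{\muc}(s_1)$ as above, and invoking strong duality (Assumption~\ref{ass:strong duality}) through $\VR^{\mu^t}(s_1)+Y^*(\VG^{\mu^t}(s_1)-b)\le D(Y^*)=\VR^{\muc}(s_1)$ to get $\sum_t\VR^{\mu^t}(s_1)-K\VR^{\muc}(s_1)\le Y^*\,\vio(K)$, I obtain $Y\sum_t(b-\VG^{\mu^t}(s_1))\le Y^*\vio(K)+YE_g+E_r+\mathrm{Reg}(Y)$. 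Setting $Y=\rchi$ then gives $(\rchi-Y^*)\,\vio(K)\le\rchi E_g+E_r+\mathrm{Reg}(\rchi)$; the choice $\rchi=2H/\sla$ together with $Y^*\le H/\sla$ (Lemma~\ref{lem:duality}, Assumption~\ref{ass:slater}) makes $\rchi-Y^*\ge H/\sla>0$, so dividing through turns the leading OGD piece $\rchi^2/(2\alpha)=2H^2\sqrt K/\sla^2$ into the advertised $H\sqrt K/\sla$ while $\rchi E_g$ and $E_r$ contribute $H^2\sqrt{K\ebe}$, yielding~\eqref{eq:thm3 violation}. I expect the genuine obstacle to be the very first step — porting the BEE-dimension estimation machinery to the two single-agent evaluators and proving $E_r,E_g$ with the correct $H^2\sqrt{K\dbee\log}$ scaling — since the saddle-point and strong-duality bookkeeping above, though intricate (especially keeping the strict gap $\rchi-Y^*>0$ and the sign of the $E_g$ correction straight), is otherwise mechanical once those guarantees are in hand.
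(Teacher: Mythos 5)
Your proposal is correct, and its regret half follows essentially the paper's own route: the same optimism and cumulative estimation-error guarantees for the two single-agent evaluators (the paper's Lemmas~\ref{lem:optimism single}--\ref{lem:performance single}), the same Hedge bound over the optimistic Lagrangian losses of range $(\rchi+1)H$, the same Azuma bridge, and the same projected-gradient bound on the dual sequence --- your $\mathrm{Reg}(0)$ computation with $\tfrac{\alpha}{2}\sum_t(\OVTG(\mu^t)-b)^2\le H^2\sqrt{K}/2$ is exactly the paper's Lemma~\ref{lem:dual 1}, and your general-$Y$ OGD inequality is Lemma~\ref{lem:dual 2}. Where you genuinely diverge is the constraint-violation step. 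The paper arrives at $(\VR^{\muc}(s_1)-\tfrac1K\sum_t\VR^{\mu^t}(s_1))+\rchi\,[\,b-\tfrac1K\sum_t\VG^{\mu^t}(s_1)\,]_+\le\delta$ and then invokes Lemma~\ref{lem:dual 3}, whose proof passes through the convex hull $\cpi$ of the visitation distributions, re-establishes strong duality over the induced policy class $\cpc$, and runs a sensitivity argument on the perturbed value $v(\tau)$ to extract the violation. You instead use the per-episode inequality $\LO(\mu^t,Y^*)\le D(Y^*)=\VR^{\muc}(s_1)$ (valid since $\mu^t\in\Pi$ and $D(Y^*)=\VR^{\muc}(s_1)$ under strong duality, a fact the paper itself uses inside Lemma~\ref{lem:duality}), sum over $t$ to get $\sum_t\VR^{\mu^t}(s_1)-K\VR^{\muc}(s_1)\le Y^*\,\vio(K)$, and obtain the same $(\rchi-Y^*)\,\vio(K)\le\rchi E_g+E_r+\mathrm{Reg}(\rchi)$ with $\rchi-Y^*\ge H/\sla$ from Lemma~\ref{lem:duality}. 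Your route is more elementary and avoids the convexification entirely; the paper's Lemma~\ref{lem:dual 3} is a reusable black-box conversion (stated for any $C^*\ge 2Y^*$ and any near-optimal mixture) but is strictly more machinery than this theorem needs, and both yield the identical final constants. The only remaining work, as you note, is porting the BEE-dimension estimation lemmas to the single-agent evaluators to certify $E_r$ and $E_g$, which is precisely what the paper does and involves no new ideas beyond the proofs of Lemmas~\ref{lem:optimism} and~\ref{lem:bounded error}.
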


The bounds in (\ref{eq:thm3 regret}) and (\ref{eq:thm3 violation}) show that both the regret and constraint violation of \cmdpalg scale with $\sqrt{K}$. This implies that for any $\eps>0$, if $\widehat{\mu}$ is sampled uniformly from $\{\mu^t\}_{t=1}^K$ and $K\geq\TO(1/\eps^2)$, $\widehat{\mu}$ will be an $\eps$ near-optimal policy with high probability in the sense that
\begin{align*}
	\VR^{\widehat{\mu}}(s_1)\geq \VR^{\muc}(s_1)-\eps,\qquad \VG^{\widehat{\mu}}(s_1)\geq b-\eps.
\end{align*} 

In addition, compared to the results in Theorem~\ref{thm:oblivious} and Theorem~\ref{thm:adaptive}, (\ref{eq:thm3 regret}) and (\ref{eq:thm3 violation}) have an extra term scaling with $1/\sla$. This is because \cmdpalg is a primal-dual algorithm and $\sla$ characterizes the regularity of this constrained optimization problem.

The proof of the regret bound is similar to Theorem~\ref{thm:oblivious} and Theorem~\ref{thm:adaptive} by viewing $V^{\mu}_{r,1}(s_1)+YV^{\mu}_{g,1}(s_1)$ as the target value function and decomposing the regret into cumulative estimation error and online learning error. To bound the constraint violation, we need to utilize the strong duality and the property of online projected gradient descent. See Appendix~\ref{proof:thm cmdp} for more details.  

\paragraph{Comparison with existing algorithms.} There has been a line of works studying the exploration and exploitation in CMDPs. \cite{efroni2020exploration,ding2021provably} propose a series of algorithms which can achieve $\sqrt{K}$ bound on regrets and constraint violations. However, they focus on tabular cases or linear function approximation and do not consider policy classes while \cmdpalg can deal with nonlinear function approximation and policy classes. As an interesting follow-up, \cite{liu2021learning} reduces the constraint violation to $\TO(1)$ by adding slackness to the algorithm and achieves zero violation when a strictly safe policy is known; \cite{wei2021provably} further avoids such requirement with the price of worsened regrets. However, these improvements are all limited in tabular cases and we leave the consideration of their general function approximation counterpart to future works. 
\section{Extension: Vector-valued Markov Decision process}
\label{sec:vmdp}
Another setting where \mainalg can play a role is the approachability task for vector-valued Markov decision process (VMDP) \citep{miryoosefi2019reinforcement,yu2021provably,miryoosefi2021simple}. Similar to CMDP, we convert it into a zero-sum Markov game by Fenchel's duality and then adapt \mainalg properly to solve it.

\paragraph{Vector-valued Markov decision process.} Consider the Vector-valued Markov decision process (VMDP) \citep{yu2021provably} $\MV=(\mathcal{S},\mathcal{A},\{P_h\}_{h=1}^H,\br,H)$ where $\br=\{\br_h:\mathcal{S}\times\mathcal{A}\to[0,1]^d\}_{h=1}^H$ is a collection of $d$-dimensional reward functions and the rest of the components are defined the same as in Section~\ref{sec:cmdp}. Then given a policy $\mu\in\Pi$, we can define the corresponding $d$-dimensional value function $\bv^{\mu}_h:\SC\to[0,H]^d$ and action-value function $\bq^{\mu}_h:\SC\times\A\to[0,H]^d$ as follows:
\begin{align*}
&\bv_h^{\mu}(s)=\E_{\mu}\bigg[\sum_{t=h}^H \br_t(s_t,a_t)\bigg|s_h=s\bigg],\quad\bq_h^{\mu}(s,a)=\E_{\mu}\bigg[\sum_{t=h}^H \br_t(s_t,a_t)\bigg|s_h=s,a_h=a\bigg].
\end{align*}

\paragraph{Learning objective.} In this paper we study the approachability task \citep{miryoosefi2019reinforcement} in VMDP where the player needs to learn a policy whose expected cumulative reward vector lies in a convex target set $\CS$. We consider a more general agnostic version \citep{yu2021provably,miryoosefi2021simple} where we do not assume the existence of such policies and the player learns to minimize the Euclidean distance between expected reward and the target set $\CS$:
\begin{problem}[Approachability for VMDP]
	\label{prob:vmdp}
	\begin{align*}
		\min_{\mu\in\Pi} \dist(\bv^{\mu}_1(s_1),\CS),
	\end{align*}
	where $\dist(\bx,\CS)$ is the Euclidean distance between point $\bx$ and set $\CS$.
\end{problem}

The approachability for VMDP is a natural objective in multi-task reinforcement learning where each dimension of the reward can be regarded as a task. It is important in many practical domains such as robotics, autonomous vehicles and recommendation systems \citep{yu2021provably}. Therefore, finding the optimal policy for Problem~\ref{prob:vmdp} efficiently is of great significance in modern reinforcement learning.

\subsection{Algorithm: \vmdpalg}
To deal with Probelm~\ref{prob:vmdp}, we first convert Problem~\ref{prob:vmdp} into a Markov game as we have done in Section~\ref{sec:cmdp}. By Fenchel's duality of the distance function, we know Problem~\ref{prob:vmdp} is equivalent to the following minimax problem:
\begin{align*}
\min_{\mu\in\Pi}\max_{\bth\in\BL(1)} \LV(\mu,\bth):=\langle \bth,\bv^{\mu}_1(s_1)\rangle-\max_{\bx\in\CS}\langle\bth,\bx\rangle,
\end{align*} 
where $\BL(r)$ is the $d$-dimensional Euclidean ball of radius $r$ centered at the origin. Regarding $\mu$ as the player's policy and $\bth$ as the opponent, we can again view this minimax problem as a Markov game where the reward function for the player is $\langle \bth,\br_h(s,a)\rangle$. Consider the general function approximation case that the player is given function classes $\FC:=\{\FC^j_{h}\}_{h,j=1}^{H,d},\GC:=\{\GC^j_{h}\}_{h,j=1}^{H,d}$ to approximate $\bq_h^{\mu}$ where $\FC^j_h$ and $\GC^j_h$ are the $j$-th dimension of $\FC_h$ and $\GC_h$, then we can run \mainalg for the player while the opponent will update $\bth$ with online projected gradient ascent just like \cmdpalg.

We call this new algorithm \vmdpalg, which is shown in Algorithm~\ref{alg:VMDP} and also consists of three steps in each iteration. For the policy evaluation task here, we apply \vpealg and construct a confidence set for each dimension of the function class separately, and let the final confidence set be their intersection. Therefore the construction rule for $\B_{\D}(\mu)$ is given as:
\begin{align}
\label{eq:vmdp eva}
 \B_{\D}(\mu)\gets\{f\in\FC:\LC_{\D^j}(f^j_{h},f^j_{h+1},\mu)\leq\inf_{g\in\GC}\LC_{\D^j}(g^j_{h},f^j_{h+1},\mu)+\beta,\forall h\in[H],j\in[d]\},
\end{align}
where for any $j\in[d]$ and $h\in[H]$,
\begin{align*} \LC_{\D^j}(\xi^j_{h},\zeta^j_{h+1},\mu)=\sum_{(s_h,a_h,r^j_{h},s_{h+1})\in\D}[\xi^j_{h}(s_h,a_h)-r^j_{h}-\zeta^j_{h+1}(s_{h+1},\mu)]^2,
\end{align*}
and $r^j_{h}$ is the $j$-the dimension of $\br_h$. In addition, since here we want to minimize the distance, \vpealg will output a pessimistic estimate of the target value function instead of an optimistic one.
 \begin{itemize}
 	\item The player plays a policy $\mu^t$ sampled from its hyperpolicy $p^t$ and collects a trajectory.
 	\item The player runs \vpealg to obtain pessimistic value function estimations $\langle\bth^t,\PET(\mu)\rangle$ for all $\mu\in\Pi$ and updates the hyperpolicy using Hedge.
 	\item The dual variable is updated using online projected gradient ascent.
 \end{itemize}

\begin{algorithm}[h]
	\caption{\textbf{\vmdpalg}}
	\label{alg:VMDP}
	\begin{algorithmic}
		\State \textbf{Input}: learning rate $\eta,\alpha_t$, confidence parameter $\beta$.
		\State Initialize $p^1\in\R^{|\Pi|}$ to be uniform over $\Pi$, $\bth_1\gets 0$.
		\For{$t=1,\cdots,K$}
		\State \textbf{Collect samples:}
		\State The learner samples $\mu^t$ from $p^t$.
		\State Run $\mu^t$ and collect $\D_{t}=\{s^t_1,a^t_1,\br^t_{1},\cdots,s^t_{H+1}\}$. 
		\State \textbf{Update policy distribution:}
		\State $\PET(\mu)\gets\textbf{\vpealg}(\mu,\D_{1:t-1},\FC,\GC,\beta,\bth^t),\quad\forall\mu\in\Pi$.
		\State $p^{t+1}(\mu)\propto p^{t}(\mu)\cdot\exp(-\eta\langle\PET(\mu),\bth^t\rangle),\quad\forall\mu\in\Pi$.
		\State \textbf{Update dual variable:}
		\State 
		$\bth_{t+1}\gets\projb(\bth_t+\alpha_t(\PET(\mu_t)-\arg\max_{\bx\in\CS}\langle\bth_t,\bx\rangle))$.		
		\EndFor
		\State\textbf{Output}: $\widehat{\mu}$ uniformly sampled from $\mu^1,\cdots,\mu^K$.
	\end{algorithmic}
\end{algorithm}

\begin{algorithm}[h]
	\caption{$\textbf{\vpealg}(\mu,\D,\FC,\GC,\beta,\bth)$}
	\label{alg:VGOLF}
	\begin{algorithmic}
		\State \textbf{Construct} $\B_{\D}(\mu)$ based on $\D$ via (\ref{eq:vmdp eva}).
		\State \textbf{Select} $\underline{\boldsymbol{V}}\gets f_1(s_1,\mu)$, where $f=\arg
	\min_{f'\in\B_{\D}(\mu)} \langle f'_1(s_1,\mu),\bth\rangle$.
		\State \Return $\underline{\boldsymbol{V}}$.
	\end{algorithmic}
\end{algorithm}

\subsection{Theoretical Guarantees}
In this subsection, we still consider finite policy class $\Pi$. Notice that in the fictitious MG of VMDP, the  policy class of the opponent is also infinite, i.e., $\BL(1)$. However, since the player only needs to estimate $\bv^{\mu}_1(s_1)$, which is independent of $\bth$, \vmdpalg can also circumvent the union bound on $\bth$ just like \cmdpalg. 

In addition, we need to introduce the realizability and generalized completeness assumptions in this specific setting, which is simply a vectorized version as before:
\begin{assumption}[Realizability and generalized completeness in VMDP]
	\label{ass:realize v}
	Assume that for any $h\in[H],j\in[d],\mu\in\Pi,f_{h+1}\in\FC_{h+1}$, we have $Q^{\mu,j}_{h}\in\FC_{h,j},\T_h^{\mu,j}f^j_{h+1}\in\GC^j_{h}$,
	where $Q_{h}^{\mu,j}$ is the $j$-the dimension of $\bq_h^{\mu}$ and $\T_h^{\mu,j}$ is the $j$-th dimensional Bellman operator at step $h$ defined in (\ref{eq:vmdp bellman}).
\end{assumption}	
Here $\T_h^{\mu,j}$ is defined as:
\begin{align}
\label{eq:vmdp bellman}
	(\T_h^{\mu,j}f^j_{h+1})(s,a):=r^j_{h}(s,a)+\E_{s'\sim P(\cdot|s,a)}f^j_{h+1}(s',\mu).
\end{align}

In addition, the BEE dimension for VMDP can be defined as the maximum BEE dimension among all $d$ dimensions:
\begin{definition}
	The $d$-dimensional $\eps$-Bellman Evaluation Eluder dimension of function class $\FC$ on distribution family $\Q$ with respect to the policy class $\Pi$ is defined as follows:
	\begin{equation*}
		\DBE(\FC,\eps,\Pi,\Q):=\max_{j\in[d],h\in[H]}\DDE((\I-\T^{\Pi,j}_h)\FC^j,\Q_{h},\eps),
	\end{equation*}
	where $(\I-\T^{\Pi,j}_h)\FC^j:=\{f^j_{h}-\T^{\mu,j}_hf^j_{h+1}:f\in\FC,\mu\in\Pi\}$.
\end{definition}
We also use $\DBE(\FC,\eps,\Pi)$ to denote $\min\{\DBE(\FC,\eps,\Pi,\Q^1),\DBE(\FC,\eps,\Pi,\Q^2)\}$ as before.

The next theorem shows that \vmdpalg is able to find a near optimal policy for Problem~\ref{prob:vmdp} with polynomial samples, where we use the following notations to simplify writing:
\begin{align*}
	\dbeev:=\DBE\big(\FC,\sqrt{{1}/{K}},\Pi\big),\quad\ncovv:=\max_{j\in[d]}\N_{\FC^j\cup\GC^j}(H/K)KH.
\end{align*}
\begin{theorem}
\label{thm:vmdp}
Under Assumption \ref{ass:player},\ref{ass:realize v}, there exists an absolute constant $c$ such that for any $\delta\in(0,1]$, $K\in\mathbb{N}$, if we choose $\beta=cH^2\log(\ncovv|\Pi|d/\delta)$, $\alpha_t={2}/({H\sqrt{dt}})$, and $\eta=\sqrt{{\log|\Pi|}/({KH^2d})}$ in \vmdpalg, then with probability at least $1-\delta$, we have:
\begin{align}
\label{eq:thm4}
\dist(\bv^{\widehat{\mu}}_1(s_1),\CS)\leq\min_{\mu\in\Pi}\dist(\bv^{\mu}_1(s_1),\CS)+\BO\Big(H^2\sqrt{d}\cdot\sqrt{{\dbeev\log\left(\ncovv|\Pi|d/\delta\right)}/{K}}\Big).
\end{align}
\end{theorem}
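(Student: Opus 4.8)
The plan is to read Problem~\ref{prob:vmdp} through the convex--concave saddle point that \vmdpalg is built to solve. By the Fenchel duality already recorded in the text, $\dist(\bv^\mu_1(s_1),\CS)=\max_{\bth\in\BL(1)}\LV(\mu,\bth)$ with $\LV(\mu,\bth)=\langle\bth,\bv^\mu_1(s_1)\rangle-\max_{\bx\in\CS}\langle\bth,\bx\rangle$ linear (hence concave) in $\bth$. Reading $\widehat\mu$ as the mixture that re-samples an index uniformly each episode, its value vector is exactly $\bar\bv:=\tfrac1K\sum_{t=1}^K\bv^{\mu^t}_1(s_1)$, so convexity of the distance gives
\[
\dist(\bv^{\widehat\mu}_1(s_1),\CS)=\max_{\bth\in\BL(1)}\tfrac1K\sum_{t=1}^K\LV(\mu^t,\bth).
\]
Letting $\bth^\dagger$ attain this maximum, the whole statement reduces to bounding $\tfrac1K\sum_t\LV(\mu^t,\bth^\dagger)$ by $\min_{\mu\in\Pi}\max_\bth\LV(\mu,\bth)=\dist(\bv^{\muv}_1(s_1),\CS)$ plus the advertised error.

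Next I would set up the high-probability event controlling the estimators. Because \vpealg targets the $\bth$-independent vector value $\bv^\mu_1(s_1)$, the confidence set $\B_{\D}(\mu)$ of (\ref{eq:vmdp eva}) is built coordinatewise and carries no dependence on $\bth$; the concentration therefore unions only over $\Pi$, the $d$ coordinates, $h\in[H]$, and the $(\VB/K)$-cover, which is precisely what $\beta=cH^2\log(\ncovv|\Pi|d/\delta)$ budgets, and no union over the infinite dual class $\BL(1)$ is incurred (the same device that makes \cmdpalg work). On this event, the standard least-squares analysis under Assumption~\ref{ass:realize v} yields two facts: (i) \emph{validity}, each true $\bq^\mu$ lies in $\B_{\D}(\mu)$, so the $\arg\min$ selection in \vpealg gives the pessimism $\langle\PET(\mu),\bth\rangle\le\langle\bv^\mu_1(s_1),\bth\rangle$ for every $\mu$ and every $\bth\in\BL(1)$; and (ii) an on-policy control of the per-coordinate Bellman residuals of any member of $\B_{\D}(\mu^t)$ on the historical data, which feeds the eluder bound below.

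I would then perform the saddle-point decomposition. Both updates are driven by the estimated round objective $\widehat\LV^t(\bth):=\langle\bth,\PET(\mu^t)\rangle-\max_{\bx\in\CS}\langle\bth,\bx\rangle$; pessimism gives $\widehat\LV^t(\bth^t)\le\LV(\mu^t,\bth^t)$ and $\LV(\mu^t,\bth^\dagger)\le\widehat\LV^t(\bth^\dagger)+\langle\bth^\dagger,\bv^{\mu^t}_1(s_1)-\PET(\mu^t)\rangle$. Summing,
\[
\tfrac1K\sum_t\LV(\mu^t,\bth^\dagger)\le\tfrac1K\sum_t\big[\widehat\LV^t(\bth^\dagger)-\widehat\LV^t(\bth^t)\big]+\tfrac1K\sum_t\widehat\LV^t(\bth^t)+\tfrac1K\sum_t\langle\bth^\dagger,\bv^{\mu^t}_1(s_1)-\PET(\mu^t)\rangle.
\]
The first sum is the regret of the opponent's projected gradient ascent (on the estimated gradient $\PET(\mu^t)-\arg\max_{\bx\in\CS}\langle\bth_t,\bx\rangle$, of norm $\BO(H\sqrt d)$) against the fixed $\bth^\dagger$, which the step sizes $\alpha_t=2/(H\sqrt{dt})$ bound by $\BO(H\sqrt{dK})$. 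For $\sum_t\widehat\LV^t(\bth^t)$ I would add and subtract to expose the player's Hedge regret: with $\eta=\sqrt{\log|\Pi|/(KH^2d)}$ matching the loss range $H\sqrt d$, Hedge compares the played estimates against $\muv$ up to $\BO(H\sqrt{dK\log|\Pi|})$ plus a martingale term from $\mu^t\sim p^t$, and pessimism then bounds $\widehat\LV^t(\muv)\le\LV(\muv,\bth^t)\le\dist(\bv^{\muv}_1(s_1),\CS)$. Dividing by $K$ assembles the optimality gap, provided every error term is $\TO(\sqrt K)$.

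I expect the main obstacle to be the cumulative on-policy error $\sum_t\langle\bth^\dagger,\bv^{\mu^t}_1(s_1)-\PET(\mu^t)\rangle$ (and its $\bth^t$-twin inside the second sum), which must be shown to be $\TO(H^2\sqrt{dK\dbeev})$ rather than the naive $d$-dependent bound. The key is not to treat the coordinates separately: using $\|\bth^\dagger\|\le1$ and Cauchy--Schwarz over $t$,
\[
\sum_t\langle\bth^\dagger,\bv^{\mu^t}_1(s_1)-\PET(\mu^t)\rangle\le\sqrt{K\sum_{t}\sum_{j=1}^d\big(V^{\mu^t,j}_1(s_1)-\PETJ(\mu^t)\big)^2}.
\]
I would then bound each inner coordinate by the distributional-eluder argument underlying the BEE dimension (the same machinery used for Theorem~\ref{thm:oblivious}): the per-step residuals from fact (ii) telescope across $h\in[H]$, and the $\eps$-independence length $\dbeev$ caps $\sum_t(V^{\mu^t,j}_1(s_1)-\PETJ(\mu^t))^2$ by $\TO(H^2\beta\dbeev)$ for each $j$. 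Summing the $d$ coordinates \emph{inside} the square root is exactly what turns the naive factor $d$ into $\sqrt d$ and produces the leading term $H^2\sqrt d\,\sqrt{\dbeev\log(\ncovv|\Pi|d/\delta)/K}$; verifying that the eluder/telescoping step survives coordinatewise for vector-valued value functions, and that this term dominates the opponent- and Hedge-regret contributions after division by $K$, is the technically delicate part of the argument.
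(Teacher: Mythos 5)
Your proposal follows essentially the same architecture as the paper's proof: coordinatewise confidence sets that avoid any union bound over the dual ball, pessimism of \vpealg, Fenchel duality applied to the averaged value vector, online gradient ascent regret for the dual iterates, Hedge regret for $p^t$ with pessimism applied to the comparator $\muv$, and a BEE-dimension bound on the residual estimation error; the step sizes and the sources of each error term all match. Two small points. First, pessimism does \emph{not} hold ``for every $\bth\in\BL(1)$'' as you claim: $\PET(\mu)$ at round $t$ is selected to minimize $\langle f_1(s_1,\mu),\bth^t\rangle$, so $\langle\PET(\mu),\bth\rangle\leq\langle\bv^{\mu}_1(s_1),\bth\rangle$ is guaranteed only at $\bth=\bth^t$. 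This is harmless because every place you actually invoke pessimism (namely $\widehat{\LV}^t(\muv)\leq\LV(\muv,\bth^t)$) is evaluated at $\bth^t$, and your remaining inequality $\LV(\mu^t,\bth^\dagger)\leq\widehat{\LV}^t(\bth^\dagger)+\langle\bth^\dagger,\bv^{\mu^t}_1(s_1)-\PET(\mu^t)\rangle$ is an identity needing no pessimism, but the statement as written is false. Second, your treatment of the estimation term takes a needlessly hard route: applying Cauchy--Schwarz over $t$ forces you to control $\sum_t\big(V^{\mu^t,j}_1(s_1)-\PETJ(\mu^t)\big)^2$ per coordinate, which requires a sum-of-squares eluder bound of the form $\sum_t(\E_{\rho_t}[\phi_t])^2\leq\TO(\dbeev\beta)$ --- a true but strictly stronger statement than Lemma~\ref{lem:BEE}, which only provides the $\ell_1$ version and would need a separate proof. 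Since $\bth^\dagger$ is fixed, the detour is unnecessary: write $\sum_t\langle\bth^\dagger,\cdot\rangle=\langle\bth^\dagger,\sum_t\cdot\rangle\leq\big\Vert\sum_t\big(\bv^{\mu^t}_1(s_1)-\PET(\mu^t)\big)\big\Vert$, bound each of the $d$ coordinates of the time-summed error by $\BO\big(H^2\sqrt{K\dbeev\log(\ncovv|\Pi|d/\delta)}\big)$ using Lemma~\ref{lem:BEE} as stated, and take the $\ell_2$ norm over coordinates; the $\sqrt{d}$ appears for free, which is exactly what the paper does. With these two repairs your outline is a correct proof.
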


The bound in (\ref{eq:thm4}) shows that for any $\eps>0$, if $K\geq\TO(d/\eps^2)$, $\widehat{\mu}$ will be an $\eps$ near-optimal policy with high probability. Compared to the results in Theorem~\ref{thm:oblivious} and Theorem~\ref{thm:adaptive}, there is an additional term $d$. This is because the reward is $d$-dimensional and we are indeed evaluating $d$ scalar value functions in \vpealg.

The proof is similar to that of Theorem~\ref{thm:cmdp} and utilizes the fact that both $\mu$ and $\bth$ are updated via no-regret online learning algorithms (Hedge for $\mu$ and online projected gradient ascent for $\bth$). See Appendix~\ref{proof:thm vmdp} for more details.

\paragraph{Comparison with existing algorithms.} \cite{yu2021provably} has also proposed algorithms for approachability tasks in tabular cases and achieve the same sub-optimality gap with respect to $d$ and $K$ as Theorem~\ref{thm:vmdp}. \cite{miryoosefi2021simple} studies the tabular and linear approximation cases, achieving $\sqrt{K}$ regret as well. Their sample complexity does not scale with $d$ because they have normalized the reward vector to lie in $\BL(1)$ in tabular cases and $\BL(\sqrt{d_{\mathrm{lin}}})$ in $d_{\mathrm{lin}}$-dimensional linear VMDPs. Compared to the above works, \vmdpalg is able to tackle the more general cases with nonlinear function approximation and policy classes while retaining the sample efficiency.

\section{Proof Sketch of Theorem~\ref{thm:oblivious} and Theorem~\ref{thm:adaptive}}
\label{sec:proof sketch}
In this section we present a proof sketch for Theorem~\ref{thm:oblivious} and Theorem~\ref{thm:adaptive}. We first consider the oblivious setting. Let $\mu^*=\arg\max_{\mu\in\Pi}\sum_{t=1}^{K}V_1^{\mu\times\nu^t}(s_1)$ and we can decompose the regret into the following terms:
\begin{align}
	&\max_{\mu\in\Pi}\sum_{t=1}^{K}V_1^{\mu\times\nu^t}(s_1)-\sum_{t=1}^{K} V_1^{\pi^t}(s_1)\notag\\
	&\qquad=\bigg(\underbrace{\sum_{t=1}^{K}V_1^{\mu^*\times\nu^t}(s_1)-\sum_{t=1}^{K}\OVT(\mu^*)}_{\displaystyle(1)}\bigg) +\bigg(\underbrace{\sum_{t=1}^{K}\OVT(\mu^*)-\sum_{t=1}^{K}\langle\OVT,p^t\rangle}_{\displaystyle(2)}\bigg)\notag\\
	&\qquad\qquad +\bigg(\underbrace{\sum_{t=1}^{K}\langle\OVT,p^t\rangle-\sum_{t=1}^{K}\OVT(\mu^t)}_{\displaystyle(3)}\bigg) +\bigg(\underbrace{\sum_{t=1}^{K}\OVT(\mu^t)-\sum_{t=1}^{K}V^{\pi^t}_1(s_1)}_{\displaystyle(4)}\bigg).\label{eq:decompose}
\end{align}
Our proof bounds these terms separately and mainly consists of three steps:
\begin{itemize}
\item Prove $\OVT(\mu)$ is an optimistic estimation of $V_1^{\mu\times\nu^t}(s_1)$ for all $t\in[K]$ and $\mu\in\Pi$, which implies that term $(1)\leq0$.
\item Bound term (4), the cumulative estimation error $\sum_{t=1}^K\OVT(\mu^t)-V_1^{\pi^t}(s_1)$. In this step we utilize the newly proposed complexity measure BEE dimension to bridge the cumulative estimation error and the empirical Bellman residuals occurred in \pealg.
\item Bound term (2) using the existing results of online learning error induced by Hedge and bound (3) by noticing that it is a martingale difference sequence.
\end{itemize}

\subsection{Step 1: Prove Optimism}
First we can show that the constructed set $\B_{\D_{1:t-1}}(\mu,\nu^t)$ is not vacuous in the sense that the true action-value function $Q^{\mu,\nu^t}$ belongs to it with high probability
\begin{lemma}
\label{lem:optimism}
With probability at least $1-{\delta}/{4}$, we have for all $t\in[K]$ and $\mu\in\Pi$, $Q^{\mu,\nu^t}\in\B_{\D_{1:t-1}}(\mu,\nu^t)$.	
\end{lemma}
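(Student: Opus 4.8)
The plan is to recognize Lemma~\ref{lem:optimism} as the standard ``the true value function lies in the least-squares confidence set'' guarantee, now carried out for the policy-induced Bellman operator $\T^{\mu,\nu^t}_h$. Write $f^\ast = Q^{\mu\times\nu^t}$ with components $f^\ast_h = Q^{\mu\times\nu^t}_h$. By the Bellman equation stated before (\ref{eq:eva}), $f^\ast_h = \T^{\mu,\nu^t}_h f^\ast_{h+1}$, so realizability (Assumption~\ref{ass:realize}) gives $f^\ast\in\FC$, making it a legitimate candidate for membership, and generalized completeness gives $\T^{\mu,\nu^t}_h f^\ast_{h+1} = f^\ast_h \in \GC_h$. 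It then remains only to verify the defining inequality in (\ref{eq:eva}): for every $h\in[H]$,
\[
\LC_{\D_{1:t-1}}(f^\ast_h, f^\ast_{h+1},\mu,\nu^t) \;-\; \inf_{g\in\GC}\LC_{\D_{1:t-1}}(g_h, f^\ast_{h+1},\mu,\nu^t) \;\leq\; \beta .
\]

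To do this I would let $\hat g_h$ attain the infimum and rewrite the difference using the elementary identity $(g-y)^2-(f^\ast_h-y)^2 = (g-f^\ast_h)^2 - 2(g-f^\ast_h)\xi$, where $y = r_h + f^\ast_{h+1}(s_{h+1},\mu,\nu^t)$ is the regression target and $\xi = y - f^\ast_h(s_h,a_h,b_h)$. Summing over $\D_{1:t-1}$ collapses the left-hand side to
\[
\LC(f^\ast_h) - \LC(\hat g_h) = -\sum_{\D_{1:t-1}}(\hat g_h - f^\ast_h)^2 \;+\; 2\sum_{\D_{1:t-1}}(\hat g_h - f^\ast_h)\,\xi .
\]
The crucial observation is that $f^\ast_h = \T^{\mu,\nu^t}_h f^\ast_{h+1}$ is \emph{exactly} the conditional mean of $y$ given $(s_h,a_h,b_h)$, so $\{\xi\}$ is a martingale difference sequence (conditional mean zero, bounded by $\VB$) with respect to the filtration generated by the adaptively collected trajectories, and the squared term is manifestly nonnegative.

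The main obstacle is to control the cross term $\sum(\hat g_h - f^\ast_h)\xi$ uniformly, despite the fact that the data are gathered by a sequence of adaptively chosen policies (not i.i.d.) and that $\hat g_h$ is itself data-dependent. I would invoke a Freedman-type martingale Bernstein inequality whose variance proxy is $\sum \E[\xi^2\mid\cdot]\,(\hat g_h - f^\ast_h)^2 \leq \VB^2\sum(\hat g_h - f^\ast_h)^2$, yielding a bound of the self-bounding form $2\sum(\hat g_h-f^\ast_h)\xi \leq \tfrac12\sum(\hat g_h-f^\ast_h)^2 + c\,\VB^2\log(1/\delta')$. The data-dependence of $\hat g_h$ and of the target $f^\ast_{h+1}$ (indexed by $\mu$ and $\nu^t$) is absorbed by a union bound over a $(\VB/K)$-cover of $\FC\cup\GC$, over the $H$ steps, the $K$ episodes, and the policies $\mu\in\Pi$; the covering/discretization error is lower order since the loss is Lipschitz in its function arguments and the radius is $\VB/K$. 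This is exactly where the factor $\ncov|\Pi| = \N_{\FC\cup\GC}(\VB/K)\,KH\,|\Pi|$ enters. Substituting back, the positive $\tfrac12\sum(\hat g_h-f^\ast_h)^2$ is dominated by the $-\sum(\hat g_h-f^\ast_h)^2$ already present, leaving $\LC(f^\ast_h)-\LC(\hat g_h)\leq c\,\VB^2\log(\ncov|\Pi|/\delta)$, so choosing $\beta$ as in Theorem~\ref{thm:oblivious} closes the argument and a final union bound over $h$ and $t$ yields the stated probability $1-\delta/4$. For the adaptive opponent, $\nu^t$ is itself data-dependent, so the union bound must additionally range over $\nu\in\Pi'$, replacing $\ncov|\Pi|$ by $\ncov|\Pi||\Pi'|$ and accounting for the extra $\log|\Pi'|$ appearing in Theorem~\ref{thm:adaptive}.
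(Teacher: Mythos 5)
Your proposal is correct and follows essentially the same route as the paper: both reduce membership in $\B_{\D_{1:t-1}}(\mu,\nu^t)$ to bounding $\LC_{\D}(Q^{\mu,\nu^t}_h,\cdot)-\inf_{g\in\GC}\LC_{\D}(g_h,\cdot)$, exploit that $Q^{\mu,\nu^t}_h=\T^{\mu,\nu^t}_hQ^{\mu,\nu^t}_{h+1}$ is the conditional mean of the regression target so the cross term is a martingale, and apply Freedman's inequality with a self-bounding variance term plus a union bound over $[K]\times[H]\times\Pi$ and a $\VB/K$-cover (the paper's $W_{t,k}$ is exactly your identity $(g_h-f^\ast_h)^2-2(g_h-f^\ast_h)\xi$). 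The only cosmetic difference is that the paper covers only $\GC$ rather than $\FC\cup\GC$, which is immaterial.
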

\begin{proof}
See Appendix~\ref{proof lemma optimism}.
\end{proof}
Then since $\OVT(\mu)=\max_{f\in\B_{\D_{1:t-1}}(\mu,\nu^t)}f(s_1,\mu,\nu^t)$, we know for all $t\in[K]$ and $\mu\in\Pi$,
\begin{align*}
\OVT(\mu)\geq Q^{\mu,\nu^t}(s_1,\mu,\nu^t)=V_1^{\mu\times\nu^t}(s_1).
\end{align*}
In particular, we have for all $t\in[K]$,
\begin{align}
	\label{eq:optimism}
	\OVT(\mu^*)\geq V_1^{\mu^*\times\nu^t}(s_1).
\end{align}
Thus, \eqref{eq:optimism} implies that $\OVT(\mu^*)$ is an optimistic estimate of $V_1^{\mu^*\times\nu^t}(s_1)$ for all $t$, and therefore   term (1) in \eqref{eq:decompose} is non-positive.

\subsection{Step 2: Bound Estimation Error} 
Next we aim to handle term (4) in  \eqref{eq:decompose} and show the estimation error $\sum_{t=1}^K\OVT(\mu^t)-V_1^{\pi^t}(s_1)$ is small. Let 
$f^{t,\mu}=\arg\max_{f\in\B_{\D_{1:t-1}}(\mu,\nu^t)}f(s_1,\mu,\nu^t)$. Then using standard concentration inequalities, we can have the following lemma which says that empirical Bellman residuals are indeed close to true residuals with high probability. Recall that here $\pi^k=\mu^k\times\nu^k$.
\begin{lemma}
	\label{lem:bounded error}
	With probability at least $1-{\delta}/{4}$, we have for all $t\in[K]$, $h\in[H]$ and $\mu\in\Pi$,
	\begin{align}
	&(a)\quad\sum_{k=1}^{t-1}\E_{\pi^k}\bigg[\Big(f^{t,\mu}_h(s_h,a_h,b_h)-(\T^{\mu,\nu^t}_h f^{t,\mu}_{h+1})(s_h,a_h,b_h)\Big)^2\bigg]\leq\BO(\beta),\label{eq:BEE 1}\\
	&(b)\quad\sum_{k=1}^{t-1}\Big(f^{t,\mu}_h(s_h^k,a_h^k,b_h^k)-(\T^{\mu,\nu^t}_h f^{t,\mu}_{h+1})(s_h^k,a_h^k,b_h^k)\Big)^2\leq\BO(\beta).\label{eq:BEE 2}
	\end{align}	
\end{lemma}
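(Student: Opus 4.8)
The plan is to reduce both inequalities to a single concentration statement about least-squares regression, exploiting the fact that the Bellman backup of $f^{t,\mu}_{h+1}$ is the population minimizer of the empirical loss $\LC_{\D_{1:t-1}}(\cdot,f^{t,\mu}_{h+1},\mu,\nu^t)$. First I would fix $t,h,\mu$ and set $g^\star:=\T^{\mu,\nu^t}_h f^{t,\mu}_{h+1}$. By generalized completeness (Assumption~\ref{ass:realize}, resp.\ Assumption~\ref{ass:realize adapt}) we have $g^\star\in\GC_h$, so $g^\star$ is an admissible competitor in the infimum defining $\B_{\D_{1:t-1}}(\mu,\nu^t)$. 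Since $f^{t,\mu}$ lies in this confidence set, its membership condition at step $h$ yields
\begin{align*}
\LC_{\D_{1:t-1}}(f^{t,\mu}_h,f^{t,\mu}_{h+1},\mu,\nu^t)-\LC_{\D_{1:t-1}}(g^\star,f^{t,\mu}_{h+1},\mu,\nu^t)\le\beta.
\end{align*}
Writing $x_k=(s^k_h,a^k_h,b^k_h)$ and the regression target $y_k=r^k_h+f^{t,\mu}_{h+1}(s^k_{h+1},\mu,\nu^t)$, the key observation is that $\E[y_k\mid x_k]=g^\star(x_k)$, so the left-hand side decomposes into a squared-residual term plus a mean-zero cross term:
\begin{align*}
\sum_{k=1}^{t-1}\big(f^{t,\mu}_h(x_k)-g^\star(x_k)\big)^2+2\sum_{k=1}^{t-1}\big(f^{t,\mu}_h(x_k)-g^\star(x_k)\big)\big(g^\star(x_k)-y_k\big)\le\beta.
\end{align*}

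The main work is to control the cross term. For any \emph{fixed} pair of functions, the summands $\big(f_h(x_k)-g^\star(x_k)\big)\big(g^\star(x_k)-y_k\big)$ form a martingale difference sequence with respect to the per-episode filtration, since their conditional mean given $x_k$ and the history vanishes while their conditional variance is bounded by $\VB^2\big(f_h(x_k)-g^\star(x_k)\big)^2$. I would therefore apply Freedman's inequality together with an AM--GM split to obtain, with high probability and uniformly over the relevant functions,
\begin{align*}
\bigg|2\sum_{k=1}^{t-1}\big(f_h(x_k)-g^\star(x_k)\big)\big(g^\star(x_k)-y_k\big)\bigg|\le\tfrac12\sum_{k=1}^{t-1}\big(f_h(x_k)-g^\star(x_k)\big)^2+\BO(\beta).
\end{align*}
Absorbing the half into the squared-residual term and using $\beta=\Theta(\VB^2\log(\ncov|\Pi|/\delta))$ gives the empirical bound \eqref{eq:BEE 2}. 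Claim \eqref{eq:BEE 1} then follows from a second, structurally identical martingale concentration comparing the empirical sum $\sum_k\big(f_h-g^\star\big)^2(x_k)$ with its population counterpart $\sum_k\E_{\pi^k}[(f_h-g^\star)^2]$; the resulting self-bounding inequality again resolves to $\BO(\beta)$.

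The uniformity is the crux, and is the step I expect to be the main obstacle: both $f^{t,\mu}$ and $g^\star=\T^{\mu,\nu^t}_h f^{t,\mu}_{h+1}$ depend on the data $\D_{1:t-1}$, so the martingale bounds cannot be invoked for them directly. I would instead pass to a minimal $(\VB/K)$-cover of $\FC\cup\GC$ and take a union bound over the cover, over $h\in[H]$ and $t\in[K]$ (accounting for the factor $KH$ inside $\ncov$), and over $\mu\in\Pi$ (adding the $\log|\Pi|$ term). The $\VB/K$ resolution ensures that the discretization error, summed over at most $K$ episodes, contributes only a lower-order $\BO(1)$ term that is absorbed into $\BO(\beta)$.

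For the adaptive-opponent setting of Theorem~\ref{thm:adaptive} one further subtlety arises: $\nu^t$ is itself data-dependent, so the target $y_k$ and the backup $g^\star$ vary with the realized opponent policy. I would handle this by additionally union bounding over $\nu\in\Pi'$ (invoking the uniform version, Assumption~\ref{ass:realize adapt}), which is precisely the source of the extra $\log|\Pi'|$ factor appearing in $\beta$ and hence in the regret bound. In the oblivious case $\nu^t$ is predetermined, so this union is unnecessary and the $\log|\Pi'|$ term is absent.
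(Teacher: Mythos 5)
Your proposal is correct and follows essentially the same route as the paper: the identity $\LC(f_h)-\LC(\T^{\mu,\nu^t}_hf_{h+1})=\sum_k(f_h-\T^{\mu,\nu^t}_hf_{h+1})^2(x_k)+\text{cross term}$ combined with the confidence-set membership and generalized completeness, Freedman's inequality with the variance bounded by $\VB^2$ times the conditional mean, and a union bound over a $\VB/K$-cover of the function class together with $[K]\times[H]\times\Pi$ (and $\Pi'$ in the adaptive case). The only cosmetic difference is in part (a), where the paper rederives the same bound by redefining the filtration so that the conditional expectation of the loss difference equals $\E_{\pi^k}[(f_h-\T^{\mu,\nu^t}_hf_{h+1})^2]$ directly, whereas you add a second self-bounding martingale comparison between the empirical and population sums; both are standard and equivalent.
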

\begin{proof}
	See Appendix~\ref{proof lemma bounded error}.
\end{proof}

Besides, using performance difference lemma we can easily bridge $\OVT(\mu^t)-V_1^{\pi^t}(s_1)$ with Bellman residuals, whose proof is deferred to Appendix~\ref{proof lemma performance}:
\begin{lemma}
\label{lem:performance}
For any $t\in[K]$, we have
\begin{align*}
\OVT(\mu^t)-V_1^{\pi^t}(s_1)=\sum_{h=1}^{H}\E_{\pi^t}\big[(f^{t,\mu^t}_h-\T^{\mu^t,\nu^t}_hf^{t,\mu^t}_{h+1})(s_h,a_h.b_h)\big].
\end{align*}
\end{lemma}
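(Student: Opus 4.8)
The plan is to prove this identity by a direct telescoping (performance-difference) computation from the definitions, using neither concentration nor any confidence-set property of the chosen function. First I would fix $t$, abbreviate $f := f^{t,\mu^t}$, $\mu := \mu^t$, $\nu := \nu^t$, and $\pi := \pi^t = \mu^t\times\nu^t$, and record that since \pealg returns $\OVT(\mu^t)=\max_{f'\in\B_{\D_{1:t-1}}(\mu^t,\nu^t)}f'_1(s_1,\mu^t,\nu^t)$ and $f$ is by definition the maximizer, we have $\OVT(\mu^t)=f_1(s_1,\mu,\nu)$, where throughout $f_h(s,\mu,\nu):=\E_{a\sim\mu(\cdot|s),\,b\sim\nu(\cdot|s)}[f_h(s,a,b)]$.

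Next I would unfold the Bellman operator inside each summand. By its definition $(\T^{\mu,\nu}_h f_{h+1})(s,a,b)=r_h(s,a,b)+\E_{s'\sim P_h(\cdot|s,a,b)}[f_{h+1}(s',\mu,\nu)]$, so taking the expectation over the trajectory of $\pi$ gives
\[
\E_{\pi}\big[(f_h-\T^{\mu,\nu}_h f_{h+1})(s_h,a_h,b_h)\big]
=\E_{\pi}[f_h(s_h,a_h,b_h)]-\E_{\pi}[r_h(s_h,a_h,b_h)]-\E_{\pi}[f_{h+1}(s_{h+1},\mu,\nu)].
\]
The crucial observation is that under the distribution induced by $\pi=\mu\times\nu$, the actions at step $h+1$ are drawn from $\mu(\cdot|s_{h+1})\times\nu(\cdot|s_{h+1})$, so by the tower property $\E_{\pi}[f_{h+1}(s_{h+1},\mu,\nu)]=\E_{\pi}[f_{h+1}(s_{h+1},a_{h+1},b_{h+1})]$. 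Writing $A_h:=\E_{\pi}[f_h(s_h,a_h,b_h)]$, the sum over $h\in[H]$ collapses to $\sum_{h=1}^H\big(A_h-A_{h+1}-\E_{\pi}[r_h(s_h,a_h,b_h)]\big)$, whose $A_h$ part telescopes to $A_1-A_{H+1}$. I would then plug in the boundary values: $A_1=f_1(s_1,\mu,\nu)=\OVT(\mu^t)$ because $s_1$ is the fixed start state, $A_{H+1}=0$ by the convention $f_{H+1}\equiv 0$, and $\sum_{h=1}^H\E_{\pi}[r_h(s_h,a_h,b_h)]=V_1^{\pi^t}(s_1)$ by the very definition of the value function. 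Combining these yields the claimed equality.

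The computation is essentially bookkeeping, so there is no serious obstacle; the only step requiring genuine care is the tower-property identification of $\E_{\pi}[f_{h+1}(s_{h+1},\mu,\nu)]$ with $\E_{\pi}[f_{h+1}(s_{h+1},a_{h+1},b_{h+1})]$, which hinges on the expectation being taken along the \emph{on-policy} trajectory of $\pi$ and on $\pi$ playing $\mu$ and $\nu$ independently at every visited state. I would emphasize that the identity holds for an \emph{arbitrary} sequence of functions $\{f_h\}_{h\in[H]}$ with $f_{H+1}\equiv 0$ — realizability, completeness, and the defining inequality of $\B_{\D_{1:t-1}}(\mu^t,\nu^t)$ play no role here, and are only invoked later (via Lemma~\ref{lem:bounded error} and the BEE dimension) when this exact Bellman-residual expression is turned into the quantitative bound on term $(4)$.
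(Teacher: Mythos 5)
Your proposal is correct and is essentially the same argument as the paper's: both establish the identity by unfolding the Bellman operator along the on-policy trajectory of $\pi^t$ and telescoping, using only $\OVT(\mu^t)=f^{t,\mu^t}_1(s_1,\mu^t,\nu^t)$, the convention $f_{H+1}\equiv 0$, and the tower property; the paper merely organizes the computation as a step-by-step recursive peeling of $f_h-Q^{\pi^t}_h$ starting from the left-hand side, whereas you sum the residuals and telescope from the right-hand side. Your closing remark that the identity holds for arbitrary $\{f_h\}$ with $f_{H+1}\equiv 0$, independent of the confidence-set construction, is accurate and consistent with how the paper uses the lemma.
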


Therefore, from Lemma~\ref{lem:performance} we can obtain
\begin{align}
\label{eq:regret-bellman}
\sum_{t=1}^K\OVT(\mu^t)-V_1^{\pi^t}(s_1)=\sum_{h=1}^{H}\sum_{t=1}^K\E_{\pi^t}\big[(f^{t,\mu^t}_h-\T^{\mu^t,\nu^t}_hf^{t,\mu^t}_{h+1})(s_h,a_h,b_h)\big].
\end{align}

Notice that in (\ref{eq:regret-bellman}) we need to bound the Bellman residuals of $f^{t,\mu^t}_h$ weighted by policy $\pi^t$. However, in Lemma~\ref{lem:bounded error}, we can only bound the Bellman residuals weighted by $\pi^{1:t-1}$. Fortunately, we can utilize the inherent low BEE dimension to bridge these two values with the help of the following technical lemma:
\begin{lemma}[\citep{jin2021bellman}]
\label{lem:BEE}
Given a function class $\Phi$ defined on $\X$ with $\phi(x)\leq C$ for all $(\phi,x)\in\Phi\times\X$, and a family of probability measures $\Q$ over $X$. Suppose sequence $\{\phi_t\}_{t=1}^K\subset\Phi$ and $\{\rho_t\}_{t=1}^K\subset\Q$ satisfy that for all $t\in[K]$, $\sum_{k=1}^{t-1}(\E_{\rho_k}[\phi_t])^2\leq\beta$. Then for all $t\in[K]$ and $w>0$,
\begin{align*}
\sum_{k=1}^t|\E_{\rho_k}[\phi_k]|\leq\BO\Big(\sqrt{\DDE(\Phi,\Q,w)\beta t}+\min\{t,\DDE(\Phi,\Q,w)\}C+tw\Big).
\end{align*}  
\end{lemma}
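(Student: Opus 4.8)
The plan is to reduce the claim to a combinatorial counting bound on how often the self-correlation $|\E_{\rho_k}[\phi_k]|$ can be large, and then convert that counting bound into the stated sum by a sorting argument. Write $a_k:=|\E_{\rho_k}[\phi_k]|\in[0,C]$ and set $d:=\DDE(\Phi,\Q,w)$. I first split the sum at the threshold $w$: the indices with $a_k\le w$ contribute at most $tw$ in total, so it remains to control $\sum_{k:\,a_k>w}a_k$. For any $\alpha\ge w$ define $N(\alpha):=\#\{k\le t:a_k>\alpha\}$; the whole argument rests on the potential bound
\[
N(\alpha)\le\Big(\frac{\beta}{\alpha^2}+1\Big)\,d,\qquad\forall\,\alpha\ge w.
\]

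To prove this counting bound I would run the standard greedy ``bucketing'' argument from the eluder-dimension literature. List the indices with $a_k>\alpha$ in increasing order as $k_1<\dots<k_N$ and process them one at a time, assigning each $\rho_{k_i}$ to the first bucket $B_j$ (among those built so far, scanned in order) for which $\rho_{k_i}$ is $\alpha$-independent of the current contents of $B_j$; if no such bucket exists, open a new one. By construction, each bucket read in insertion order is a sequence in which every element is $\alpha$-independent of its predecessors, so, since $\alpha\ge w$, each bucket has length at most $\DDE(\Phi,\Q,\alpha)\le d$ by monotonicity of the DE dimension. To bound the number of buckets $L$, note that the independence witness is already available: because $|\E_{\rho_{k_i}}[\phi_{k_i}]|>\alpha$, the function $\phi_{k_i}$ certifies $\alpha$-independence of $\rho_{k_i}$ from \emph{any} set $S$ with $\sum_{\rho\in S}(\E_\rho[\phi_{k_i}])^2\le\alpha^2$. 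Hence if $\rho_{k_i}$ is $\alpha$-dependent on a bucket $B_j$, then necessarily $\sum_{\rho\in B_j}(\E_\rho[\phi_{k_i}])^2>\alpha^2$. A newly opened bucket means $\rho_{k_i}$ is dependent on all existing buckets; summing over them and using that they are disjoint subsets of $\{\rho_{k_1},\dots,\rho_{k_{i-1}}\}$ together with the hypothesis $\sum_{l<i}(\E_{\rho_{k_l}}[\phi_{k_i}])^2\le\sum_{j<k_i}(\E_{\rho_j}[\phi_{k_i}])^2\le\beta$ forces the number of existing buckets to be strictly below $\beta/\alpha^2$. Therefore $L\le\beta/\alpha^2+1$ and $N=\sum_j|B_j|\le Ld\le(\beta/\alpha^2+1)d$.

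Given the counting bound, I would finish by sorting. Let $a_{(1)}\ge a_{(2)}\ge\dots$ be the $a_k$ in decreasing order and $M=N(w)\le t$ the number exceeding $w$. For any $\alpha<a_{(k)}$ at least $k$ values exceed $\alpha$, so $k\le N(\alpha)\le(\beta/\alpha^2+1)d$; letting $\alpha\uparrow a_{(k)}$ gives $k\le(\beta/a_{(k)}^2+1)d$, hence $a_{(k)}\le\sqrt{\beta d/(k-d)}$ whenever $k>d$. Splitting $\sum_{k=1}^M a_{(k)}$ into the first $\min\{M,d\}$ ranks (each crudely bounded by $C$, contributing at most $\min\{t,d\}C$) and the tail $k>d$, the tail satisfies $\sum_{k>d}\sqrt{\beta d/(k-d)}=\sqrt{\beta d}\sum_{j\ge1}j^{-1/2}\le2\sqrt{\beta d\,M}\le2\sqrt{\beta d\,t}$ via $\sum_{j=1}^m j^{-1/2}\le2\sqrt m$ and $M\le t$. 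Adding back the $tw$ from the sub-threshold terms yields $\sum_{k=1}^t a_k\le\BO(\sqrt{d\beta t}+\min\{t,d\}C+tw)$, as claimed.

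I expect the counting bound to be the crux. The delicate point is recognizing that $\phi_{k_i}$ itself serves as the independence witness, which turns the qualitative statement ``$\rho_{k_i}$ is $\alpha$-dependent on $B_j$'' into the quantitative one $\sum_{\rho\in B_j}(\E_\rho[\phi_{k_i}])^2>\alpha^2$, and thereby lets the cumulative potential $\beta$ cap the number of buckets. The remaining pieces—the per-bucket length bound through $\DDE$ and the sorting plus $\sum j^{-1/2}$ estimate—are routine once this observation is in place.
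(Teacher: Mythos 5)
Your proof is correct, and the paper itself gives no proof of this lemma --- it is imported verbatim from \citep{jin2021bellman}. Your argument (split at the threshold $w$; greedy bucketing in which $\phi_{k_i}$ itself serves as the independence witness, so that dependence on a bucket forces $\sum_{\rho\in B_j}(\E_\rho[\phi_{k_i}])^2>\alpha^2$ and the cumulative bound $\beta$ caps the number of buckets at $\beta/\alpha^2+1$ while $\DDE$ caps each bucket's length; then the sorting step with $\sum_{j=1}^m j^{-1/2}\le 2\sqrt{m}$) is exactly the standard proof given in that reference, going back to Russo and Van Roy's eluder-dimension argument, so there is nothing to flag beyond the harmless notational slip of writing $\sum_{j\ge 1}j^{-1/2}$ for the finite sum $\sum_{j=1}^{M-d}j^{-1/2}$.
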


Invoking Lemma~\ref{lem:BEE} with $\Q=\Q^1_h$, $\Phi=(I-\T^{\Pi,\Pi'}_h)\FC$ and $w=\sqrt{{1}/{K}}$, conditioning on the event (\ref{eq:BEE 1}) in Lemma~\ref{lem:bounded error} holds true, we have
\begin{align}
&\sum_{t=1}^K\E_{\pi^t}\big[(f^{t,\mu^t}_h-\T^{\mu^t,\nu^t}_hf^{t,\mu^t}_{h+1})(s_h,a_h.b_h)\big]\label{eq:berr 1}\\
&\qquad\leq\BO\bigg(\sqrt{\VB^2K\DBE\Big(\mathcal{F},\sqrt{{1}/{K}},\Pi,\Pi',\Q^1\Big)\log\left(\N_{\FC\cup\GC}(\VB/K)KH|\Pi|/\delta\right)}\bigg).\notag 
\end{align}

Similarly, invoking Lemma~\ref{lem:BEE} with $\Q=\Q^2_h$, $\Phi=(I-\T^{\Pi,\Pi'}_h)\FC$ and $w=\sqrt{{1}/{K}}$, conditioning on the event (\ref{eq:BEE 2}) in Lemma~\ref{lem:bounded error} holds true, we have with probability at least $1-\delta/4$,
\begin{align}
&\sum_{t=1}^K\E_{\pi^t}\big[(f^{t,\mu^t}_h-\T^{\mu^t,\nu^t}_hf^{t,\mu^t}_{h+1})(s_h,a_h.b_h)\big]\notag\\
&\qquad\leq\sum_{t=1}^{K}\Big(f^{t,\mu^t}_h(s_h,a_h,b_h)-(\T^{\mu,\nu^t}_h f^{t,\mu^t}_{h+1})(s^t_h,a^t_h,b^t_h)\Big)+\BO(\sqrt{K\log (K/\delta)})\notag\\
&\qquad\leq\BO\bigg(\sqrt{\VB^2K\DBE\Big(\mathcal{F},\sqrt{{1}/{K}},\Pi,\Pi',\Q^2\Big)\log(\N_{\FC\cup\GC}(\VB/K)KH|\Pi|/\delta)}\bigg),\label{eq:berr 2}
\end{align}
where the first inequality comes from standard martingale difference concentration. Therefore, combining (\ref{eq:berr 1}) and (\ref{eq:berr 2}),we have:
\begin{align*}
&\sum_{t=1}^K\E_{\pi^t}\big[(f^{t,\mu^t}_h-\T^{\mu^t,\nu^t}_hf^{t,\mu^t}_{h+1})(s_h,a_h.b_h)\big]\notag\\
&\qquad\leq\BO\bigg(\sqrt{\VB^2K\DBE\Big(\mathcal{F},\sqrt{{1}/{K}},\Pi,\Pi'\Big)\log(\N_{\FC\cup\GC}(\VB/K)KH|\Pi|/\delta)}\bigg).
\end{align*}

Substitute the above bounds into (\ref{eq:regret-bellman}) and we have:
\begin{align}
&\sum_{t=1}^K\OVT(\mu^t)-V_1^{\pi^t}(s_1)\label{eq:estimation}\\
&\qquad\leq\BO\bigg(H\VB\sqrt{K\DBE\Big(\mathcal{F},\sqrt{{1}/{K}},\Pi,\Pi'\Big)\log(\N_{\FC\cup\GC}(\VB/K)KH|\Pi|/\delta)}\bigg).\notag 
\end{align}

Thus, in Step 2, we establish an upper bound on term (4) in \eqref{eq:decompose}. It remains to bound term (2) and term (3), which is completed in the final step of the proof.

\subsection{Step 3: Bound the regret}
Now we only need to bound the online learning error. Notice that $p^t$ is updated using Hedge with reward $\OVT$. Since $0\leq\OVT\leq\VB$ and there are $|\Pi|$ policies, we have from the online learning literature \citep{hazan2016introduction} that
\begin{align}
\label{eq:online}
	\sum_{t=1}^{K}\OVT(\mu^*)-\sum_{t=1}^{K}\langle\OVT,p^t\rangle\leq\VB\sqrt{K\log|\Pi|}.
\end{align}

In addition, suppose $\Fil_{k}$ denotes the filtration induced by $\{\nu^1\}\cup(\cup_{i=1}^k\{\mu^{i},\D_{i},\nu^{i+1}\})$. Then we can observe that $\langle\OVT,p^t\rangle-\OVT(\mu^t)\in\Fil_{t}$. In addition, we have $\OVT\in\Fil_{t-1}$ since the estimation of $\OVT$ only utilizes $\D_{1:t-1}$, which implies
\begin{align*}
\E[\langle\OVT,p^t\rangle-\OVT(\mu^t)|\Fil_{t-1}]=0.
\end{align*}
Therefore $(3)$ is a martingale difference sequence and by Azuma-Hoeffding's inequality we have with probability at least $1-\delta/4$,
\begin{align}
\label{eq:martingale}
\sum_{t=1}^{K}\langle\OVT,p^t\rangle-\sum_{t=1}^{K}\OVT(\mu^t)\leq\BO(\VB\sqrt{K\log(1/\delta)})
\end{align} 

Substituting (\ref{eq:optimism}), (\ref{eq:estimation}), (\ref{eq:online}), and (\ref{eq:martingale}) into (\ref{eq:decompose}) concludes our proof for Theorem~\ref{thm:oblivious} in the oblivious setting.

Meanwhile, for the adaptive setting, we can simply repeat the above arguments. The only difference is that now $\nu^t$ can depend on $\D_{1:t-1}$ and thus we need to introduce a union bound over $\Pi'$ when proving Lemma~\ref{lem:optimism} and Lemma~\ref{lem:bounded error}. This will incur an additional $\log|\Pi'|$ in $\beta$ and thus also in the regret bound. This concludes our proof.

\section{Conclusion}

We study decentralized policy learning in general-sum Markov games. 
Specifically, we aim to establish a no-regret online learning algorithm for a single agent based on its local information, in the presence of  nonstationary and possibly adversarial opponents. 
Focusing on the policy revealing setting where the opponent's previous policies are revealed to the agent, we propose a novel algorithm that achieves sublinear regret in the context of general function approximation. Moreover, when all the agents adopt this algorithm, we prove that their mixture policy constitutes an approximate CCE of the Markov game. We further demonstrate the efficacy of the proposed algorithm by applying it to constrained and vector-valued MDPs, which can be formulated as zero-sum Markov games with a fictitious opponent. 
Finally, while we consider the policy revealing setting, 
establishing decentralized RL algorithm for Markov games under weaker information structures seems an important future direction. 


\clearpage 
\bibliographystyle{apalike}
\bibliography{ref.bib}

\clearpage 
\appendix
\section{Proofs of Proposition~\ref{prop:BEE}}
\label{proof prop BEE}
From the completeness assumption, we know that there exists $g_h\in\FC_h$ such that $g_h=\T^{\mu,\nu}_hf_{h+1}$, 
which implies that
\begin{align*}
f_h-\T^{\mu,\nu}_hf_{h+1}\in\FC_h-\FC_h,\forall f\in\FC, \mu\in\Pi,\nu\in\Pi'.
\end{align*}

In other words, $(I-\T^{\Pi,\Pi'}_h)\FC\subseteq\FC_h-\FC_h$. Therefore, from the definition of $\DBE(\FC,\eps,\break\Pi,\Pi')$ we have
\begin{align*}
\DBE(\FC,\eps,\Pi,\Pi')&\leq\DBE(\FC,\eps,\Pi,\Pi',\Q^2)=\max_{h\in[H]}\DDE((I-\T^{\Pi,\Pi'}_h)\FC,\Q^2_h,\eps)\\
&\leq\max_{h\in[H]}\DDE((\FC_h-\FC_h),\Q^2_h,\eps)=\max_{h\in[H]}\DE(\FC_h,\eps),
\end{align*}
where the last step comes from the definition of $\DE$ and $\Q^2_h$ is the dirac distribution family. This concludes our proof.
\section{Examples for Realizability, Generalized Completeness and Covering Number}
\label{sec:realize}
In this section we illustrate practical examples where realizability and generalized completeness hold while the covering number is upper bounded at the same time. More specifically, we will consider tabular MGs, linear MGs and kernel MGs.
\subsection{Tabular MGs}
For tabular MGs, we let $\FC_h=\{f|f:\SC\times\A\times\B\mapsto[0,\VB]\}$ and $\GC_h=\FC_h$ for all $h\in[H]$. Then it is obvious that $Q^{\mu\times\nu}_h\in\FC_h$ and $\T^{\mu,\nu}f_{h+1}\in\GC_h$ for any $f\in\FC,h\in[H],\mu,\nu$, which implies that realizability and generalized completeness are satisfied. In addition, notice that in this case we have
\begin{align*}
\log\N_{\FC_h}(\eps)=\log\N_{\GC_h}(\eps)\leq|\SC||\A||\B|\log({\VB}/{\eps}).
\end{align*}
This suggests that the size of $\FC$ and $\GC$ is also not too large.

\subsection{Linear MGs}
In this subsection we consider linear MGs. Here we generalize the definition of linear MDPs in classic MDPs \citep{jin2020provably} to Markov games:
\begin{definition}[Linear MGs]
We say an MG is linear of dimension $d$ if for each $h\in[H]$, there exists a feature mapping $\phi_h:\SC\times\A\times\B\mapsto\R^d$ and $d$ unknown signed measures $\psi_h=(\psi_h^{(1)},\cdots,\psi_h^{(d)})$ over $\SC$ and an unknown vector $\theta_h\in\R^d$ such that $P_h(\cdot|s,a,b)=\phi_{h}(s,a,b)^{\top}\psi_h(\cdot)$ and $r_{h}(s,a,b)=\phi_h(s,a,b)^{\top}\theta_h$ for all $(s,a,b)\in\SC\times\A\times\B$.
\end{definition}

Without loss of generality, we assume $\Vert\phi_h(s,a,b)\Vert\leq1$ for all $s\in\SC,a\in\A,b\in\B$ and $\Vert\psi_h(\SC)\Vert\leq\sqrt{d},\theta_h\leq\sqrt{d}$ for all $h$. Let $\FC_h=\GC_h=\{\phi_h(\cdot)^{\top}w|w\in\R^d,\Vert w\Vert\leq (H-h+1)\sqrt{d},0\leq\phi_h(\cdot)^{\top}w\leq H-h+1\}$.

\paragraph{Realizability.}
We have for any $\mu,\nu$,
\begin{align*}
Q^{\mu\times\nu}_h(s,a,b)&=r_h(s,a,b)+\E_{s'\sim P(\cdot|s,a,b)}[V^{\mu\times\nu}_{h+1}(s')]\\
&=\langle \phi_h(s,a,b),\theta_h\rangle + \bigg\langle \phi_h(s,a,b),\int_{\SC}V^{\mu\times\nu}_{h+1}(s')d\psi_h(s')\bigg\rangle\\
&=\bigg\langle \phi_h(s,a,b), \theta_h+\int_{\SC}V^{\mu\times\nu}_{h+1}(s')d\psi_h(s')\bigg\rangle\\
&=\langle \phi_h(s,a,b), w^{\mu\times\nu}_h\rangle,
\end{align*}
where $w^{\mu\times\nu}_h=\theta_h+\int_{\SC}V^{\mu\times\nu}_{h+1}(s')d\psi_h(s')$ and thus $\Vert w^{\mu\times\nu}_h\Vert\leq (H-h+1)\sqrt{d}$. Therefore, $Q^{\mu\times\nu}_h\in\FC_h$, which means that realizability holds.

\paragraph{Generalized completeness.} For any $f_{h+1}\in\FC_{h+1}$, we have
\begin{align*}
\T^{\mu,\nu}f_{h+1}(s,a,b)&=r_h(s,a,b)+\E_{s'\sim P(\cdot|s,a,b)}[f_{h+1}(s',\mu,\nu)]\\
&=\bigg\langle \phi_h(s,a,b), \theta_h+\int_{\SC}f_{h+1}(s',\mu,\nu)d\psi_h(s')\bigg\rangle.
\end{align*}
Since $\Vert f_{h+1}\Vert_{\infty}\leq H-h$, we have $\Vert \theta_h+\int_{\SC}f_{h+1}(s',\mu,\nu)d\psi_h(s')\Vert\leq (H-h+1)\sqrt{d}$, which indicates $\T^{\mu,\nu}f_{h+1}\in\GC_h$ and thus generalized completeness is satisfied.

\paragraph{Covering number.} First notice that from the literature \citep{wainwright2019high}, the covering number of a $l_2$-norm ball can be bounded as $\log\N_{\BL((H-h+1)\sqrt{d})}(\eps)\leq d\log({3H\sqrt{d}}/{\eps})$. Therefore, there exists $\WC\subset\BL((H-h+1)\sqrt{d})$ where $\log|\WC|\leq d\log({3H\sqrt{d}}/{\eps})$ such that for any $w\in\BL((H-h+1)\sqrt{d})$, there exists $w'\in\WC$ satisfying $\Vert w'-w\Vert\leq \eps$. Now let $\FC'_h=\{\phi_h(\cdot)^{\top}w|w\in\WC\}$. For any $f_h\in\FC_h$, suppose $f_h(\cdot)=\phi_h(\cdot)^{\top} w_{f_h}$. Then we know there exists $f'_h(\cdot)=\phi_h(\cdot)^{\top} w'_{f_h}\in\FC'_h$ where $\Vert w'_{f_h}-w_{f_h}\Vert\leq \eps$, which implies
\begin{align*}
|f_h(s,a,b)-f'_h(s,a,b)|\leq\Vert\phi_h(s,a,b)\Vert\Vert w'_{f_h}-w_{f_h}\Vert\leq\eps.
\end{align*}
Therefore $\log\N_{\FC_h}(\eps)\leq\log|\FC'_h|=\log|\WC|\leq d\log({3H\sqrt{d}}/{\eps})$.

\subsection{Kernel MGs}
In this subsection we show that kernel MGs also satisfy realizability and generalized completeness naturally. In addition, when a kernel MG has a bounded effective dimension, its covering number will also be bounded. First we generalize the definition of kernel MDPs \cite{jin2021bellman} to MGs as follows. 
\begin{definition}[Kernel MGs]
\label{def:kernel MG}
In a kernel MDP, for each step $h\in[H]$, there exist feature mapping $\phi_h:\SC\times\A\times\B\mapsto\HC$ and $\psi_h:\SC\mapsto\HC$ where $\HC$ is a separable Hilbert space such that $P_h(s'|s,a,b)=\langle\phi_h(s,a,b),\psi_h(s')\rangle_{\HC}$ for all $s\in\SC,a\in\A,b\in\B,s'\in\SC$. Besides, the reward function os linear in $\phi$, i.e., $r_h(s,a,b)=\langle \phi_h(s,a,b),\theta_h\rangle_{\HC} $ for some $\theta_h\in\HC$. Moreover, a kernel MG satisfies the following regularization conditions:
\begin{itemize}
\item $\Vert\theta_h\Vert_{\HC}\leq1,\Vert\phi_h(s,a,b)\Vert_{\HC}\leq1$, for all $ s\in\SC,a\in\A,b\in\B,h\in[H]$.
\item $\Vert\sum_{s\in\SC}V(s)\psi_h(s)\Vert_{\HC}\leq1$, for all function $V:\SC\mapsto[0,1], h\in[H]$.
\end{itemize}
\end{definition}
\begin{remark}
It can be observed that tabular and linear MGs are special cases of kernel MGs. Therefore, the following discussion applies to tabular and linear MGs as well.
\end{remark}

Then we let $\FC_h=\GC_h=\{\phi_h(\cdot)^{\top}w|w\in\B_{\HC}(H-h+1)\}$ where $\B_{\HC}(r)$ is a ball with radius $r$ in $\HC$. Following the same arguments in linear MGs, we can validate that realizability and generalized completeness are satisfied in kernel MGs. 

\paragraph{Covering number.} Before bounding the covering number of $\FC_h$, we need introduce a new measure to evaluate the complexity of a Hilbert space since $\HC$ might be infinite dimensional. Here we use the effective dimension \citep{du2021bilinear,jin2021bellman}, which is defined as follows:
\begin{definition}[$\eps$-effective dimension of a set]
\label{def:effect}
The $\eps$-effective dimension of a set $\X$ is the minimum integer $\deff(\X,\eps)=n$ such that
\begin{align*}
	\sup_{x_1,\cdots,x_n\in\X}\frac{1}{n}\log\det\bigg(I+\frac{1}{\eps^2}\sum_{i=1}^nx_ix_i^{\top}\bigg)\leq e^{-1}.
\end{align*}
\end{definition}
\begin{remark}
When $\X$ is finite dimensional, suppose its dimension is $d$. Then its effective dimension can be upper bounded by $\BO\big(d\log\big(1+{R^2}/{\eps}\big)\big)$ where $R$ is the norm bound of $\X$ \citep{du2021bilinear}. In addition, even when $\X$ is infinite dimensional, if the eigenspectrum of the covariance matrices concentrates in a low-dimension
subspace, the effective dimension of $\X$ can still be small \citep{srinivas2009gaussian}.
\end{remark}

We call a kernel MG is of effective dimension $d(\eps)$ if $\deff(\X_h,\eps)\leq d(\eps)$ for all $h$ and $\eps$ where $\X_h=\{\phi_h(s,a,b):(s,a,b)\in\SC\times\A\times\B\}$. Then the following proposition shows that the covering number of $\FC_h$ is upper bounded by the effective dimension of the kernel MG:
\begin{proposition}
\label{prop:covering}
If the kernel MG has effective dimension $d(\eps)$, then
\begin{align*}
\log\N_{\FC_h}(\eps)\leq\BO\big(d({\eps}/{2H})\log(1+{Hd({\eps}/{2H})}/{\eps})\big).
\end{align*}
\end{proposition}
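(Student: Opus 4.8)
The plan is to reduce the covering of the infinite-dimensional class $\FC_h=\{\phi_h(\cdot)^\top w : w\in\B_{\HC}(R)\}$, with $R:=H-h+1\le H$, to covering a bounded ball inside a subspace of dimension at most $d(\eps/2H)$, and then to invoke the standard volumetric covering bound. Throughout I abbreviate $x=\phi_h(s,a,b)$, recall $\|x\|_{\HC}\le 1$, and note $\|f_w-f_{w'}\|_\infty=\sup_{x\in\X_h}|\langle x,w-w'\rangle_{\HC}|$. Set $\eps':=\eps/(2H)$ and $d':=\deff(\X_h,\eps')=d(\eps/2H)$.

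First I would build a finite-dimensional approximating subspace by a greedy selection governed by the elliptical potential. Initialize $\Lambda_0=\eps'^2 I$, and for $t\ge 1$ let $x_t=\argmax_{x\in\X_h}x^\top\Lambda_{t-1}^{-1}x$, with $\sigma_t:=x_t^\top\Lambda_{t-1}^{-1}x_t$ and $\Lambda_t=\Lambda_{t-1}+x_tx_t^\top$, stopping at the first step $T$ for which $\sup_{x\in\X_h}x^\top\Lambda_T^{-1}x<1$. The crux is the following \emph{geometric lemma}: writing $V=\mathrm{span}\{x_1,\dots,x_T\}$ with orthogonal projection $P$, since $\Lambda_T$ acts as $\eps'^2 I$ on $V^\perp$ and leaves $V$ invariant, every feature satisfies $x^\top\Lambda_T^{-1}x\ge \eps'^{-2}\|(I-P)x\|_{\HC}^2$, so the stopping rule forces $\|(I-P)x\|_{\HC}<\eps'$ for all $x\in\X_h$. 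To bound $\dim V\le T$, I would telescope the determinant: each kept step has $\sigma_t\ge 1$, so $T\log 2\le\sum_{t\le T}\log(1+\sigma_t)=\log\det(I+\eps'^{-2}\sum_{t\le T}x_tx_t^\top)$. If $T\ge d'$, then already the first $d'$ greedy points give $d'\log 2\le\log\det(I+\eps'^{-2}\sum_{t\le d'}x_tx_t^\top)\le d'/e$ by the definition of effective dimension, contradicting $\log 2>e^{-1}$; hence $T<d'$, and thus $\dim V\le d'$ while every feature lies within $\eps'$ of $V$.

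With the lemma in hand the reduction and counting are routine. For any $w\in\B_{\HC}(R)$ we have $\langle x,w\rangle=\langle x,Pw\rangle+\langle (I-P)x,w\rangle$ with $|\langle (I-P)x,w\rangle|\le\eps' R\le\eps/2$, so $f_w$ is within $\eps/2$ (in $\|\cdot\|_\infty$) of $f_{Pw}$, and $Pw$ lies in the radius-$R$ ball of the subspace $V$ of dimension at most $d'$. Moreover, by Cauchy--Schwarz, $\sup_{x\in\X_h}|\langle x,v-v'\rangle|\le\|v-v'\|_{\HC}$, so an $(\eps/2)$-net of this ball in $\HC$-norm induces an $(\eps/2)$-net of $\{f_{Pw}\}$ in sup-norm; composing the two approximations yields an $\eps$-net of $\FC_h$. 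Since the $(\eps/2)$-covering number of a radius-$R$ ball in at most $d'$ dimensions is bounded by $(1+4R/\eps)^{d'}$, I obtain $\log\N_{\FC_h}(\eps)\le d'\log(1+4H/\eps)\le\BO\big(d'\log(1+Hd'/\eps)\big)$, which is exactly the claimed bound.

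The main obstacle is the geometric lemma, and in particular the passage from the effective-dimension definition to the bound $T<d'$ on the number of greedy steps through the determinant/elliptical-potential identity; once the approximating subspace and its dimension are controlled, the Cauchy--Schwarz reduction to a finite-dimensional ball and the volumetric count are bookkeeping.
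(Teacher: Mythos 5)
Your proof is correct, and it arrives at the same finite-dimensional reduction and volumetric count as the paper, but the key lemma is obtained by a genuinely different and more self-contained route. The paper sets $n=\DE(\FC_h,\eps)$, asserts that the Eluder-dimension witness sequence $\{\phi_i\}_{i=1}^n$ is such that any pair $w_1,w_2$ with small residuals on that sequence must be close on all of $\X_h$ (so that covering $\FC_h$ reduces to covering the projection of $\B_{\HC}(H-h+1)$ onto $\mathrm{span}\{\phi_i\}$), and then imports the bound $n\le d(\eps/2H)$ from the proof of Proposition~\ref{prop:BEE kernel}. You bypass the Eluder dimension entirely: the greedy elliptical-potential construction produces an explicit subspace $V$ with $\dim V< d(\eps/2H)$ together with a \emph{proved} uniform guarantee $\Vert(I-P)x\Vert_{\HC}<\eps/(2H)$ for every feature, after which the Cauchy--Schwarz reduction to a ball in $V$ and the volumetric bound coincide with the paper's final steps. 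Your route buys rigor at the one delicate point: the claim that a single Eluder sequence simultaneously dominates all of $\X_h$ does not follow verbatim from the definition of $\DE$ (which only guarantees that each element of the sequence is independent of its predecessors), whereas your stopping rule $\sup_{x\in\X_h}x^{\top}\Lambda_T^{-1}x<1$ delivers exactly the approximation property needed, and the determinant telescoping ($\log 2>e^{-1}$ against the threshold in Definition~\ref{def:effect}) bounds $T$ directly from the effective dimension rather than through Proposition~\ref{prop:BEE kernel}. The only technicality you elide is attainment of the argmax over a possibly non-compact $\X_h$; this is handled by taking near-maximizers and replacing the threshold $1$ by, say, $1/2$, since $\log(3/2)>e^{-1}$ still yields the contradiction.
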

\begin{proof}
Suppose $\DE(\FC_h,\eps)=n$. Then by the definition of Eluder dimension, there exists a sequence $\{\phi_i\}_{i=1}^n$ such that for any $w_1,w_2\in\BL_{\HC}(H-h+1),\phi\in\X_h$, if $\sum_{i=1}^n(\langle \phi_i, w_1-w_2\rangle)^2\leq\eps^2$, then $|\langle \phi, w_1-w_2\rangle|\leq\eps$. Therefore, the covering number of kernel MGs can be reduced to covering the projection of $\BL_{\HC}(H-h+1)$ onto the space spanned by $\{\phi_i\}_{i=1}^n$, whose dimension is at most $n$. From the literature \citep{wainwright2019high}, the covering number of such space is $\BO\left(n\log\left(1+{nH}/{\eps}\right)\right)$, which implies
\begin{align*}
\log\N_{\FC_h}(\eps)\leq\BO\big(n\log(1+{nH}/{\eps})\big).
\end{align*}

Finally, by the proof of Proposition~\ref{prop:BEE kernel}, we know $n\leq d({\eps}/{2H})$, which concludes the proof.

\end{proof}
\section{Examples for BEE Dimension}
\label{sec:example}
In this section we will show that kernel MGs (including tabular MGs and linear MGs) and generalized linear complete models have low BEE dimensions.

\subsection{Kernel MGs}
Consider the kernel MG defined in Definition~\ref{def:kernel MG} and $\FC_h=\{\phi_h(\cdot)^{\top}w|w\in\B_{\HC}(H-h+1)\}$, then we have the following proposition showing that the BEE dimension of a kernel MG is upper bounded by its effective dimension (Definition~\ref{def:effect}):
\begin{proposition}
\label{prop:BEE kernel}
If the kernel MG has effective dimension $d(\eps)$, then for any policy classes $\Pi$ and $\Pi'$, we have $\dbee(\FC,\eps,\Pi,\Pi')\leq d({\eps}/{2H})$.
\end{proposition}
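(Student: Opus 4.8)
The plan is to exploit the linear structure that the kernel assumption forces on the Bellman residuals, and then to reduce the distributional Eluder dimension over the Dirac family $\Q^2$ to a standard elliptic-potential / log-determinant argument controlled by the effective dimension.

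\textbf{Step 1 (linearizing the residuals).} First I would compute, for any $f\in\FC$, $\mu\in\Pi$, $\nu\in\Pi'$ and $h\in[H]$, the residual $f_h-\T^{\mu,\nu}_h f_{h+1}$. Writing $f_h(\cdot)=\langle\phi_h(\cdot),w_h\rangle_{\HC}$ with $\|w_h\|_{\HC}\leq H-h+1$ and using $P_h(s'|s,a,b)=\langle\phi_h(s,a,b),\psi_h(s')\rangle_{\HC}$ and $r_h=\langle\phi_h,\theta_h\rangle_{\HC}$, exactly as in the realizability computation for linear/kernel MGs, one obtains
\[
(f_h-\T^{\mu,\nu}_h f_{h+1})(s,a,b)=\langle\phi_h(s,a,b),\Delta\rangle_{\HC},\qquad \Delta:=w_h-\theta_h-\sum_{s'}\psi_h(s')f_{h+1}(s',\mu,\nu).
\]
The regularization conditions of Definition~\ref{def:kernel MG} ($\|\theta_h\|_{\HC}\leq 1$ and $\|\sum_{s'}\psi_h(s')V(s')\|_{\HC}\leq1$ for $V\in[0,1]$) then bound $\|\Delta\|_{\HC}\leq 2H$. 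Hence every element of $(\I-\T^{\Pi,\Pi'}_h)\FC$ is a linear functional $\langle\phi_h(\cdot),\Delta\rangle_{\HC}$ with $\|\Delta\|_{\HC}\leq 2H$, uniformly over $\Pi,\Pi'$; this uniformity is precisely why the final bound is independent of the policy classes.

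\textbf{Step 2 (elliptic potential over $\Q^2$).} Since $\dbee$ is the minimum over $\Q^1$ and $\Q^2$, it suffices to bound $\DDE((\I-\T^{\Pi,\Pi'}_h)\FC,\Q^2_h,\eps)$ for each $h$. For a Dirac measure $\delta_{(s,a,b)}\in\Q^2_h$ one has $\E_{\delta}[\langle\phi_h(\cdot),\Delta\rangle]=\langle\phi,\Delta\rangle$ with $\phi:=\phi_h(s,a,b)\in\X_h$, so a sequence $\rho_1,\dots,\rho_n\in\Q^2_h$ witnessing the DE dimension corresponds to feature vectors $\phi_1,\dots,\phi_n\in\X_h$ and a common $\eps'\geq\eps$ such that for each $i$ there is $\Delta_i$ with $\|\Delta_i\|_{\HC}\leq 2H$, $\sum_{j<i}\langle\phi_j,\Delta_i\rangle^2\leq(\eps')^2$, yet $\langle\phi_i,\Delta_i\rangle^2>(\eps')^2$. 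Setting $\tilde\eps:=\eps'/(2H)$ and $\Lambda_i:=\tilde\eps^2 I+\sum_{j<i}\phi_j\phi_j^\top$, I would bound $\Delta_i^\top\Lambda_i\Delta_i\leq\tilde\eps^2\|\Delta_i\|^2+(\eps')^2\leq 2(\eps')^2$ and then apply Cauchy--Schwarz in the $\Lambda_i$-geometry, $(\eps')^2<\langle\phi_i,\Delta_i\rangle^2\leq(\phi_i^\top\Lambda_i^{-1}\phi_i)\,\Delta_i^\top\Lambda_i\Delta_i\leq 2(\eps')^2\,\phi_i^\top\Lambda_i^{-1}\phi_i$, to conclude $\phi_i^\top\Lambda_i^{-1}\phi_i>1/2$ for \emph{every} $i$.

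\textbf{Step 3 (counting via effective dimension).} The determinant identity gives $\log\det(\Lambda_{i+1})-\log\det(\Lambda_i)=\log(1+\phi_i^\top\Lambda_i^{-1}\phi_i)>\log(3/2)$, so each of the $n$ independent points contributes a log-determinant increment strictly larger than $\log(3/2)>e^{-1}$. If $n\geq\deff(\X_h,\tilde\eps)=:m^*$, summing the first $m^*$ increments yields $\tfrac{1}{m^*}\log\det(I+\tilde\eps^{-2}\sum_{i=1}^{m^*}\phi_i\phi_i^\top)>\log(3/2)>e^{-1}$, contradicting the definition of $\deff(\X_h,\tilde\eps)$ as the smallest $m$ for which this average is $\leq e^{-1}$. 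Hence $n<\deff(\X_h,\tilde\eps)$. Finally, $\tilde\eps=\eps'/(2H)\geq\eps/(2H)$ together with monotonicity of the effective dimension in its accuracy parameter gives $n<\deff(\X_h,\tilde\eps)\leq\deff(\X_h,\eps/2H)\leq d(\eps/2H)$; maximizing over $h$ yields $\DBE(\FC,\eps,\Pi,\Pi',\Q^2)\leq d(\eps/2H)$, and therefore $\dbee(\FC,\eps,\Pi,\Pi')\leq d(\eps/2H)$. The main obstacle is Step~3: I must make the potential argument rigorous in the possibly infinite-dimensional $\HC$ (restricting to $\mathrm{span}\{\phi_1,\dots,\phi_n\}$ so that $\det$, $\Lambda_i^{-1}$ and the identity $\det(\Lambda_i+\phi_i\phi_i^\top)=\det(\Lambda_i)(1+\phi_i^\top\Lambda_i^{-1}\phi_i)$ are all well defined) and track the constants so that the per-step gain $\log(3/2)$ genuinely beats the threshold $e^{-1}$ of Definition~\ref{def:effect} while the accuracy lands exactly on $\eps/2H$; the norm bookkeeping of Step~1 is routine but must use the right regularization constants to reach $\|\Delta\|_{\HC}\leq 2H$.
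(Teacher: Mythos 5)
Your proof is correct and follows essentially the same route as the paper's: reduce to linear functionals $\langle\phi_h(\cdot),\Delta\rangle_{\HC}$ with $\Vert\Delta\Vert_{\HC}\leq 2H$, then derive a contradiction between the per-step elliptic-potential gain $\log(3/2)>e^{-1}$ and the effective-dimension threshold at accuracy $\eps/(2H)$. The only cosmetic difference is that you linearize the Bellman-residual class directly, whereas the paper first invokes completeness and Proposition~\ref{prop:BEE} to pass to the Eluder dimension of $\FC_h$ (differences $\langle\phi_h(\cdot),w_1-w_2\rangle$ with the same $2H$ norm bound); the quantitative core is identical.
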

\begin{proof}
First in Appendix~\ref{sec:realize} we have showed that $\FC$ satisfies completeness. By Proposition~\ref{prop:BEE}, we have $\dbee(\FC,\eps,\Pi,\Pi')\leq\max_{h\in[H]}\DE(\FC_h,\eps)$. Therefore we only need to bound $\DE(\FC_h,\eps)$ for each $h\in[H]$. Suppose $\DE(\FC_h,\eps)=k>d({\eps}/{2H})$. Then by the definition of Eluder dimension, there exists a sequence $\phi_1,\cdots,\phi_k$ and $\{w_{1,i}\}_{i=1}^k,\{w_{2,i}\}_{i=1}^k$ where $\phi_i\in\X_h=\{\phi_h(s,a,b):(s,a,b)\in\SC\times\A\times\B\}, w_{1,i},w_{2,i}\in\B_{\HC}(H-h+1)$ for all $i$ such that for any $t\in[k]$:
\begin{align}
&\sum_{i=1}^{t-1}(\langle \phi_i,w_{1,t}-w_{2,t}\rangle)^2\leq(\eps')^2,\label{eq:eg 1}\\
&|\langle\phi_t,w_{1,t}-w_{2,t}\rangle|\geq\eps'\label{eq:eg 2},
\end{align}
where $\eps'\geq\eps$. Let $\Sigma_t$ denote $\sum_{i=1}^{t-1}\phi_i\phi_i^{\top}+\frac{\eps^2}{4H^2}\cdot I$. Then we have for any $t\in[k]$
\begin{align*}
\Vert w_{1,t}-w_{2,t}\Vert_{\Sigma_t}^2\leq(\eps')^2+\eps^2.
\end{align*}

On the other hand, by Cauchy-Schwartz inequality we know
\begin{align*}
\Vert \phi_t\Vert_{\Sigma_t^{-1}}\Vert w_{1,t}-w_{2,t}\Vert_{\Sigma_t}\geq|\langle\phi_t,w_{1,t}-w_{2,t}\rangle|\geq\eps'.
\end{align*}
This implies for all $t\in[k]$
\begin{align*}
\Vert \phi_t\Vert_{\Sigma_t^{-1}}\geq\frac{\eps'}{\sqrt{\eps^2+(\eps')^2}}\geq\frac{1}{\sqrt{2}}.
\end{align*}

Therefore, applying elliptical potential lemma (e.g., Lemma~5.6 and Lemma~F.3 in \cite{du2021bilinear}), we have for any $t\in[k]$
\begin{align*}
\log\det\bigg(I+\frac{4H^2}{\eps^2}\sum_{i=1}^{t}\phi_i\phi_i^{\top}\bigg)=\sum_{i=1}^t\log(1+\Vert \phi_i\Vert_{\Sigma_i^{-1}}^2)\geq t\cdot\log\frac{3}{2}.
\end{align*}

However, by the definition of effective dimension, we know when $n=\deff(\X_h,\frac{\eps}{2H})$,
\begin{align*}
	\sup_{\phi_1,\cdots,\phi_n}\log\det\bigg(I+\frac{4H^2}{\eps^2}\sum_{i=1}^n\phi_i\phi_i^{\top}\bigg)\leq ne^{-1}.
\end{align*}

This is a contradiction since $n\leq d({\eps}/{2H})<k$ and $\log\frac{3}{2}>e^{-1}$. Therefore we have $\DE(\FC_h,\eps)\leq d({\eps}/{2H})$ for all $h\in[H]$, which implies
\begin{align*}
\dbee(\FC,\eps,\Pi,\Pi')\leq d({\eps}/{2H}).
\end{align*}
This concludes our proof.
\end{proof}

\paragraph{Tabular MGs.} Tabular MGs are a special case of kernel MGs where the feature vectors are $|\SC||\A||\B|$-dimensional one-hot vectors. From the standard elliptical potential lemma, we know $d(\eps)=\TO(|\SC||\A||\B|)$ for tabular MDPs, suggesting their BEE dimension is also upper bounded $\TO(|\SC||\A||\B|)$.

\paragraph{Linear MGs.} When the feature vectors are $d$-dimensional, we can recover linear MGs. Similarly, by the standard elliptical potential lemma, we have the BEE dimension of linear MGs is upper bounded $\TO(d)$.

\subsection{Generalized Linear Complete Models}
An important variant of linear MDPs is the generalized linear complete models proposed by \cite{wang2019optimism}. Here we also generalize it into Markov games:
\begin{definition}[Generalized linear complete models]
In $d$-dimensional generalized linear complete models, for each step $h\in[H]$, there exists a feature mapping $\phi_{h}:\SC\times\A\times\B\mapsto\R^d$ and a link function $\sigma$ such that:
\begin{itemize}
	\item for the generalized linear function class $\FC_h=\{\sigma(\phi_h(\cdot)^{\top}w)|w\in\WC\}$ where $\WC\subset\R^d$, realizability and completeness are both satisfied;
	\item the link function is strictly monotone, i.e., there exist $0<c_1<c_2<\infty$ such that $\sigma'\in[c_1,c_2]$.
	\item $\phi_h,w$ satisfy the regularization conditions: $\Vert\phi_h(s,a,b)\Vert\leq R, \Vert w\Vert\leq R$ for all $s,a,b,h$ where $R>0$ is a constant. 
\end{itemize}
\end{definition} 

When the link function is $\sigma(x)=x$, the generalized linear complete models reduce to the linear complete models, which contain instances such as linear MGs and LQRs. The following proposition shows that generalized linear complete models also have low BEE dimensions:
\begin{proposition}
	\label{prop:BEE generalized}
	If a generalized linear complete model has dimension $d$, then for any policy classes $\Pi$ and $\Pi'$, its BEE dimension can be bounded as follows:
	\begin{align*}
		\dbee(\FC,\eps,\Pi,\Pi')\leq \TO(d{c_2^2}/{c_1^2}).
	\end{align*}
\end{proposition}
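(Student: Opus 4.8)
The plan is to follow the same two-stage strategy used for kernel MGs in Proposition~\ref{prop:BEE kernel}: first reduce the BEE dimension to the Eluder dimension of each $\FC_h$ via completeness, and then bound the latter by an elliptical-potential argument. Since the generalized linear complete model is by definition complete, Proposition~\ref{prop:BEE} gives $\dbee(\FC,\eps,\Pi,\Pi')\le\max_{h\in[H]}\DE(\FC_h,\eps)$, so it suffices to bound $\DE(\FC_h,\eps)$ for the class $\FC_h=\{\sigma(\phi_h(\cdot)^{\top}w):w\in\WC\}$.

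The key step is to linearize the Eluder conditions using the monotone link function. Suppose $\DE(\FC_h,\eps)=k$; then there exist feature vectors $\phi_1,\ldots,\phi_k$ and weight pairs $\{w_{1,t}\},\{w_{2,t}\}\subset\WC$ such that for all $t\in[k]$, $\sum_{i=1}^{t-1}(\sigma(\phi_i^{\top}w_{1,t})-\sigma(\phi_i^{\top}w_{2,t}))^2\le(\eps')^2$ while $|\sigma(\phi_t^{\top}w_{1,t})-\sigma(\phi_t^{\top}w_{2,t})|\ge\eps'$ for some $\eps'\ge\eps$. Writing $v_t=w_{1,t}-w_{2,t}$ and applying the mean value theorem together with $\sigma'\in[c_1,c_2]$, I would sandwich each increment as $c_1|\phi_i^{\top}v_t|\le|\sigma(\phi_i^{\top}w_{1,t})-\sigma(\phi_i^{\top}w_{2,t})|\le c_2|\phi_i^{\top}v_t|$. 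This converts the two conditions into their linear analogues $\sum_{i=1}^{t-1}(\phi_i^{\top}v_t)^2\le(\eps')^2/c_1^2$ and $|\phi_t^{\top}v_t|\ge\eps'/c_2$, at the cost of the mismatch factor $c_2/c_1$ between the cumulative-constraint side (governed by the lower bound $c_1$) and the separation side (governed by the upper bound $c_2$).

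From here the argument mirrors Proposition~\ref{prop:BEE kernel}. Define $\Sigma_t=\sum_{i=1}^{t-1}\phi_i\phi_i^{\top}+\lambda I$ with $\lambda=\eps^2/(4R^2c_1^2)$; using $\|v_t\|\le 2R$ this yields $\|v_t\|_{\Sigma_t}^2\le(\eps')^2/c_1^2+\eps^2/c_1^2\le 2(\eps')^2/c_1^2$, and Cauchy--Schwarz combined with $|\phi_t^{\top}v_t|\ge\eps'/c_2$ gives $\|\phi_t\|_{\Sigma_t^{-1}}^2\ge c_1^2/(2c_2^2)$. Since $c_1<c_2$ this lower bound is below $1$, so summing over $t\in[k]$ and comparing against the log-determinant upper bound from the elliptical potential lemma (Lemma~F.3 in \cite{du2021bilinear}) produces $k\cdot c_1^2/(2c_2^2)\le\TO(d)$, i.e. $k\le\TO(dc_2^2/c_1^2)$, which is the claimed bound.

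The main obstacle is the linearization step: the entire $c_2^2/c_1^2$ dependence is produced there, and one must track carefully how the two-sided control on $\sigma'$ propagates separately through the squared cumulative sum and through the single separation inequality, since a careless bound would either lose the sharp exponent or fail to reduce cleanly to the linear Eluder computation. Once the reduction is set up, the remaining elliptical-potential estimate is routine and essentially identical to the kernel case.
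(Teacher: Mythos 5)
Your proposal is correct and follows essentially the same route as the paper: reduce BEE to Eluder dimension via completeness and Proposition~\ref{prop:BEE}, linearize the independence conditions using $\sigma'\in[c_1,c_2]$ to obtain exactly the two inequalities $\sum_{i<t}(\phi_i^{\top}v_t)^2\le(\eps')^2/c_1^2$ and $|\phi_t^{\top}v_t|\ge\eps'/c_2$, and then rerun the elliptical potential argument from Proposition~\ref{prop:BEE kernel}. Your writeup is in fact more explicit than the paper's (which states only the modified inequalities and defers the rest), and the details you supply are consistent with it.
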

\begin{proof}
The proof is similar to Proposition~\ref{prop:BEE kernel}, except (\ref{eq:eg 1}) and (\ref{eq:eg 2}) become
\begin{align*}
	&\sum_{i=1}^{t-1}c_1^2(\langle \phi_i,w_{1,t}-w_{2,t}\rangle)^2\leq\sum_{i=1}^{t-1}(\sigma(\phi_i^{\top} w_{1,t})-\sigma(\phi_i^{\top} w_{2,t}))^2\leq(\eps')^2,\\\
	&c_2|\langle\phi_t,w_{1,t}-w_{2,t}\rangle|\geq|\sigma(\phi_t^{\top} w_{1,t})-\sigma(\phi_t^{\top} w_{2,t})|\geq\eps'.
\end{align*}
Then repeat the arguments in the proof of Proposition~\ref{prop:BEE kernel}, we have $\DE(\FC_h,\eps)\leq\TO(d{c_2^2}/{c_1^2})$ for all $h\in[H]$. Since $\FC$ satisfies completeness, we can use Proposition~\ref{prop:BEE} and obtain 
\begin{align*}
	\dbee(\FC,\eps,\Pi,\Pi')\leq \TO(d{c_2^2}/{c_1^2}).
\end{align*}

\end{proof}
\section{Proofs of Lemmas in Section~\ref{sec:proof sketch}}
\subsection{Proof of Lemma~\ref{lem:optimism}}
\label{proof lemma optimism}
Let $\VC$ be a $\rho$-cover of $\GC$ with respect to $\Vert\cdot\Vert_{\infty}$. Consider an arbitrary fixed tuple $(\mu,t,h,g)\in\Pi\times[K]\times[H]\times\GC$. Define $W_{t,k}(h,g,\mu)$ as follows:
\begin{align*}
W_{t,k}(h,g,\mu):=&(g_h(s^k_h,a^k_h,b_h^k)-r_h^k-Q^{\mu,\nu^t}_{h+1}(s^k_{h+1},\mu,\nu^t))^2\notag\\
&-(Q^{\mu,\nu^t}_{h}(s_h^k,a_h^k,b_h^k)-r_h^k-Q^{\mu,\nu^t}_{h+1}(s^k_{h+1},\mu,\nu^t))^2,
\end{align*}
and $\Fil_{k,h}$ be the filtration induced by $\{\nu^1,\cdots,\nu^K\}\cup\{s^i_1,a^i_1,b^i_1,r^i_1,\cdots,s^i_{H+1}\}_{i=1}^{k-1}\cup\break\{s^k_1,a^k_1,b^k_1,r^k_1,\cdots,s^k_h,a^k_h,b^k_h\}$. Then we have for all $k\leq t-1$,
\begin{align*}
\E[W_{t,k}(h,g,\mu)|\Fil_{k,h}]=[(g_h-Q^{\mu,\nu^t}_h)(s_h^k,a_h^k,b_h^k)]^2,
\end{align*}
and 
\begin{align*}
\Var[W_{t,k}(h,g,\mu)|\Fil_{k,h}]\leq 4\VB^2\E[W_{t,k}(h,g,\mu)|\Fil_{k,h}].
\end{align*}
By Freedman's inequality, with probability at least $1-\delta/4$, we have
\begin{align*}
&\bigg|\sum_{k=1}^{t-1}W_{t,k}(h,g,\mu)-\sum_{k=1}^{t-1}[(g_h-Q^{\mu,\nu^t}_h)(s_h^k,a_h^k,b_h^k)]^2\bigg|\notag\\
&\leq\ \BO\Bigg(\VB\sqrt{\log\frac{1}{\delta}\cdot\sum_{k=1}^{t-1}[(g_h-Q^{\mu,\nu^t}_h)(s_h^k,a_h^k,b_h^k)]^2}+\VB^2\log\frac{1}{\delta}\Bigg).
\end{align*}

By taking union bound over $\Pi\times[K]\times[H]\times\VC$ and the non-negativity of $\sum_{k=1}^{t-1}[(g_h-Q^{\mu,\nu^t}_h)(s_h^k,a_h^k,b_h^k)]^2$, we have with probability at least $1-\delta/4$, for all $(\mu,k,h,g)\in\Pi\times[K]\times[H]\times\VC$,
\begin{equation*}
-\sum_{k=1}^{t-1}W_{t,k}(h,g,\mu)\leq\BO(\VB^2\iota),
\end{equation*}
where $\iota=\log({HK|\VC||\Pi|}/{\delta})$. This implies for all $(\mu,t,h,g)\in\Pi\times[K]\times[H]\times\GC$,
\begin{align*}
&\sum_{k=1}^{t-1}(Q^{\mu,\nu^t}_{h}(s_h^k,a_h^k,b_h^k)-r_h^k-Q^{\mu,\nu^t}_{h+1}(s^k_{h+1},\mu,\nu^t))^2\notag\\
&\qquad\leq\sum_{k=1}^{t-1}(g_h(s^k_h,a^k_h,b_h^k)-r_h^k-Q^{\mu,\nu^t}_{h+1}(s^k_{h+1},\mu,\nu^t))^2+\BO(\VB^2\iota+\VB t\rho).
\end{align*}
Choose $\rho=\VB/K$ and we know that with probability at least $1-\delta$ for all $\mu\in\Pi$ and $t\in[K]$, $Q^{\mu,\nu^t}\in\B_{\D_{1:t-1}}(\mu,\nu^t)$. This concludes our proof.

\subsection{Proof of Lemma~\ref{lem:bounded error}}
\label{proof lemma bounded error}
Let $\ZC$ be a $\rho$-cover of $\FC$ with respect to $\Vert\cdot\Vert_{\infty}$. Consider an arbitrary fixed tuple $(\mu,t,h,f)\in\Pi\times[K]\times[H]\times\FC$. Let 
\begin{align*}
	X_{t,k}(h,f,\mu):=&(f_h(s^k_h,a^k_h,b_h^k)-r_h^k-f_{h+1}(s^k_{h+1},\mu,\nu^t))^2\notag\\
	&-((\T^{\mu,\nu^t}_{h}f_{h+1})(s_h^k,a_h^k,b_h^k)-r_h^k-f_{h+1}(s^k_{h+1},\mu,\nu^t))^2,
\end{align*}
and $\Fil_{k,h}$ be the filtration induced by $\{\nu^1,\cdots,\nu^K\}\cup\{s^i_1,a^i_1,b^i_1,r^i_1,\cdots,s^i_{H+1}\}_{i=1}^{k-1}\cup\break\{s^k_1,a^k_1,b^k_1,r^k_1,\cdots,s^k_h,a^k_h,b^k_h\}$. Then we have for all $k\leq t-1$,
\begin{align*}
	\E[X_{t,k}(h,f,\mu)|\Fil_{k,h}]=[(f_h-\T^{\mu,\nu^t}_{h}f_{h+1})(s_h^k,a_h^k,b_h^k)]^2,
\end{align*}
and 
\begin{align*}
	\Var[X_{t,k}(h,f,\mu)|\Fil_{k,h}]\leq 4\VB^2\E[X_{t,k}(h,f,\mu)|\Fil_{k,h}].
\end{align*}
By Freedman's inequality, with probability at least $1-\delta$, 
\begin{align*}
	&\bigg|\sum_{k=1}^{t-1}X_{t,k}(h,f,\mu)-\sum_{k=1}^{t-1}[(f_h-\T^{\mu,\nu^t}_{h}f_{h+1})(s_h^k,a_h^k,b_h^k)]^2\bigg|\notag\\
	&\qquad\leq\ \BO\Bigg(\VB\sqrt{\log\frac{1}{\delta}\cdot\sum_{k=1}^{t-1}[(f_h-\T^{\mu,\nu^t}_{h}f_{h+1})(s_h^k,a_h^k,b_h^k)]^2}+\VB^2\log\frac{1}{\delta}\Bigg).
\end{align*}

By taking union bound over $\Pi\times[K]\times[H]\times\ZC$, we have with probability at least $1-\delta$, for all $(\mu,t,h,f)\in\Pi\times[K]\times[H]\times\ZC$,
\begin{align}	
&\bigg|\sum_{k=1}^{t-1}X_{t,k}(h,f,\mu)-\sum_{k=1}^{t-1}[(f_h-\T^{\mu,\nu^t}_{h}f_{h+1})(s_h^k,a_h^k,b_h^k)]^2\bigg|\notag\\
&\qquad\leq\ \BO\Bigg(\VB\sqrt{\iota\cdot\sum_{k=1}^{t-1}[(f_h-\T^{\mu,\nu^t}_{h}f_{h+1})(s_h^k,a_h^k,b_h^k)]^2}+\VB^2\iota\Bigg).\label{eq:conc-1}
\end{align}
where $\iota=\log({HK|\ZC||\Pi|}/{\delta})$. 

Conditioned on the above event being true, we consider an arbitrary pair $(h,t,\mu)\in[H]\times[K]\times\Pi$. By the definition of $\B_{\D_{1:t-1}}(\mu,\nu^t)$ and Assumption~\ref{ass:realize}, we have:
\begin{align*}
\sum_{k=1}^{t-1}X_{t,k}(h,f^{t,\mu},\mu)=&\sum_{k=1}^{t-1}(f_h(s^k_h,a^k_h,b_h^k)-r_h^k-f_{h+1}(s^k_{h+1},\mu,\nu^t))^2\notag\\
&-((\T^{\mu,\nu^t}_{h}f_{h+1})(s_h^k,a_h^k,b_h^k)-r_h^k-f_{h+1}(s^k_{h+1},\mu,\nu^t))^2\\
\leq&\sum_{k=1}^{t-1}(f_h(s^k_h,a^k_h,b_h^k)-r_h^k-f_{h+1}(s^k_{h+1},\mu,\nu^t))^2\notag\\
&-\inf_{g\in\GC}(g_h(s_h^k,a_h^k,b_h^k)-r_h^k-f_{h+1}(s^k_{h+1},\mu,\nu^t))^2\\
\leq&\beta.
\end{align*}
Let $l^{t,\mu}=\arg\min_{l\in\ZC}\max_{h\in[H]}\Vert f^{t,\mu}_h-l^{t,\mu}_h\Vert_{\infty}$. By the definition of $\ZC$, we have
\begin{align}
	\label{eq:bounded-1}
\sum_{k=1}^{t-1}X_{t,k}(h,l^{t,\mu},\mu)\leq \BO(\VB t\rho+\beta).
\end{align}
By (\ref{eq:conc-1}), we know:
\begin{align}	
	&\bigg|\sum_{k=1}^{t-1}X_{t,k}(h,l^{t,\mu},\mu)-\sum_{k=1}^{t-1}[(l^{t,\mu}_h-\T^{\mu,\nu^t}_{h}l^{t,\mu}_{h+1})(s_h^k,a_h^k,b_h^k)]^2\bigg|\notag\\
	&\qquad\leq\ \BO\Bigg(\VB\sqrt{\iota\cdot\sum_{k=1}^{t-1}[(l^{t,\mu}_h-\T^{\mu,\nu^t}_{h}l^{t,\mu}_{h+1})(s_h^k,a_h^k,b_h^k)]^2}+\VB^2\iota\Bigg).\label{eq:bounded-2}
\end{align}
Combining (\ref{eq:bounded-1}) and (\ref{eq:bounded-2}), we obtain
\begin{align*}
\sum_{k=1}^{t-1}[(l^{t,\mu}_h-\T^{\mu,\nu^t}_{h}l^{t,\mu}_{h+1})(s_h^k,a_h^k,b_h^k)]^2\leq\BO(\VB^2\iota+\VB t\rho+\beta).
\end{align*}
This implies that
\begin{align*}
	\sum_{k=1}^{t-1}[(f^{t,\mu}_h-\T^{\mu,\nu^t}_{h}f^{t,\mu}_{h+1})(s_h^k,a_h^k,b_h^k)]^2\leq\BO(\VB^2\iota+\VB t\rho+\beta).
\end{align*}
Choose $\rho=\VB/K$ and we can obtain (b). For (a), simply let $\Fil_{k,h}$ be the filtration induced by $\{\nu^1,\cdots,\nu^K\}\cup\{\mu^i,s^i_1,a^i_1,b^i_1,r^i_1,\cdots,s^i_{H+1}\}_{i=1}^{k-1}\cup\mu^k$ and repeat the above arguments, which concludes our proof. 

\subsection{Proof of Lemma~\ref{lem:performance}}
\label{proof lemma performance}
First notice that $\OVT(\mu^t)=f^{t,\mu^t}_1(s_1,\mu^t,\nu^t)$. Therefore, we have
\begin{align*}
&\OVT(\mu^t)-V^{\pi^t}_1(s_1)=\E_{a_1\sim\mu^t(\cdot|s_1),b_1\sim\nu^t(\cdot|s_1)}[f^{t,\mu^t}_1(s_1,a_1,b_1)-Q^{\pi^t}_1(s_1,a_1,b_1)]\\
&\qquad=\E_{a_1\sim\mu^t(\cdot|s_1),b_1\sim\nu^t(\cdot|s_1)}\big[\E_{s_2\sim P_1(\cdot|s_1,a_1,b_1)}[f^{t,\mu^t}_2(s_2,\mu^t,\nu^t)]-\E_{s_2\sim P_1(\cdot|s_1,a_1,b_1)}[V^{\pi^t}_2(s_2)]\big]\\
&\qquad\quad+\E_{a_1\sim\mu^t(\cdot|s_1),b_1\sim\nu^t(\cdot|s_1)}[(f^{t,\mu^t}_1-\T^{\mu^t,\nu^t}_1f_{2}^{t,\mu^t})(s_1,a_1,b_1)]\\
&\qquad=\E_{s_2\sim\pi^t}[f^{t,\mu^t}_2(s_2,\mu^t,\nu^t)-V^{\pi^t}_2(s_2)]+\E_{\pi^t}[(f^{t,\mu^t}_1-\T^{\mu^t,\nu^t}_1f_{2}^{t,\mu^t})(s_1,a_1,b_1)].
\end{align*}
Repeat the above procedures and we can obtain Lemma~\ref{lem:performance}. This concludes our proof.
\section{Proof of Corollary~\ref{cor:selfplay}}
\label{proof corollary selfplay}
From Theorem~\ref{thm:adaptive}, we have with probability at least $1-{\delta}$, for all $i\in[n]$
\begin{align*}
\max_{\mu_i\in\Pi_i}\frac{1}{K}\sum_{t=1}^KV_{1,i}^{\mu_i\times\mu_{-i}^t}(s_1)\leq\frac{1}{K}\sum_{t=1}^KV_{1,i}^{\mu_i^t\times\mu_{-i}^t}(s_1)+{\eps}.
\end{align*}
By the definition of $\widehat{\pi}$, this is equivalent to
\begin{align*}
\max_{\mu_i\in\Pi_i}V_{1,i}^{\mu_i\times\widehat{\mu}_{-i}}(s_1)\leq V_{1,i}^{\widehat{\pi}}(s_1)+{\eps},
\end{align*}
where $\widehat{\mu}_{-i}$ is uniformly sampled from $\{\mu^t_{-i}\}_{t=1}^K$ and thus is the marginal distribution of $\widehat{\pi}$ over the agents other than $i$. Therefore, by the definition of CCE in (\ref{eq:def cce}), $\widehat{\pi}$ is $\eps$-approximate CCE with probability at least $1-\delta$, which concludes our proof.
\section{Proof of Theorem~\ref{thm:cmdp}}
\label{proof:thm cmdp}
In this section we present the proof for Theorem~\ref{thm:cmdp}. Our proof mainly consists of four steps:
\begin{itemize}
	\item Prove $\OVTR(\mu)$ and $\OVTG(\mu)$ are optimistic estimations of $ \VR^{\mu}(s_1)$ and $\VG^{\mu}(s_1)$ for all $t\in[K]$ and $\mu\in\Pi$.
	\item Bound the total estimation error $\sum_{t=1}^K\OVTR(\mu^t)-\VR^{\mu^t}(s_1)$ and $\sum_{t=1}^K\OVTG(\mu^t)-\VG^{\mu^t}(s_1)$.
	\item Bound the regret by decomposing it into estimation error and online learning error induced by Hedge.
	\item Bound the constraint violation by strong duality.
\end{itemize}
\paragraph{Step 1: Prove optimism.}
First we can show that the constructed set $\B_{\D^r_{1:t-1}}(\mu)$ ($\B_{\D^g_{1:t-1}}(\mu)$) is not vacuous in the sense that the true action-value function $\QR^{\mu}$ ($\QG^{\mu}$) belongs to it with high probability:
\begin{lemma}
	\label{lem:optimism single}
	With probability at least $1-{\delta}/{4}$, we have for all $t\in[K]$ and $\mu\in\Pi$,
	\begin{align*}
		\QR^{\mu}\in\B_{\D^r_{1:t-1}}(\mu), \QG^{\mu}\in\B_{\D^g_{1:t-1}}(\mu).
	\end{align*}	
\end{lemma}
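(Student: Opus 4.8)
The plan is to mirror the proof of Lemma~\ref{lem:optimism} in Appendix~\ref{proof lemma optimism}, now specialized to the single-agent setting and applied \emph{separately} to the reward value functions $(\FCR,\GCR)$ and the utility value functions $(\FCG,\GCG)$. The crucial structural simplification is that the fictitious opponent in the CMDP formulation only controls the scalar dual variable $Y$, which enters the game solely through the composite reward $r_h+Yg_h$; the two target action-value functions $\QRH^\mu$ and $\QGH^\mu$ that \spealg actually estimates are themselves independent of $Y$. Consequently, unlike the general adaptive-opponent case behind Theorem~\ref{thm:adaptive}, no union bound over the (infinite) opponent policy class is required, and the argument is effectively of the oblivious type even though $Y_t$ is chosen adaptively.

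First I would establish the reward claim $\QR^\mu\in\B_{\D^r_{1:t-1}}(\mu)$. Fixing a tuple $(\mu,t,h,g)\in\Pi\times[K]\times[H]\times\GCR$, I define the martingale-difference term
\begin{align*}
W^r_{t,k}(h,g,\mu):={}&\big(g_h(s^k_h,a^k_h)-r^k_h-Q^\mu_{r,h+1}(s^k_{h+1},\mu)\big)^2\\
&-\big(\QRH^\mu(s^k_h,a^k_h)-r^k_h-Q^\mu_{r,h+1}(s^k_{h+1},\mu)\big)^2,
\end{align*}
together with the filtration $\Fil_{k,h}$ generated by the trajectories collected before step $h$ of episode $k$. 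Because realizability in Assumption~\ref{ass:realize single} guarantees the Bellman equation $\QRH^\mu=\T_h^{\mu,r}Q^\mu_{r,h+1}$, a direct computation yields the conditional mean $\E[W^r_{t,k}(h,g,\mu)\mid\Fil_{k,h}]=[(g_h-\QRH^\mu)(s^k_h,a^k_h)]^2$ and the conditional-variance bound $\Var[W^r_{t,k}(h,g,\mu)\mid\Fil_{k,h}]\leq 4H^2\,\E[W^r_{t,k}(h,g,\mu)\mid\Fil_{k,h}]$, using $0\le\QRH^\mu,g_h\le H$. Freedman's inequality, followed by a union bound over $\Pi\times[K]\times[H]$ and a $\rho$-cover of $\GCR$ with $\rho=H/K$ (so that the cover size is controlled by $\ncovr$), then shows that with the stated probability the empirical Bellman loss of $\QR^\mu$ exceeds its infimum over $\GCR$ by at most $\BO(H^2\iota_r)$ with $\iota_r=\log(\ncovr|\Pi|/\delta)$; choosing $\beta_r=cH^2\log(\ncovr|\Pi|/\delta)$ absorbs this slack and gives $\QR^\mu\in\B_{\D^r_{1:t-1}}(\mu)$.

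The utility claim $\QG^\mu\in\B_{\D^g_{1:t-1}}(\mu)$ follows by repeating the argument with $r^k_h$ replaced by the observed utilities $g^k_h$, $\GCR$ by $\GCG$, $\QRH^\mu$ by $\QGH^\mu$, and $\beta_r$ by $\beta_g$; the boundedness $\QGH^\mu\le H$ still holds since $g_h\in[0,1]$. A final union bound allocating $\delta/8$ to each of the two Freedman events (so their intersection fails with probability at most $\delta/4$) delivers the statement. The one point deserving genuine care, rather than being routine bookkeeping, is verifying that the martingale structure survives the adaptive data collection: the sampled policies $\mu^k\sim p^k$ and the dual iterates $Y_k$ both depend on past data, yet because the targets $\QRH^\mu,\QGH^\mu$ are fixed functions satisfying the Bellman equation pointwise, conditioning on $\Fil_{k,h}$ still produces the clean squared-residual mean, and the concentration argument goes through unchanged. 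This is precisely why estimating $\VR^\mu$ and $\VG^\mu$ separately lets \cmdpalg sidestep any dependence on the unbounded range of $Y$.
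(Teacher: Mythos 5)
Your proposal is correct and follows essentially the same route as the paper, which simply omits this proof as "almost the same as Lemma~\ref{lem:optimism}": you reproduce that Freedman-plus-covering argument for each of $(\FCR,\GCR)$ and $(\FCG,\GCG)$ separately and correctly identify why no union bound over the dual variable is needed. One small quibble: the Bellman identity $\QRH^{\mu}=\T_h^{\mu,r}Q^{\mu}_{r,h+1}$ holds by definition of the value functions, not by realizability; realizability is instead what guarantees $\QR^{\mu}\in\FCR$ so that membership in $\B_{\D^r_{1:t-1}}(\mu)$ is even possible.
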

\begin{proof}
The proof is almost the same as Lemma~\ref{lem:optimism} and thus is omitted here.
\end{proof}
Then since $\OVTR(\mu)=\max_{f\in\B_{\D^g_{1:t-1}}(\mu)}f(s_1,\mu)$, we know for all $t\in[K]$ and $\mu\in\Pi$,
\begin{align*}
\OVTR(\mu)\geq \QR^{\mu}(s_1,\mu)=\VR^{\mu}(s_1).
\end{align*}
Similarly, we know $\OVTG(\mu)\geq\VG^{\mu}(s_1)$.

\paragraph{Step 2: Bound estimation error.} 
Next we need to show the estimation error $\sum_{t=1}^K\OVTR(\mu^t)-\VR^{\mu^t}(s_1)$ and $\sum_{t=1}^K\OVTG(\mu^t)-\VG^{\mu^t}(s_1)$ are small. Let $f^{t,\mu,r}=\arg\max_{f\in\B_{\D^r_{1:t-1}}(\mu)}f(s_1,\mu)$ and $f^{t,\mu,g}=\arg\max_{f\in\B_{\D^g_{1:t-1}}(\mu)}f(s_1,\mu)$. Then we have
\begin{lemma}
	\label{lem:bounded error single}
	With probability at least $1-{\delta}/{4}$, we have for all $t\in[K]$, $h\in[H]$ and $\mu\in\Pi$,
	\begin{align*}
	(a)\quad&\sum_{k=1}^{t-1}\E_{\mu^k}\bigg[\Big(f^{t,\mu,r}_h(s_h,a_h)-(\T^{\mu,r}_h f^{t,\mu,r}_{h+1})(s_h,a_h)\Big)^2\bigg]\leq\BO(\beta_r),\\
	&\sum_{k=1}^{t-1}\E_{\mu^k}\bigg[\Big(f^{t,\mu,g}_h(s_h,a_h)-(\T^{\mu,g}_h f^{t,\mu,g}_{h+1})(s_h,a_h)\Big)^2\bigg]\leq\BO(\beta_g),\\
	(b)\quad&\sum_{k=1}^{t-1}\Big(f^{t,\mu,r}_h(s_h^k,a_h^k)-(\T^{\mu,r}_h f^{t,\mu,r}_{h+1})(s_h^k,a_h^k)\Big)^2\leq\BO(\beta_r),\\
	&\sum_{k=1}^{t-1}\Big(f^{t,\mu,g}_h(s_h^k,a_h^k)-(\T^{\mu,g}_h f^{t,\mu,g}_{h+1})(s_h^k,a_h^k)\Big)^2\leq\BO(\beta_g).
	\end{align*}	
\end{lemma}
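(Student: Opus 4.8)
The plan is to mirror the proof of Lemma~\ref{lem:bounded error} in the single-agent setting, running the argument twice: once for the reward function $r$ with confidence set $\B_{\D^r_{1:t-1}}(\mu)$ and classes $\{\FCR,\GCR\}$, and once for the utility function $g$ with $\B_{\D^g_{1:t-1}}(\mu)$ and $\{\FCG,\GCG\}$. Since the two cases are fully symmetric (replace $r,\FCR,\GCR,\beta_r$ by $g,\FCG,\GCG,\beta_g$), I would establish the reward case in detail and claim the utility case analogously, splitting the $\delta/4$ probability budget across the two functions and the two parts (a),(b). Recall that $\VB=H$ here and that the player's trajectory carries no opponent action, so $\T^{\mu,r}_h$ depends only on $(s,a)$.

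First, fix the reward case and let $\ZC$ be a $\rho$-cover of $\FCR$ in $\Vert\cdot\Vert_\infty$. For a fixed tuple $(\mu,t,h,f)\in\Pi\times[K]\times[H]\times\FCR$ I would define the one-step residual gap
\begin{align*}
X_{t,k}(h,f,\mu):=&\big(f_h(s^k_h,a^k_h)-r_h^k-f_{h+1}(s^k_{h+1},\mu)\big)^2\\
&-\big((\T^{\mu,r}_{h}f_{h+1})(s_h^k,a_h^k)-r_h^k-f_{h+1}(s^k_{h+1},\mu)\big)^2,
\end{align*}
with filtration $\Fil_{k,h}$ generated by the data up to step $h$ of episode $k$. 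A direct computation gives $\E[X_{t,k}(h,f,\mu)\mid\Fil_{k,h}]=[(f_h-\T^{\mu,r}_{h}f_{h+1})(s_h^k,a_h^k)]^2$ and the crucial variance control $\Var[X_{t,k}(h,f,\mu)\mid\Fil_{k,h}]\le 4\VB^2\,\E[X_{t,k}(h,f,\mu)\mid\Fil_{k,h}]$, valid precisely because the two squared terms share the regression target $r_h^k+f_{h+1}(s^k_{h+1},\mu)$. Applying Freedman's inequality and taking a union bound over $\Pi\times[K]\times[H]\times\ZC$ yields, with the allotted probability, a bound on the gap between the empirical residual sum and its in-sample counterpart of order $\BO(\VB\sqrt{\iota\cdot(\cdots)}+\VB^2\iota)$ with $\iota=\log(HK|\ZC||\Pi|/\delta)$.

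Next I would invoke the definition of $\B_{\D^r_{1:t-1}}(\mu)$ in~(\ref{eq:cmdp eva}) together with realizability and generalized completeness from Assumption~\ref{ass:realize single}: since $\QR^{\mu}\in\FCR_h$ and $\T^{\mu,r}_hf_{h+1}\in\GCR_h$, the least-squares slack condition forces $\sum_{k}X_{t,k}(h,f^{t,\mu,r},\mu)\le\beta_r$. Passing from $f^{t,\mu,r}$ to its nearest cover element $l^{t,\mu}\in\ZC$ incurs only an $\BO(\VB t\rho)$ error, so combining with the concentration estimate gives $\sum_{k=1}^{t-1}[(f^{t,\mu,r}_h-\T^{\mu,r}_hf^{t,\mu,r}_{h+1})(s_h^k,a_h^k)]^2\le\BO(\VB^2\iota+\VB t\rho+\beta_r)$. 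Choosing $\rho=\VB/K$ absorbs the $\VB t\rho$ term into $\BO(\beta_r)$, establishing part (b) for $r$; the identical argument gives part (b) for $g$.

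For part (a) I would re-run the same computation but enlarge $\Fil_{k,h}$ to also condition on the sampled policy $\mu^k$, exactly as in Lemma~\ref{lem:bounded error}, so that the in-sample squared residuals are replaced by their $\E_{\mu^k}$-expectations, which Freedman's inequality then controls directly. The only delicate point---and the main obstacle---is the variance-to-mean inequality $\Var\le 4\VB^2\E$, which is what converts Freedman's bound into one scaling with the expectation itself (hence a $\beta$-type bound rather than a loose $\sqrt{t}$-type bound); this in turn relies on the shared regression target in $X_{t,k}$. Everything else is a routine specialization of the multi-agent argument, and no new ideas beyond Lemma~\ref{lem:bounded error} are needed.
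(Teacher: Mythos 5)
Your proposal is correct and matches the paper's approach exactly: the paper omits this proof entirely, stating only that it is ``almost the same as Lemma~\ref{lem:bounded error},'' and your adaptation (running the Freedman-plus-covering argument of Lemma~\ref{lem:bounded error} separately for $r$ with $\{\FCR,\GCR,\beta_r\}$ and for $g$ with $\{\FCG,\GCG,\beta_g\}$, dropping the opponent action and using generalized completeness from Assumption~\ref{ass:realize single} to control the slack in the confidence set) is precisely the intended specialization.
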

\begin{proof}
The proof is almost the same as Lemma~\ref{lem:bounded error} and thus is omitted here.
\end{proof}

Besides, using performance difference lemma we can easily bridge $\OVTR(\mu^t)-\VR^{\mu^t}(s_1)$ and $\OVTG(\mu^t)-\VG^{\mu^t}(s_1)$ with Bellman residuals, whose proof is also omitted:
\begin{lemma}
	\label{lem:performance single}
	For any $t\in[K]$, we have
	\begin{align*}
		&\OVTR(\mu^t)-\VR^{\mu^t}(s_1)=\sum_{h=1}^{H}\E_{\mu^t}[(f^{t,\mu^t,r}_h-\T^{\mu^t,r}f^{t,\mu^t,r}_{h+1})(s_h,a_h)],\\
		&\OVTG(\mu^t)-\VG^{\mu^t}(s_1)=\sum_{h=1}^{H}\E_{\mu^t}[(f^{t,\mu^t,g}_h-\T^{\mu^t,g}f^{t,\mu^t,g}_{h+1})(s_h,a_h)].
	\end{align*}
\end{lemma}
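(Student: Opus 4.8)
The plan is to prove Lemma~\ref{lem:performance single} by the same performance-difference telescoping used for Lemma~\ref{lem:performance}, now specialized to the single-agent CMDP dynamics. Since the reward and utility identities are structurally identical, I will carry out the argument for the reward value function and the utility case then follows verbatim after replacing $r$ by $g$ and the data and function classes $(\D^r,\FCR,\GCR)$ by $(\D^g,\FCG,\GCG)$. The starting observation is that, by the selection step of \spealg, the optimistic estimate satisfies $\OVTR(\mu^t)=f^{t,\mu^t,r}_1(s_1,\mu^t)$; that is, it equals the greedy function $f^{t,\mu^t,r}=\arg\max_{f\in\B_{\D^r_{1:t-1}}(\mu^t)}f(s_1,\mu^t)$ evaluated at the initial state under $\mu^t$, so the left-hand side is initialized correctly as a difference of value functions at step $1$.

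First I would perform the one-step unrolling at $h=1$: write $\OVTR(\mu^t)-\VR^{\mu^t}(s_1)=\E_{a_1\sim\mu^t(\cdot|s_1)}[f^{t,\mu^t,r}_1(s_1,a_1)-Q^{\mu^t}_{r,1}(s_1,a_1)]$ and add and subtract $(\T^{\mu^t,r}_1 f^{t,\mu^t,r}_2)(s_1,a_1)$. Using the Bellman equation $Q^{\mu^t}_{r,1}(s_1,a_1)=r_1(s_1,a_1)+\E_{s_2\sim P_1(\cdot|s_1,a_1)}[V^{\mu^t}_{r,2}(s_2)]$ together with the definition $(\T^{\mu^t,r}_1 f^{t,\mu^t,r}_2)(s_1,a_1)=r_1(s_1,a_1)+\E_{s_2}[f^{t,\mu^t,r}_2(s_2,\mu^t)]$, the reward terms cancel and the difference splits into the single-step Bellman residual $\E_{\mu^t}[(f^{t,\mu^t,r}_1-\T^{\mu^t,r}_1 f^{t,\mu^t,r}_2)(s_1,a_1)]$ plus the next-step gap $\E_{s_2\sim\mu^t}[f^{t,\mu^t,r}_2(s_2,\mu^t)-V^{\mu^t}_{r,2}(s_2)]$. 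The latter equals $\E_{s_2\sim\mu^t}\E_{a_2\sim\mu^t(\cdot|s_2)}[f^{t,\mu^t,r}_2(s_2,a_2)-Q^{\mu^t}_{r,2}(s_2,a_2)]$, which has exactly the same form as the quantity we started from, advanced by one step.

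Then I would iterate this decomposition for $h=1,\dots,H$ along the trajectory induced by $\mu^t$, collecting one Bellman-residual term per layer and passing the residual gap down to the next. The recursion terminates because $f^{t,\mu^t,r}_{H+1}\equiv 0$ and $V^{\mu^t}_{r,H+1}\equiv 0$ (there is no reward after $s_{H+1}$), so the final carried-over gap vanishes; summing the collected residuals yields $\sum_{h=1}^{H}\E_{\mu^t}[(f^{t,\mu^t,r}_h-\T^{\mu^t,r}_h f^{t,\mu^t,r}_{h+1})(s_h,a_h)]$, the claimed identity. I expect no genuine obstacle here, since the argument is an exact (equality, not inequality) telescoping; the only points needing care are keeping the reward and utility function classes and datasets separate and confirming that the \spealg output coincides with $f^{t,\mu^t,\cdot}_1(s_1,\mu^t)$ so that both identities start from the correct step-$1$ value gap.
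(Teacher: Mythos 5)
Your proof is correct and matches the paper's intended argument: the paper omits the proof of this lemma precisely because it is the same performance-difference telescoping as in the proof of Lemma~\ref{lem:performance} (Appendix~\ref{proof lemma performance}), specialized to the single-agent Bellman operators $\T^{\mu,r}_h$ and $\T^{\mu,g}_h$. Your one-step unrolling, cancellation of the reward/utility term, recursion down to $h=H$, and termination via $f_{H+1}\equiv 0$ and $V_{H+1}\equiv 0$ are exactly the steps the paper relies on.
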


Therefore, from Lemma~\ref{lem:performance single} we can obtain for any $t\in[K]$,
\begin{align*}
\OVTR(\mu^t)-\VR^{\mu^t}(s_1)=\sum_{h=1}^{H}\E_{\mu^t}[(f^{t,\mu^t,r}_h-\T^{\mu^t,r}f^{t,\mu^t,r}_{h+1})(s_h,a_h)],
\end{align*}
which implies
\begin{align}
\label{eq:regret-bellman cmdp}
\sum_{t=1}^K\OVTR(\mu^t)-\VR^{\mu^t}(s_1)=\sum_{h=1}^{H}\sum_{t=1}^K\E_{\mu^t}[(f^{t,\mu^t,r}_h-\T_h^{\mu^t,r}f^{t,\mu^t,r}_{h+1})(s_h,a_h)].
\end{align}

Similar to Section~\ref{sec:proof sketch}, from Lemma~\ref{lem:BEE}, conditioning on the event in Lemma~\ref{lem:bounded error single} holds true, we have with probability at least $1-\delta/4$
\begin{align*}
&\sum_{t=1}^K\E_{\mu^t}[(f^{t,\mu^t,r}_h-\T^{\mu^t,r}_hf^{t,\mu^t,r}_{h+1})(s_h,a_h)]\notag\\
&\qquad\leq\BO\bigg(\sqrt{H^2K\DBE\Big(\mathcal{F},\sqrt{{1}/{K}},\Pi,r\Big)\log(\N_{\FCR\cup\GCR}(H/K)KH|\Pi|/\delta)}\bigg).
\end{align*}

Substitute the above bounds into (\ref{eq:regret-bellman cmdp}) and we have:
\begin{align}
&\sum_{t=1}^K\OVTR(\mu^t)-\VR^{\mu^t}(s_1)\notag\\
&\qquad\leq\BO\bigg(H^2\sqrt{K\DBE\Big(\FCR,\sqrt{{1}/{K}},\Pi,r\Big)\log(\N_{\FCR\cup\GCR}(H/K)KH|\Pi|/\delta)}\bigg). \label{eq:estimation r}
\end{align}
Similarly, we have
\begin{align}
	&\sum_{t=1}^K\OVTG(\mu^t)-\VG^{\mu^t}(s_1)\notag\\
	&\qquad\leq\BO\bigg(H^2\sqrt{K\DBE\Big(\FCG,\sqrt{{1}/{K}},\Pi,g\Big)\log(\N_{\FCG\cup\GCG}(H/K)KH|\Pi|/\delta)}\bigg).\label{eq:estimation g}
\end{align}

\paragraph{Step 3: Bound the regret.} 
Now we can bound the regret. We first decompose the fictitious total regret $\sum_{t=1}^{K}(\VR^{\OP}(s_1)+Y_t\OVTG(\OP))-\sum_{t=1}^{K}(\VR^{\mu^t}(s_1)+Y_t\OVTG(\mu^t))$ to the following terms:
\begin{align*}
&\sum_{t=1}^{K}(\VR^{\OP}(s_1)+Y_t\OVTG(\OP))-\sum_{t=1}^{K}(\VR^{\mu^t}(s_1)+Y_t\OVTG(\mu^t))\notag\\
&\qquad=\bigg(\underbrace{\sum_{t=1}^{K}\VR^{\OP}(s_1)-\sum_{t=1}^{K}\OVTR(\OP)}_{(1)}\bigg)\\
&\qquad\quad+\bigg(\underbrace{\sum_{t=1}^{K}(\OVTR(\OP)+Y_t\OVTG(\OP))-\sum_{t=1}^{K}\langle\OVTR+Y_t\OVTG,p^t\rangle}_{(2)}\bigg)\notag\\
&\qquad\quad+\bigg(\underbrace{\sum_{t=1}^{K}\langle\OVTR+Y_t\OVTG,p^t\rangle-\sum_{t=1}^{K}(\OVTR(\mu^t)+Y_t\OVTG(\mu^t))}_{(3)}\bigg)\notag\\
&\qquad\quad+\bigg(\underbrace{\sum_{t=1}^{K}\OVTR(\mu^t)-\sum_{t=1}^{K}\VR^{\mu^t}(s_1)}_{(4)}\bigg).
\end{align*}

From Lemma~\ref{lem:optimism single}, we know $(1)\leq0$. Since $p^t$ is updated using Hedge with loss function $\OVT$, we have $(2)\leq H(1+\rchi)\sqrt{K\log|\Pi|}$. $(3)$ is a martingale difference sequence, which implies $(3)\leq\BO\big(H(1+\rchi)\sqrt{K\log(1/\delta)}\big)$ with probability at least $1-\delta/4$. Finally, Step 2 has bounded term $(4)$ in (\ref{eq:estimation r}), which implies
\begin{align}
&\sum_{t=1}^{K}(\VR^{\OP}(s_1)+Y_t\OVTG(\OP))-\sum_{t=1}^{K}(\VR^{\mu^t}(s_1)+Y_t\OVTG(\mu^t))\notag\\
&\qquad\leq\BO\bigg(\bigg(H^2+\frac{H^2}{\sla}\bigg)\sqrt{K\dbeer\log\left(\ncovr|\Pi|/\delta\right)}\bigg)\label{eq:regrets-1}.
\end{align}

Now we only need to bound $-\sum_{t=1}^K Y_t(\OVTG(\muc)-\OVTG(\mu^t))$ if we want to bound the regret $\sum_{t=1}^K(\VR^{\muc}(s_1)-\VR^{\mu^t}(s_1))$. In fact, updating the dual variable $Y^t$ with projected gradient descent guarantees us the following lemma:
\begin{lemma}
\label{lem:dual 1}
Suppose the events in Lemma~\ref{lem:optimism single} hold true, we have
\begin{align*}
-\sum_{t=1}^K Y_t(\OVTG(\muc)-\OVTG(\mu^t))\leq\frac{\alpha H^2K}{2}=\frac{H^2\sqrt{K}}{2}.
\end{align*}
\end{lemma}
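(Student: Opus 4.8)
The plan is to recognize the dual update $Y_{t+1}\gets\proj(Y_t+\alpha(b-\OVTG(\mu^t)))$ as a single step of online projected gradient descent on $[0,\rchi]$ applied to the linear losses $\ell_t(Y):=Y\cdot(\OVTG(\mu^t)-b)$, whose subgradient is $g_t:=\OVTG(\mu^t)-b$. The target bound will then follow from the standard one-step descent guarantee of projected gradient descent, specialized to the comparator $Y=0$.

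First I would replace $\OVTG(\muc)$ by the constant $b$. By the optimism event of Lemma~\ref{lem:optimism single}, $\QG^{\muc}\in\B_{\D^g_{1:t-1}}(\muc)$, so $\OVTG(\muc)=\max_{f\in\B_{\D^g_{1:t-1}}(\muc)}f(s_1,\muc)\geq\VG^{\muc}(s_1)$; since $\muc$ is the optimal and hence feasible policy of Problem~\ref{prob:cmdp}, $\VG^{\muc}(s_1)\geq b$, giving $\OVTG(\muc)\geq b$ for every $t$. Because the projection keeps $Y_t\geq0$, multiplying $\OVTG(\mu^t)-\OVTG(\muc)\leq\OVTG(\mu^t)-b=g_t$ by $Y_t$ yields
\begin{align*}
-\sum_{t=1}^K Y_t(\OVTG(\muc)-\OVTG(\mu^t))=\sum_{t=1}^K Y_t(\OVTG(\mu^t)-\OVTG(\muc))\leq\sum_{t=1}^K Y_t g_t.
\end{align*}

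It then remains to bound $\sum_t Y_t g_t$. Using nonexpansiveness of the projection onto $[0,\rchi]$ against the comparator $0\in[0,\rchi]$, one has $(Y_{t+1})^2\leq(Y_t-\alpha g_t)^2=(Y_t)^2-2\alpha g_t Y_t+\alpha^2 g_t^2$, i.e. $g_t Y_t\leq((Y_t)^2-(Y_{t+1})^2)/(2\alpha)+\alpha g_t^2/2$. Summing telescopes the first term to $(Y_1)^2/(2\alpha)=0$, the crucial point being that $Y_1=0$ eliminates the usual $1/\alpha$ initialization penalty. Since $\OVTG(\mu^t)\in[0,H]$ (as $\FCG$ is $[0,\VB]$-valued with $\VB=H$) and $b\in(0,H]$, we have $g_t^2\leq H^2$, so $\sum_t Y_t g_t\leq\alpha H^2 K/2$; substituting $\alpha=1/\sqrt{K}$ gives $H^2\sqrt{K}/2$. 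The only genuinely nontrivial step is the first reduction, which needs both optimism and feasibility of $\muc$ to compare against $b$; once $\OVTG(\muc)\geq b$ and $Y_1=0$ are in hand, the gradient-descent bound is routine.
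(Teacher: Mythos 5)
Your proposal is correct and follows essentially the same route as the paper's proof: both rely on $Y_1=0$, the non-expansiveness of $\proj$ (equivalently $(\proj(x))^2\leq x^2$), telescoping the squared iterates, bounding $(b-\OVTG(\mu^t))^2\leq H^2$, and using optimism together with feasibility of $\muc$ to replace $b$ by $\OVTG(\muc)$. The only difference is cosmetic — you perform the substitution $\OVTG(\muc)\geq b$ up front and phrase the rest as the standard OGD guarantee against the comparator $Y=0$, while the paper telescopes first and substitutes in the last step.
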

\begin{proof}
See Appendix~\ref{proof lemma dual 1}.
\end{proof}
Substituting Lemma~\ref{lem:dual 1} into (\ref{eq:regrets-1}), we can obtain the bound on $\reg(K)$:
\begin{align*}
&\sum_{t=1}^K(\VR^{\muc}(s_1)-\VR^{\mu^t}(s_1))\leq\BO\bigg(\bigg(H^2+\frac{H^2}{\sla}\bigg)\sqrt{K\dbeer\log\left(\ncovr|\Pi|/\delta\right)}\bigg).
\end{align*}

\paragraph{Step 4: Constraint Violation Analysis.}
Next we need to bound the constraint violation. First notice that $\sum_{t=1}^K Y_t(b-\OVTG(\mu^t))$ is indeed not far from $\sum_{t=1}^K Y(b-\OVTG(\mu^t))$ for any $Y\in[0,\rchi]$, as shown in the following lemma whose proof is deferred to Appendix~\ref{proof lemma dual 2}:
\begin{lemma}
	\label{lem:dual 2}
	For any $Y\in[0,\rchi]$, we have
	\begin{align*}
		\sum_{t=1}^K (Y-Y_t)(b-\OVTG(\mu^t))\leq\frac{(H^2+\rchi^2)\sqrt{K}}{2}.
	\end{align*}
\end{lemma}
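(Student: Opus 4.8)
The plan is to recognize that the dual update $Y_{t+1}=\proj(Y_t+\alpha(b-\OVTG(\mu^t)))$ is exactly online projected gradient descent for the scalar dual variable over the interval $[0,\rchi]$, and that $\sum_{t=1}^K(Y-Y_t)(b-\OVTG(\mu^t))$ is precisely its regret against the fixed comparator $Y$. Accordingly I would run the standard potential (descent-lemma) argument. Writing $c_t:=b-\OVTG(\mu^t)$, and using that $Y\in[0,\rchi]$ equals its own projection together with the nonexpansiveness of Euclidean projection onto the convex set $[0,\rchi]$, I obtain
\begin{align*}
(Y_{t+1}-Y)^2=\big(\proj(Y_t+\alpha c_t)-\proj(Y)\big)^2\leq (Y_t+\alpha c_t-Y)^2.
\end{align*}
Expanding the right-hand side as $(Y_t-Y)^2+2\alpha c_t(Y_t-Y)+\alpha^2c_t^2$ and rearranging gives the per-step inequality
\begin{align*}
c_t(Y-Y_t)\leq\frac{1}{2\alpha}\Big[(Y_t-Y)^2-(Y_{t+1}-Y)^2\Big]+\frac{\alpha}{2}c_t^2.
\end{align*}

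Next I would sum over $t=1,\dots,K$. The first term telescopes to $\frac{1}{2\alpha}[(Y_1-Y)^2-(Y_{K+1}-Y)^2]$, which is at most $\frac{1}{2\alpha}(Y_1-Y)^2=\frac{1}{2\alpha}Y^2\leq\frac{\rchi^2}{2\alpha}$ using $Y_1=0$ and $Y\in[0,\rchi]$. For the second term I would bound $c_t$ uniformly: since $b\in(0,H]$ and each estimate $\OVTG(\mu^t)=\max_{f\in\B_{\D}(\mu^t)}f(s_1,\mu^t)$ lies in $[0,\VB]=[0,H]$ (every $f\in\FCG$ takes values in $[0,\VB]$), we have $|c_t|\leq H$ and hence $\frac{\alpha}{2}\sum_{t=1}^Kc_t^2\leq\frac{\alpha}{2}KH^2$. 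Combining the two pieces yields $\sum_{t=1}^Kc_t(Y-Y_t)\leq\frac{\rchi^2}{2\alpha}+\frac{\alpha KH^2}{2}$, and substituting the prescribed step size $\alpha=1/\sqrt{K}$ produces exactly $\frac{(\rchi^2+H^2)\sqrt{K}}{2}$.

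I do not expect a genuine obstacle: this is the textbook regret bound for projected gradient descent, and the statement is in fact deterministic given the realized estimates $\{\OVTG(\mu^t)\}_{t=1}^K$, so no concentration or union bound is involved. The only points requiring mild care are the sign convention — verifying that $b-\OVTG(\mu^t)$ plays the role of the negative (sub)gradient of the dual objective, so that the potential $(Y_t-Y)^2$ contracts in the correct direction — and the uniform bound $|b-\OVTG(\mu^t)|\leq H$, which relies on both $b\leq H$ and the codomain constraint baked into $\FCG$. Once established, this inequality feeds into the Step~4 constraint-violation analysis, where the comparator choice $Y=\rchi$ (which is admissible since $\rchi\in[0,\rchi]$) is the natural way to extract a bound on the cumulative shortfall and hence on $\vio(K)$.
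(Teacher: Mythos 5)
Your proof is correct and follows essentially the same route as the paper: both apply the nonexpansiveness of the projection onto $[0,\rchi]$, expand the squared distance $(Y_{t+1}-Y)^2$, telescope, bound $|b-\OVTG(\mu^t)|\leq H$, and substitute $Y_1=0$, $Y\leq\rchi$, and $\alpha=1/\sqrt{K}$. The only cosmetic difference is that you state the per-step regret inequality before summing, while the paper telescopes the squared distances directly.
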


Substituting Lemma~\ref{lem:dual 2} into (\ref{eq:regrets-1}) and notice that $b\leq\VG^{\muc}(s_1)\leq\OVTG(\muc)$, we have for any $Y\in[0,\rchi]$,
\begin{align*}
&\sum_{t=1}^K(\VR^{\muc}(s_1)-\VR^{\mu^t}(s_1))+Y\sum_{t=1}^K(b-\OVTG(\mu^t))\notag\\
&\qquad\leq\BO\bigg(\bigg(H^2+\frac{H^2}{\sla^2}\bigg)\sqrt{K\dbeer\log\left(\ncovr|\Pi|/\delta\right)}\bigg).
\end{align*}
Combining the above inequality with (\ref{eq:estimation g}), we have
\begin{align*}
\sum_{t=1}^K(\VR^{\muc}(s_1)-\VR^{\mu^t}(s_1))+Y\sum_{t=1}^K(b-\VG^{\mu^t}(s_1))\leq\BO\bigg(\bigg(\frac{H^2}{\sla^2}+\frac{H^3}{\sla}\bigg)\sqrt{K\ebe}\bigg),
\end{align*}
where
\begin{align*} \ebe=\max\bigg\{&\DBE\Big(\FCR,\sqrt{{1}/{K}},\Pi,r\Big)\log(\N_{\FCR\cup\GCR}(H/K)KH|\Pi|/\delta),\notag\\
&\DBE\Big(\FCG,\sqrt{{1}/{K}},\Pi,g\Big)\log(\N_{\FCG\cup\GCG}(H/K)KH|\Pi|/\delta)\bigg\}.
\end{align*}


Choose $Y$ as
\begin{align*}
Y=
\begin{cases}
0 & \text{if } \sum_{t=1}^K(b-\VG^{\mu^t}(s_1))<0,\\
\rchi & \text{otherwise}.
\end{cases}	
\end{align*}	
then we can bound the summation of regret and constraint violation as follows:
\begin{align}
&\bigg(\VR^{\muc}(s_1)-\frac{1}{K}\sum_{t=1}^K\VR^{\mu^t}(s_1)\bigg)+\rchi\bigg[b-\frac{1}{K}\sum_{t=1}^K\VG^{\mu^t}(s_1)\bigg]_{+}\notag\\
&\qquad\leq\BO\bigg(\bigg(\frac{H^2}{\sla^2}+\frac{H^3}{\sla}\bigg)\sqrt{{\ebe}/{K}}\bigg).\label{eq:cmdp4}
\end{align}

Further, when Assumption~\ref{ass:strong duality} and Assumption~\ref{ass:slater} hold, we have the following lemma showing that an upper bound on $(\VR^{\muc}(s_1)-\frac{1}{K}\sum_{t=1}^K\VR^{\mu^t}(s_1))+\rchi[b-\frac{1}{K}\sum_{t=1}^K\VG^{\mu^t}(s_1)]_{+}$ implies an upper bound on $[b-\frac{1}{K}\sum_{t=1}^K\VG^{\mu^t}(s_1)]_{+}$:
\begin{lemma}
\label{lem:dual 3}	
Suppose Assumption~\ref{ass:strong duality} and Assumption~\ref{ass:slater} hold and $2Y^*\leq C^*$. If $\{\mu^t\}_{t=1}^K\subseteq\Pi$ satisfies
\begin{align*}
\bigg(\VR^{\muc}(s_1)-\frac{1}{K}\sum_{t=1}^K\VR^{\mu^t}(s_1)\bigg)+C^*\bigg[b-\frac{1}{K}\sum_{t=1}^K\VR^{\mu^t}(s_1)\bigg]_{+}\leq\delta,
\end{align*}
Then
\begin{align*}
\bigg[b-\frac{1}{K}\sum_{t=1}^K\VR^{\mu^t}(s_1)\bigg]_{+}\leq\frac{2\delta}{C^*}.
\end{align*}
\end{lemma}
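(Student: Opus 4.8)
The plan is to exploit the saddle-point structure guaranteed by strong duality together with the bound $2Y^*\leq C^*$ on the optimal dual variable. Throughout, abbreviate $\bar{r}:=\frac{1}{K}\sum_{t=1}^K\VR^{\mu^t}(s_1)$ and $\bar{g}:=\frac{1}{K}\sum_{t=1}^K\VG^{\mu^t}(s_1)$, reading the bracketed term in the hypothesis as involving the utility value $\VG$ (consistent with how the lemma is invoked in Step~4 of the proof of Theorem~\ref{thm:cmdp}), so that the hypothesis becomes $(\VR^{\muc}(s_1)-\bar{r})+C^*[b-\bar{g}]_{+}\leq\delta$.

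First I would establish the key inequality $\VR^{\muc}(s_1)-\bar{r}\geq -Y^*(b-\bar{g})$. By Assumption~\ref{ass:slater} and Lemma~\ref{lem:duality} the optimal dual variable $Y^*$ is well-defined and finite, and by strong duality (Assumption~\ref{ass:strong duality}) the optimal primal value coincides with the dual optimum, $\VR^{\muc}(s_1)=D(Y^*)=\max_{\mu\in\Pi}\LO(\mu,Y^*)$. Hence for every $t\in[K]$ we have $\LO(\mu^t,Y^*)=\VR^{\mu^t}(s_1)+Y^*(\VG^{\mu^t}(s_1)-b)\leq\VR^{\muc}(s_1)$. Averaging over $t$ yields $\bar{r}+Y^*(\bar{g}-b)\leq\VR^{\muc}(s_1)$, which rearranges precisely to the claimed inequality.

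Next I would split into two cases. If $b-\bar{g}\leq0$ then $[b-\bar{g}]_{+}=0$ and the conclusion is immediate. Otherwise $b-\bar{g}>0$, so $[b-\bar{g}]_{+}=b-\bar{g}$ and the hypothesis reads $(\VR^{\muc}(s_1)-\bar{r})+C^*(b-\bar{g})\leq\delta$. Adding the key inequality from the previous step, in the form $-(\VR^{\muc}(s_1)-\bar{r})\leq Y^*(b-\bar{g})$, gives $(C^*-Y^*)(b-\bar{g})\leq\delta$. Since $2Y^*\leq C^*$ we have $C^*-Y^*\geq C^*/2>0$, and because $b-\bar{g}>0$ this forces $\frac{C^*}{2}(b-\bar{g})\leq\delta$, i.e.\ $[b-\bar{g}]_{+}\leq 2\delta/C^*$, as desired.

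The only genuinely delicate step is the first: correctly invoking strong duality to identify $\VR^{\muc}(s_1)$ with $\max_{\mu\in\Pi}\LO(\mu,Y^*)$, which is what legitimizes the per-policy saddle inequality $\LO(\mu^t,Y^*)\leq\VR^{\muc}(s_1)$; everything afterward is elementary algebra driven by the sign condition $C^*-Y^*\geq C^*/2$. In the eventual application one takes $C^*=\rchi=2H/\sla$ and uses Lemma~\ref{lem:duality} to guarantee $Y^*\leq H/\sla$, so that indeed $2Y^*\leq C^*$ and the lemma converts the combined regret-plus-violation bound~(\ref{eq:cmdp4}) into the stated control on the violation alone.
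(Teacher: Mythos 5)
Your proof is correct, and it reaches the conclusion by a genuinely shorter route than the paper's. You also correctly diagnose the typo in the statement: the bracketed quantity must be read with $\VG$ rather than $\VR$, which is how the lemma is invoked in Step 4 of the proof of Theorem~\ref{thm:cmdp}.

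The paper's proof takes a detour that you avoid entirely. It passes to the convex hull $\cpi$ of the state-action visitation distributions induced by $\Pi$, identifies the average $\frac{1}{K}\sum_{t}d^{\mu^t}$ with a single mixture policy $\mu'\in\cpc$, argues that strong duality and the optimality of $\muc$ survive this convexification (because $\LO(d,Y)$ is linear in $d$, so the inner maximum is attained at vertices of $\cpi$), and then runs a perturbation-function argument with $v(\tau)=\max_{\mu\in\cpc}\{\VR^{\mu}(s_1):\VG^{\mu}(s_1)\geq b+\tau\}$ evaluated at $\tau=\ttau=-[b-\VG^{\mu'}(s_1)]_+$. You instead apply the saddle inequality $\LO(\mu^t,Y^*)\leq\max_{\mu\in\Pi}\LO(\mu,Y^*)=D(Y^*)=\VR^{\muc}(s_1)$ to each $\mu^t\in\Pi$ individually and average; since $\LO$ is affine in the pair $(\VR^{\mu}(s_1),\VG^{\mu}(s_1))$, this yields exactly the inequality $\VR^{\muc}(s_1)-\bar r\geq -Y^*(b-\bar g)$ that the paper extracts for $\mu'$, without the visitation-distribution machinery or the appeal to a one-to-one correspondence between visitation distributions and policies (which the paper phrases over $(\Delta_{|\SC|\times|\A|})^H$, i.e., implicitly for finite state-action spaces --- a point your version sidesteps, which is welcome in a paper about general function approximation). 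The remaining algebra --- the case split on the sign of $b-\bar g$ and the use of $C^*-Y^*\geq C^*/2$ --- is the same in both arguments, down to the final bound $\delta/(C^*-Y^*)\leq 2\delta/C^*$. Both proofs lean on strong duality only through the identity $D(Y^*)=\VR^{\muc}(s_1)$, which you invoke explicitly and which the paper also uses in the proof of Lemma~\ref{lem:duality}; your treatment of the edge case $b-\bar g\leq 0$ is also sound, since there the key inequality forces $\VR^{\muc}(s_1)-\bar r\geq 0$ and hence $\delta\geq 0$.
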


See Appendix~\ref{proof lemma dual 3} for the proof. Combining Lemma~\ref{lem:dual 3}, Lemma~\ref{lem:duality} and (\ref{eq:cmdp4}), we have
\begin{align*}
\bigg[\sum_{t=1}^K(b-\VG^{\mu^t}(s_1))\bigg]_{+}\leq\BO\bigg(\bigg(H^2+\frac{H}{\sla}\bigg)\sqrt{K\ebe}\bigg).
\end{align*}
This concludes our proof.

%

\subsection{Proof of Lemma~\ref{lem:duality}}
\label{proof lemma duality}
Notice that $D(Y^*)=\VR^{\muc}(s_1)$, which suggests:
\begin{align*}
\VR^{\muc}(s_1)&=D(Y^*)\geq\LO(\tmu,Y^*)\\
&=\VR^{\tmu}(s_1)+Y^*(\VG^{\tmu}(s_1)-b)\geq\VR^{\tmu}(s_1)+Y^*\sla.
\end{align*}	
This implies that
\begin{align*}
Y^*\leq\frac{\VR^{\muc}(s_1)-\VR^{\tmu}(s_1)}{\sla}\leq\frac{H}{\sla},
\end{align*}
which concludes our proof.

\subsection{Proof of Lemma~\ref{lem:dual 1}}
\label{proof lemma dual 1}
Notice that we have:
\begin{align*}
0&\leq Y_{K+1}^2=\sum_{t=1}^K\left(Y_{t+1}^2-Y_t^2\right)\\
&=\sum_{t=1}^K\left(\left(\proj(Y_t+\alpha(b-\OVTG(\mu^t)))\right)^2-Y_t^2\right)\\
&\leq\sum_{t=1}^K\left((Y_t+\alpha(b-\OVTG(\mu^t)))^2-Y_t^2\right)\\
&=\sum_{t=1}^K 2\alpha Y_t(b-\OVTG(\mu^t))+\sum_{t=1}^K\alpha^2(b-\OVTG(\mu^t))^2\\
&\leq\sum_{t=1}^K 2\alpha Y_t(\OVTG(\muc)-\OVTG(\mu^t))+\alpha^2KH^2,
\end{align*}
where the last step is due to optimism and $\VG^{\muc}(s_1)\geq b$. This implies that
\begin{align*}
-\sum_{t=1}^K Y_t(\OVTG(\muc)-\OVTG(\mu^t))\leq\frac{\alpha H^2K}{2}=\frac{H^2\sqrt{K}}{2}.
\end{align*}
This concludes our proof.

\subsection{Proof of Lemma~\ref{lem:dual 2}}
\label{proof lemma dual 2}
Notice that we have for any $t\in[K]$ and $Y\in[0,\rchi]$:
\begin{align*}
|Y_{t+1}-Y|^2&\leq|Y_t+\alpha(b-\OVTG(\mu^t))-Y|^2\\
&=(Y_t-Y)^2+2\alpha(b-\OVTG(\mu^t))(Y_t-Y)+\alpha^2H^2.
\end{align*}
Repeating the above expansion procedures, we have
\begin{align*}
0\leq|Y_{K+1}-Y|^2\leq(Y_1-Y)^2+2\alpha\sum_{t=1}^K(b-\OVTG(\mu^t))(Y_t-Y)+\alpha^2H^2K,
\end{align*}
which is equivalent to
\begin{align*}
\sum_{t=1}^K(b-\OVTG(\mu^t))(Y-Y_t)\leq\frac{1}{2\alpha}(Y_1-Y)^2+\frac{\alpha}{2}H^2K\leq\frac{(H^2+\rchi^2)\sqrt{K}}{2}.
\end{align*}
This concludes our proof.

\subsection{Proof of Lemma~\ref{lem:dual 3}}
\label{proof lemma dual 3}
First we extend $\Pi$ in a reasonable way to make the policy class more structured while not changing its optimal policy. Define the set of state-action visitation distributions induced by the policy $\Pi$ as follows:
\begin{align*}
\dpi_{\Pi}=\{(d^{\mu}_h(s,a))_{h\in[H],s\in\SC,a\in\A}\in(\Delta_{|\SC|\times|\A|})^H:\mu\in\Pi\}.
\end{align*}
Let $\cpi$ denote the convex hull of $\dpi_{\Pi}$, i.e., for any $d\in\cpi$, there exists $\{w_{\mu}\}_{\mu\in\Pi}\geq0$ such that for any $h\in[H],s\in\SC.a\in\A$, we have 
\begin{align*}
d_h(s,a)=\sum_{\mu\in\Pi}w_{\mu}d^{\mu}_h(s,a),\sum_{\mu\in\Pi}w_{\mu}=1.
\end{align*} 
As a special case, there exists $d'_h(s,a)\in\cpi$ such that for any $h\in[H],s\in\SC.a\in\A$,
\begin{align*}
d'_h(s,a)=\frac{1}{K}\sum_{t=1}^Kd^{\mu^t}_h(s,a).
\end{align*}

Notice that there exists a one-to-one mapping from state-action visitation distributions to policies \citep{puterman1994markov}. Let $\cpc$ denote the policy class that induces $\cpi$, and then there exists $\mu'$ such that $d'=d^{\mu'}$, which implies 
\begin{align*}
\VR^{\mu'}(s_1)=\frac{1}{K}\sum_{t=1}^K\VR^{\mu^t}(s_1),\VG^{\mu'}(s_1)=\frac{1}{K}\sum_{t=1}^K\VG^{\mu^t}(s_1).
\end{align*}
Therefore, the condition of this lemma says
\begin{align}
\label{eq:dual-4}
(\VR^{\muc}(s_1)-\VR^{\mu'}(s_1))+C^*[b-\VG^{\mu'}(s_1)]_{+}\leq\delta.
\end{align}

Next we show that $\muc$ is still the optimal policy in $\cpc$ when Assumption~\ref{ass:strong duality}, i.e., strong duality, holds. First notice that
\begin{align}
	\label{eq:dual-1}
\max_{\mu\in\cpc}\min_{Y\geq0}\LO(\mu,Y)\leq\min_{Y\geq0}\max_{\mu\in\cpc}\LO(\mu,Y)=\min_{Y\geq0}\max_{d\in\cpi}\LO(d,Y).
\end{align}
However, given $Y\geq0$,$\LO(d,Y)$ is linear in $d$, which means the maximum is always attained at the vertices of $\cpi$, i.e., $\dpi_{\Pi}$. Therefore we know
\begin{align*}
\max_{\mu\in\cpc}\LO(\mu,Y)=D(Y),
\end{align*}
which suggests
\begin{align}
\label{eq:dual-2}
\min_{Y\geq0}\max_{d\in\cpi}\LO(d,Y)=\min_{Y\geq0}\max_{d\in\dpi_{\Pi}}\LO(d,Y)=\min_{Y\geq0}\max_{\mu\in\Pi}\LO(\mu,Y).
\end{align}
By strong duality, we have
\begin{align}
\label{eq:dual-3}
\min_{Y\geq0}\max_{\mu\in\Pi}\LO(\mu,Y)=\max_{\mu\in\Pi}\min_{Y\geq0}\LO(\mu,Y)\leq\max_{\mu\in\cpc}\min_{Y\geq0}\LO(\mu,Y).
\end{align}
Combining (\ref{eq:dual-1}),(\ref{eq:dual-2}) and (\ref{eq:dual-3}), we know all the inequalities have to take equality, which implies
\begin{align*}
\muc=\arg\max_{\mu\in\cpc}\min_{Y\geq0}\LO(\mu,Y), Y^*=\arg\min_{Y\geq0}\max_{\mu\in\cpc}\LO(\mu,Y).
\end{align*}
Besides, strong duality also holds for $\max_{\mu\in\cpc}\min_{Y\geq0}\LO(\mu,Y)$.

Now let $v(\tau):=\max_{\mu\in\cpc}\{\VR^{\mu}(s_1)|\VG^{\mu}(s_1)\geq b+\tau\}$, then we have for any $\mu\in\cpc$, 
\begin{align*}
\LO(\mu,Y^*)&\leq\max_{\mu\in\cpc}\LO(\mu,Y^*)=D(Y^*)=\VR^{\muc}(s_1),
\end{align*}
where the third step comes from strong duality. Therefore, for any $\mu\in\cpc$ and $\tau\in\R$ which satisfies $\VG^{\mu}(s_1)\geq b+\tau$, we have
\begin{align*}
\VR^{\muc}(s_1)-\tau Y^*&\geq\LO(\mu,Y^*)-\tau Y^*\\
&=\VR^{\mu}(s_1)+Y^*(\VG^{\mu}(s_1)-b-\tau)\geq\VR^{\mu}(s_1).
\end{align*}
This implies that for any $\tau\in\R$, $\VR^{\muc}(s_1)-\tau Y^*\geq v(\tau)$. Pick $\tau=\ttau:=-[b-\VG^{\mu'}(s_1)]_+$, then we have
\begin{align*}
\VR^{\mu'}(s_1)-\VR^{\muc}(s_1)\leq -\ttau Y^*.
\end{align*}
On the other hand, (\ref{eq:dual-4}) is equivalent to
\begin{align*}
\VR^{\muc}(s_1)-\VR^{\mu'}(s_1)-C^*\ttau\leq\delta.
\end{align*}
Thus we have $(C^*-Y^*)|\ttau|\leq\delta$, which means that
\begin{align*}
[b-\VG^{\mu'}(s_1)]_+\leq\frac{\delta}{C^*-Y^*}\leq\frac{2\delta}{C^*}.
\end{align*}
Recall that $\VG^{\mu'}(s_1)=\frac{1}{K}\sum_{t=1}^K\VG^{\mu^t}(s_1)$, which concludes our proof.
\section{Proof of Theorem~\ref{thm:vmdp}}
\label{proof:thm vmdp}
In this section we present the proof for Theorem~\ref{thm:vmdp}. Our proof mainly consists of four steps:
\begin{itemize}
	\item Prove $\langle\PET(\mu),\bth_t\rangle$ is a pessimistic estimations of $ \langle \bv^{\mu}_1(s_1),\bth_t\rangle$ for all $t\in[K]$ and $\mu\in\Pi$.
	\item Bound the total estimation error $\Vert\frac{1}{K}\sum_{t=1}^K\PET(\mu^t)-\bv_1^{\mu^t}(s_1)\Vert$.
	\item Bound $\dist(\bv^{\widehat{\mu}}_1(s_1),\CS)$.
\end{itemize}
\paragraph{Step 1: Prove pessimism.}
First we can show that the true action-value function $\bq^{\mu}$ belongs to the constructed set $\B_{\D_{1:t-1}}(\mu)$ with high probability:
\begin{lemma}
	\label{lem:optimism vmdp}
	With probability at least $1-{\delta}/{4}$, we have for all $t\in[K]$ and $\mu\in\Pi$, $\bq^{\mu}\in\B_{\D_{1:t-1}}(\mu)$.
\end{lemma}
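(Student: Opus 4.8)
The plan is to prove the inclusion coordinate by coordinate and then intersect over coordinates, mirroring the single-agent optimism arguments of Lemma~\ref{lem:optimism} and Lemma~\ref{lem:optimism single}. The key structural observation is that the confidence set $\B_{\D_{1:t-1}}(\mu)$ defined in (\ref{eq:vmdp eva}) is assembled from $d$ separate scalar least-squares problems, one for each reward coordinate $r^j_h$, and crucially \emph{none} of them involves the dual variable $\bth$; the coupling with $\bth$ appears only later, in the pessimistic selection step of \vpealg. Hence it suffices to show that, for every coordinate $j\in[d]$, the true scalar action-value function $Q^{\mu,j}$ nearly minimizes the empirical Bellman residual $\LC_{\D^j}(\cdot,\cdot,\mu)$, and then take a union bound across the $d$ coordinates. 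This is exactly the mechanism that lets \vmdpalg avoid any union bound over the infinite opponent class $\BL(1)$.

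Concretely, I would fix a coordinate $j\in[d]$ and a tuple $(\mu,t,h,g)\in\Pi\times[K]\times[H]\times\GC$, and define the per-step martingale difference
\begin{align*}
W^j_{t,k}(h,g,\mu):={}&(g^j_h(s^k_h,a^k_h)-r^{j,k}_h-Q^{\mu,j}_{h+1}(s^k_{h+1},\mu))^2\\
&-((\T^{\mu,j}_h Q^{\mu,j}_{h+1})(s^k_h,a^k_h)-r^{j,k}_h-Q^{\mu,j}_{h+1}(s^k_{h+1},\mu))^2,
\end{align*}
with respect to the filtration $\Fil_{k,h}$ generated by the full trajectories of episodes $1,\dots,k-1$ together with the first $h$ state-action pairs of episode $k$. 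Using realizability $Q^{\mu,j}_h=\T^{\mu,j}_h Q^{\mu,j}_{h+1}$ (Assumption~\ref{ass:realize v}) and the defining identity (\ref{eq:vmdp bellman}), exactly as in Lemma~\ref{lem:optimism} the conditional mean of $W^j_{t,k}$ equals $[(g^j_h-Q^{\mu,j}_h)(s^k_h,a^k_h)]^2$, while the conditional variance is bounded by $4\VB^2=4H^2$ times this same quantity since every term lies in $[0,H]$. Freedman's inequality then controls the deviation of $\sum_{k<t}W^j_{t,k}$ from $\sum_{k<t}[(g^j_h-Q^{\mu,j}_h)(s^k_h,a^k_h)]^2$.

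The remainder is a union bound and a discretization. I would let $\VC$ be a $\rho$-cover of $\GC$ in $\Vert\cdot\Vert_\infty$ and union-bound the Freedman event over $\Pi\times[K]\times[H]\times[d]\times\VC$; the extra $[d]$ factor relative to the single-agent argument is precisely what produces the $\log d$ term in the prescribed $\beta=cH^2\log(\ncovv|\Pi|d/\delta)$, with the cover of $\GC$ contributing $\log\ncovv$. Setting $\rho=H/K$ to transfer the bound from $\VC$ back to all of $\GC$, generalized completeness $\T^{\mu,j}_h f^j_{h+1}\in\GC^j_h$ shows that choosing the comparator $g=\T^{\mu,j}_h Q^{\mu,j}_{h+1}$ makes the infimum term match up to $\BO(\beta)$ slack, so $Q^{\mu,j}$ satisfies the defining inequality of (\ref{eq:vmdp eva}) simultaneously for all $h\in[H]$ and $j\in[d]$; intersecting over coordinates gives $\bq^\mu\in\B_{\D_{1:t-1}}(\mu)$ for all $t$ and $\mu$ on the good event. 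I expect the only genuine obstacle to be bookkeeping: because (\ref{eq:vmdp eva}) demands the residual inequality hold for \emph{all} $(h,j)$ at once, the high-probability event must be uniform over these $Hd$ constraints. Since this uniformity is already subsumed into the union bound, it costs only the additive $\log(Hd)$ absorbed into $\log\ncovv$ and $\log d$, and everything else is the routine Freedman-plus-covering computation identical in form to Lemma~\ref{lem:optimism}.
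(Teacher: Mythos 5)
Your proposal is correct and follows essentially the same route as the paper, whose entire proof of this lemma is the one-line remark that one repeats the argument of Lemma~\ref{lem:optimism} for each dimension $j\in[d]$ (picking up the extra $\log d$ in $\beta$ from the union bound over coordinates); your Freedman-plus-covering setup, the observation that the confidence set (\ref{eq:vmdp eva}) is $\bth$-free, and the per-coordinate intersection all match. The only slight wobble is the closing sentence about ``choosing the comparator $g=\T^{\mu,j}_h Q^{\mu,j}_{h+1}$'': what is actually needed (and what your Freedman bound already delivers, via non-negativity of $\sum_{k<t}[(g^j_h-Q^{\mu,j}_h)(s^k_h,a^k_h)]^2$) is that $-\sum_{k<t}W^j_{t,k}(h,g,\mu)\leq\BO(\beta)$ uniformly over all $g$, not a bound obtained by plugging in one particular $g$.
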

\begin{proof}
Repeat the arguments in the proof of Lemma~\ref{lem:optimism} for each dimension $j\in[d]$ and the lemma follows directly.
\end{proof}
Then since $\PET(\mu)=f_1(s_1,\mu)$ where $f=\arg\min_{f'\in\B_{\D_{1:t-1}}(\mu)}\langle f'_1(s_1,\mu),\bth_t\rangle$, we know for all $t\in[K]$ and $\mu\in\Pi$,
\begin{align*}
\langle\PET(\mu),\bth_t\rangle\leq \langle\bq^{\mu}_1(s_1,\mu),\bth_t\rangle=\langle\bv^{\mu}_1(s_1),\bth_t\rangle.
\end{align*}

\paragraph{Step 2: Bound estimation error.} 
Next we need to show the estimation error $\Vert\frac{1}{K}\sum_{t=1}^K\break\PET(\mu^t)-\bv_1^{\mu^t}(s_1)\Vert$ is small. Let $f^{t,\mu}=\arg\min_{f\in\B_{\D_{1:t-1}}(\mu)}\langle f_1(s_1,\mu),\bth_t\rangle$. Let $f^{t,\mu,j}$ denotes the $j$-the dimension of $f^{t,\mu}$. Then we have
\begin{lemma}
	\label{lem:bounded error vmdp}
	With probability at least $1-{\delta}/{4}$, we have for all $t\in[K]$, $h\in[H]$, $j\in[d]$ and $\mu\in\Pi$,
	\begin{align*}
	(a)\quad&\sum_{k=1}^{t-1}\E_{\mu^k}\bigg[\Big(f^{t,\mu,j}_h(s_h,a_h)-(\T^{\mu,j}_h f^{t,\mu,j}_{h+1})(s_h,a_h)\Big)^2\bigg]\leq\BO(\beta),\\
	(b)\quad&\sum_{k=1}^{t-1}\Big(f^{t,\mu,j}_h(s_h^k,a_h^k)-(\T^{\mu,j}_h f^{t,\mu,j}_{h+1})(s_h^k,a_h^k)\Big)^2\leq\BO(\beta),\\
	\end{align*}	
\end{lemma}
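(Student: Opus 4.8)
The plan is to repeat the scalar argument of Lemma~\ref{lem:bounded error} separately for each reward coordinate $j\in[d]$, the only genuinely new cost being an extra union bound over $[d]$. The crucial structural observation is that the confidence set $\B_{\D}(\mu)$ in (\ref{eq:vmdp eva}) is an intersection over both $h$ and $j$ of scalar least-squares slack constraints; hence membership $f^{t,\mu}\in\B_{\D_{1:t-1}}(\mu)$ forces, for every fixed coordinate $j$ and step $h$,
\begin{align*}
\LC_{\D^j}(f^{t,\mu,j}_h,f^{t,\mu,j}_{h+1},\mu)\leq\inf_{g\in\GC}\LC_{\D^j}(g^j_h,f^{t,\mu,j}_{h+1},\mu)+\beta.
\end{align*}
In particular, the dependence of $f^{t,\mu}$ on $\bth_t$ through the selection rule is irrelevant: the inequality we want holds for \emph{any} element of the confidence set, and $f^{t,\mu}$ is merely one such element.

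First I would fix a coordinate $j$ and a tuple $(\mu,t,h,f)$ and introduce, for $k\leq t-1$, the scalar variable
\begin{align*}
X^j_{t,k}(h,f,\mu):=\big(f^j_h(s^k_h,a^k_h)-r^{j,k}_h-f^j_{h+1}(s^k_{h+1},\mu)\big)^2-\big((\T^{\mu,j}_hf^j_{h+1})(s^k_h,a^k_h)-r^{j,k}_h-f^j_{h+1}(s^k_{h+1},\mu)\big)^2,
\end{align*}
where $r^{j,k}_h$ is the $j$-th coordinate of $\br^k_h$ and $\T^{\mu,j}_h$ is the coordinate Bellman operator of (\ref{eq:vmdp bellman}). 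Exactly as in the scalar case, its conditional mean equals $[(f^j_h-\T^{\mu,j}_hf^j_{h+1})(s^k_h,a^k_h)]^2$ and its conditional variance is at most $4H^2$ times that mean (here $\VB=H$ since each coordinate value lies in $[0,H]$). Freedman's inequality then yields the standard deviation estimate; passing to a $\rho$-cover $\ZC$ of each $\FC^j$ and taking a union bound over $\Pi\times[K]\times[H]\times[d]\times\ZC$ produces the additional $\log d$ term that is absorbed into the prescribed $\beta=cH^2\log(\ncovv|\Pi|d/\delta)$, holding with probability at least $1-\delta/4$.

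Conditioned on this event, I would invoke Assumption~\ref{ass:realize v} to take $g^j_h=\T^{\mu,j}_hf^{t,\mu,j}_{h+1}\in\GC^j_h$ inside the infimum, so the slack constraint above gives $\sum_{k=1}^{t-1}X^j_{t,k}(h,f^{t,\mu},\mu)\leq\beta$. Replacing $f^{t,\mu}$ by its nearest cover element (which is how the data-dependent choice is handled, at a cost $\BO(Ht\rho)$) and combining with the Freedman bound yields $\sum_{k=1}^{t-1}[(f^{t,\mu,j}_h-\T^{\mu,j}_hf^{t,\mu,j}_{h+1})(s^k_h,a^k_h)]^2\leq\BO(H^2\iota+Ht\rho+\beta)$; taking $\rho=H/K$ collapses this to $\BO(\beta)$ and proves part (b). For part (a) I would enlarge the filtration at episode $k$ to include the played policy $\mu^k$ and repeat the identical Freedman computation on the expected residuals, which converts the empirical sum into $\sum_k\E_{\mu^k}[\cdot]$.

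The main obstacle is purely organizational rather than conceptual: I must verify that the per-coordinate decoupling of (\ref{eq:vmdp eva}) is exact — that is, that the intersection defining $\B_{\D}(\mu)$ transmits the scalar slack inequality to each $j$ individually — and that the single added $\log d$ factor in the union bound is tracked so that one fixed $\beta$ simultaneously controls all $d$ coordinates at confidence $1-\delta/4$. Every remaining estimate is a verbatim transcription of the scalar analysis in Lemma~\ref{lem:bounded error}, so no new concentration or eluder-type argument is required.
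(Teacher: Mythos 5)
Your proposal is correct and follows exactly the route the paper takes: the paper's proof of this lemma is literally ``repeat the arguments in the proof of Lemma~\ref{lem:bounded error} for each dimension $j\in[d]$,'' which is what you do, including the key observations that the intersection structure of (\ref{eq:vmdp eva}) gives the scalar slack constraint coordinatewise (so the $\bth_t$-dependent selection rule is irrelevant) and that the union bound over $[d]$ accounts for the $\log d$ in $\beta$.
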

\begin{proof}
Repeat the arguments in the proof of Lemma~\ref{lem:bounded error} for each dimension $j\in[d]$ and the lemma follows directly.
\end{proof}

Besides, using performance difference lemma we have:
\begin{lemma}
	\label{lem:performance vmdp}
	For any $t\in[K]$ and $j\in[d]$, we have
	\begin{align*}
	\PETJ(\mu^t)-V^{\mu^t,j}_{1}(s_1)=\sum_{h=1}^{H}\E_{\mu^t}[(f^{t,\mu^t,j}_h-\T^{\mu^t,j}f^{t,\mu^t,j}_{h+1})(s_h,a_h)],
	\end{align*}
    where $\PETJ(\mu^t)$ is the $j$-th dimension of $\PET(\mu^t)$.
\end{lemma}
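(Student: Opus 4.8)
The plan is to mirror the telescoping argument used in the proof of Lemma~\ref{lem:performance} (Appendix~\ref{proof lemma performance}), carried out coordinate-by-coordinate in the single-agent VMDP setting. Fix $t\in[K]$ and $j\in[d]$. First I would record the two facts that drive the decomposition: by the definition of \vpealg we have $\PETJ(\mu^t)=f^{t,\mu^t,j}_1(s_1,\mu^t)$, and the $j$-th coordinate of the value function satisfies $V^{\mu^t,j}_1(s_1)=\E_{a_1\sim\mu^t(\cdot|s_1)}[Q^{\mu^t,j}_1(s_1,a_1)]$, where $Q^{\mu^t,j}$ is the $j$-th coordinate of $\bq^{\mu^t}$ and obeys the scalar Bellman equation $Q^{\mu^t,j}_h=\T^{\mu^t,j}_hQ^{\mu^t,j}_{h+1}$ with $\T^{\mu^t,j}_h$ given in (\ref{eq:vmdp bellman}).

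Next I would expand the one-step difference. Writing everything in terms of the action-value gap $f^{t,\mu^t,j}_h-Q^{\mu^t,j}_h$ under $\E_{\mu^t}$, I add and subtract $\T^{\mu^t,j}_hf^{t,\mu^t,j}_{h+1}$ to split the gap into (i) the single-step Bellman residual $(f^{t,\mu^t,j}_h-\T^{\mu^t,j}_hf^{t,\mu^t,j}_{h+1})(s_h,a_h)$ and (ii) a remainder that, after applying the Bellman equation for $Q^{\mu^t,j}$ and taking the expectation over $s_{h+1}\sim P_h(\cdot|s_h,a_h)$, reduces to the next-layer gap $\E_{\mu^t}[f^{t,\mu^t,j}_{h+1}(s_{h+1},\mu^t)-V^{\mu^t,j}_{h+1}(s_{h+1})]$. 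The key cancellation is that the transition expectation appearing inside $\T^{\mu^t,j}_h$ acting on $f^{t,\mu^t,j}_{h+1}$ exactly matches the one produced by unrolling $V^{\mu^t,j}$, so only the residual term survives at each layer.

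Finally I would iterate this identity from $h=1$ up to $h=H$, using the terminal condition $f^{t,\mu^t,j}_{H+1}=0=V^{\mu^t,j}_{H+1}$ so the recursion terminates cleanly, which yields the claimed sum $\sum_{h=1}^H\E_{\mu^t}[(f^{t,\mu^t,j}_h-\T^{\mu^t,j}_hf^{t,\mu^t,j}_{h+1})(s_h,a_h)]$. Since this is a deterministic identity (no concentration is involved), there is no genuine obstacle; the only care needed is bookkeeping of the coordinate index $j$ and the single-agent expectation $\E_{\mu^t}$, where no opponent action $b_h$ appears (unlike in Lemma~\ref{lem:performance}). I would therefore invoke the proof of Lemma~\ref{lem:performance} essentially verbatim for each fixed dimension $j$, replacing the joint policy $(\mu^t,\nu^t)$ by $\mu^t$ and the Bellman operator $\T^{\mu^t,\nu^t}_h$ by $\T^{\mu^t,j}_h$.
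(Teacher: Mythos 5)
Your proposal is correct and is exactly the argument the paper intends: the paper states this lemma without a separate proof precisely because it is the coordinate-wise, single-agent specialization of Lemma~\ref{lem:performance}, whose telescoping expansion in Appendix~\ref{proof lemma performance} you reproduce faithfully (including the identification $\PETJ(\mu^t)=f^{t,\mu^t,j}_1(s_1,\mu^t)$ and the terminal condition $f^{t,\mu^t,j}_{H+1}=0$). No gaps.
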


Therefore, from Lemma~\ref{lem:performance vmdp} we can obtain for any $t\in[K]$ and $j\in[d]$ 
\begin{align}
\label{eq:regret-bellman vmdp}
\sum_{t=1}^K\PETJ(\mu^t)-V^{\mu^t,j}_{1}(s_1)=\sum_{h=1}^{H}\sum_{t=1}^K\E_{\mu^t}[(f^{t,\mu^t,j}_h-\T_h^{\mu^t,j}f^{t,\mu^t,j}_{h+1})(s_h,a_h)].
\end{align}

Similar to Section~\ref{sec:proof sketch}, from Lemma~\ref{lem:BEE}, conditioning on the event in Lemma~\ref{lem:bounded error vmdp} holds true, with probability at least $1-\delta/4$, we have for any $j\in[d]$ and $h\in[H]$,
\begin{align*}
&\bigg|\sum_{t=1}^K\E_{\mu^t}\big[(f^{t,\mu^t,j}_h-\T^{\mu^t,j}_hf^{t,\mu^t,j}_{h+1})(s_h,a_h)\big]\bigg|\leq\BO\Big(\sqrt{H^2K\dbeev\log\left(\ncovv|\Pi|d/\delta\right)}\Big).
\end{align*}

Substitute the above bounds into (\ref{eq:regret-bellman vmdp}) and we have for any $j\in[d]$:
\begin{align*}
\bigg|\sum_{t=1}^K\PETJ(\mu^t)-V^{\mu^t,j}_{1}(s_1)\bigg|\leq\BO\big(H^2\sqrt{K\dbeev\log(\ncovv|\Pi|d/\delta)}\big),
\end{align*}
which implies if the event in Lemma~\ref{lem:bounded error vmdp} is true,
\begin{align*}
\bigg\Vert\frac{1}{K}\sum_{t=1}^K\PET(\mu^t)-\bv^{\mu^t}_1(s_1)\bigg\Vert\leq\BO\big(H^2\sqrt{d}\cdot\sqrt{{\dbeev\log(\ncovv|\Pi|d/\delta)}/{K}}\big).
\end{align*}

\paragraph{Step 3: Bound the distance.} 
Now we can bound the distance $\dist(\bv^{\widehat{\mu}}(s_1),\CS)$. First since $\widehat{\mu}$ is sampled uniformly from $\{\mu^t\}_{t=1}^K$, we know
\begin{align*}
\dist(\bv^{\widehat{\mu}}_1(s_1),\CS)=\dist\bigg(\frac{1}{K}\sum_{t=1}^K\bv^{\mu^t}_1(s_1),\CS\bigg).
\end{align*}
By Fenchel's duality, we know
\begin{align*}
&\dist\bigg(\frac{1}{K}\sum_{t=1}^K\bv^{\mu^t}_1(s_1),\CS\bigg)=\max_{\bth\in\BL(1)}\bigg[\bigg\langle \bth,\frac{1}{K}\sum_{t=1}^K\bv^{\mu^t}_1(s_1)\bigg\rangle-\max_{\bx\in\CS}\langle\bth,\bx\rangle\bigg]\\
&\qquad\leq\max_{\bth\in\BL(1)}\bigg[\bigg\langle \bth,\frac{1}{K}\sum_{t=1}^K\PET(\mu^t)\bigg\rangle-\max_{\bx\in\CS}\langle\bth,\bx\rangle\bigg]+\max_{\bth\in\BL(1)}\bigg\langle \bth,\frac{1}{K}\sum_{t=1}^K\bv^{\mu^t}_1(s_1)-\PET(\mu^t)\bigg\rangle,
\end{align*}
where the second step is due to $\max[f_1+f_2]\leq\max f_1+\max f_2$.

Notice by Cauchy-Schwartz inequality and Step 2, we have
\begin{align*}
&\max_{\bth\in\BL(1)}\bigg\langle \bth,\frac{1}{K}\sum_{t=1}^K\bv^{\mu^t}_1(s_1)-\PET(\mu^t)\bigg\rangle\leq\bigg\Vert\frac{1}{K}\sum_{t=1}^K\PET(\mu^t)-\bv^{\mu^t}_1(s_1)\bigg\Vert\\
&\qquad\leq\BO\big(H^2\sqrt{d}\cdot\sqrt{{\dbeev\log(\ncovv|\Pi|d/\delta}/{K}}\big).
\end{align*}

Now we only need to bound $\max_{\bth\in\BL(1)}\left[\langle \bth,\frac{1}{K}\sum_{t=1}^K\PET(\mu^t)\rangle-\max_{\bx\in\CS}\langle\bth,\bx\rangle\right]$. Recall that we update $\bth_t$ using online gradient descent. Using the conclusions from the online learning literature \citep{hazan2016introduction}, we know
\begin{align*}
&\max_{\bth\in\BL(1)}\bigg[\bigg\langle \bth,\frac{1}{K}\sum_{t=1}^K\PET(\mu^t)\bigg\rangle-\max_{\bx\in\CS}\langle\bth,\bx\rangle\bigg]\\
&\qquad\leq\frac{1}{K}\sum_{t=1}^K\Big(\langle\bth_t,\PET(\mu^t)\rangle-\max_{x\in\CS}\langle\bth_t,x\rangle\Big)+\BO({H\sqrt{d}}/{\sqrt{K}}).
\end{align*}

Further, notice that $p^t$ is updated via Hedge with loss function being $\langle\bth_t,\PET(\mu)\rangle$, similarly to the analysis in Section~\ref{sec:proof sketch}, we have with probability at least $1-\delta$,
\begin{align*}
\frac{1}{K}\sum_{t=1}^K\langle\bth_t,\PET(\mu^t)\rangle\leq\frac{1}{K}\sum_{t=1}^K\langle\bth_t,\PET(\muv)\rangle+\BO(H\sqrt{d}\cdot\sqrt{\log(|\Pi|/\delta) / K}),
\end{align*}
where $\muv=\arg\min_{\mu\in\Pi}\dist(\bv^{\mu}_1(s_1),\CS)$. Let $P(\bv^{\muv}_1(s_1))$ denote the projection of $\bv^{\muv}_1(s_1)$ onto $\CS$.

Conditioning on the event of Lemma~\ref{lem:optimism vmdp} holds, we have
\begin{align*}
	\sum_{t=1}^K\langle\bth_t,\PET(\muv)\rangle\leq\sum_{t=1}^K\langle\bth_t,\bv^{\muv}_1(s_1)\rangle.
\end{align*} 
Therefore we have
\begin{align*}
&\frac{1}{K}\sum_{t=1}^K\left(\langle\bth_t,\PET(\mu^t)\rangle-\max_{x\in\CS}\langle\bth_t,x\rangle\right)\\
&\qquad \leq \frac{1}{K}\sum_{t=1}^K\left(\langle\bth_t,\bv^{\muv}_1(s_1)\rangle-\max_{x\in\CS}\langle\bth_t,x\rangle\right)+\BO \bigl (H\sqrt{d}\cdot\sqrt{\log(|\Pi|/\delta) / K} \bigr)\\
 &\qquad \leq \frac{1}{K}\sum_{t=1}^K\left(\langle\bth_t,\bv^{\muv}_1(s_1)\rangle-\langle\bth_t,\pmu\rangle\right)+\BO \bigl (H\sqrt{d}\cdot\sqrt{\log(|\Pi|/\delta) / K}\bigr)\\
&\qquad \leq \left\Vert\bv^{\muv}_1(s_1)-\pmu\right\Vert+\BO \bigl (H\sqrt{d}\cdot\sqrt{\log(|\Pi|/\delta) / K}\bigr)\\
 &\qquad  = \min_{\mu\in\Pi}\dist(\bv^{\mu}_1(s_1),\CS)+\BO \bigl (H\sqrt{d}\cdot\sqrt{\log(|\Pi|/\delta) / K} \bigr),
\end{align*}
where the second step is due to $\pmu\in\CS$, the third step is from Cauchy-Schwartz inequality, and the last step is from the definition of $\muv$.

In conclusion, we have with probability at least $1-\delta$,
\begin{align*}
\dist(\bv^{\widehat{\mu}}_1(s_1),\CS)\leq\min_{\mu\in\Pi}\dist(\bv^{\mu}_1(s_1),\CS)+\BO\big(H^2\sqrt{d}\cdot\sqrt{{\dbeev\log\left(\ncovv|\Pi|d/\delta\right)}/{K}}\big).
\end{align*}
This concludes our proof.
\end{document}